\def\epsdeltastuff{0}
\newcommand{\nope}[1]{}
\renewcommand{\epsilon}{\varepsilon}
\newcommand{\ball}[2]{\mathit{B}_{#2}\left( #1 \right)}
\newcommand{\RR}{\mathbb{R}}
\newcommand{\tr}{\mathrm{tr}}
\newcommand{\Sigmahat}{\widehat{\Sigma}}
\newcommand{\id}{\mathbb{I}}
\newcommand{\opt}{\mathrm{opt}}
\newcommand{\GUE}{\mathrm{GUE}}
\newcommand{\DPEE}{\mathrm{EigenvalueEstimator}}
\newcommand{\DPCP}{\mathrm{CoarsePreconditioner}}
\newcommand{\DPFP}{\mathrm{FinePreconditioner}}
\newcommand{\DPP}{\mathrm{Preconditioner}}
\newcommand{\DPGCE}{\mathrm{GaussianCovarianceEstimator}}
\newcommand{\PME}{\mathrm{PME}}
\newcommand{\subspace}{\mathrm{SubspaceRecovery}}
\newcommand{\NE}{\mathrm{NaiveEstimator}}
\newcommand{\goodcenter}{\mathrm{GoodCenter}}
\newcommand*{\citet}[1]{\AtNextCite{\AtEachCitekey{\defcounter{maxnames}{2}}}\textcite{#1}}
\newcommand*{\citetall}[1]{\AtNextCite{\AtEachCitekey{\defcounter{maxnames}{999}}}\textcite{#1}}
\newcommand*{\citep}[1]{\citep{#1}}
\title{A Private and Computationally-Efficient Estimator for Unbounded Gaussians\thanks{Authors are listed in alphabetical order.}}
\author{Gautam Kamath\thanks{\texttt{g@csail.mit.edu}. Cheriton School of Computer Science, University of Waterloo. Supported by an NSERC Discovery Grant, and a University of Waterloo startup grant.}
    \qquad\qquad\and
    Argyris Mouzakis\thanks{\texttt{amouzaki@uwaterloo.ca}. Cheriton School of Computer Science, University of Waterloo.  Supported by an NSERC Discovery Grant and a David R. Cheriton Graduate Scholarship. }
    \qquad\qquad\and
    Vikrant Singhal\thanks{\texttt{vikrant.singhal@uwaterloo.ca}.  Cheriton School of Computer Science, University of Waterloo.  Part of this research was performed at the Khoury College of Computer Sciences, Northeastern University.  Supported by NSF grants CCF-1750640, CNS-1816028, and CNS-1916020, and an NSERC Discovery Grant.}
    \and
    Thomas Steinke\thanks{\texttt{badgauss@thomas-steinke.net}. Google Research, Brain Team.}
    \qquad\qquad\and
    Jonathan Ullman\thanks{\texttt{jullman@ccs.neu.edu}.  Khoury College of Computer Sciences, Northeastern University.   Affiliated with the Institute for Experiential AI and the Cybersecurity \& Privacy Institute.  Supported by NSF grants CCF-1750640, CNS-1816028, and CNS-1916020. }}
\date{}
\begin{document}
\maketitle
\thispagestyle{empty}

\begin{abstract}
    We give the first polynomial-time, polynomial-sample, differentially private estimator for the mean and covariance of an arbitrary Gaussian distribution $\cN(\mu,\Sigma)$ in $\R^d$.  All previous estimators are either nonconstructive, with unbounded running time, or require the user to specify a priori bounds on the parameters $\mu$ and $\Sigma$.  The primary new technical tool in our algorithm is a new differentially private preconditioner that takes samples from an arbitrary Gaussian $\cN(0,\Sigma)$ and returns a matrix $A$ such that $A \Sigma A^T$ has constant condition number.
\end{abstract}
\newpage

\setcounter{page}{1}
\section{Introduction}
All useful statistical estimators have the side effect of revealing information about their sample, which leads to concerns about the \emph{privacy} of the individuals who contribute their data to the sample.  In this work we study statistical estimation with the constraint of \emph{differential privacy (DP)}~\cite{DworkMNS06}, a rigorous individual privacy criterion well suited to statistical estimation and machine learning.

%Differential privacy~\cite{DworkMNS06} is a rigorous notion of data privacy, frequently employed when performing privacy-preserving statistical analysis.\gnote{I am bad at first sentences/paragraphs, feel free to change.} \jnote{There's no transition from the first sentence to the second.}

As in classical statistical estimation, it is impossible to privately estimate even basic statistics like the mean and covariance without some restrictions on the distribution, although the assumptions made in the private setting are typically stronger both qualitatively and quantitatively.  To provide some intuition for the assumptions required for private estimation, consider the simple problem of privately estimating the mean of a distribution $\cD$ over $\R^d$ from a set of $n$ samples $X_1, \dots, X_n \sim \cD$. The standard way to solve this problem is by computing a noisy empirical mean
\[\hat \mu = \frac1n \sum_{i=1}^n X_i + Z,\]
where $Z$ is a suitable random variable---typically Gaussian or Laplacian.  The magnitude of $Z$ must be proportional to the \emph{sensitivity} of $\frac{1}{n} \sum_{i=1}^n X_i$, which measures how much its value can change if a single point $X_i$ is modified arbitrarily.  Without further information about the underlying distribution, the sensitivity is infinite, rendering this na\"ive approach ineffective.

To facilitate using a low-sensitivity mean estimator, we generally make two types of assumptions on the underlying distribution $\cD$:
\begin{enumerate}
  \item The distribution $\cD$ is somehow well-behaved. For example, we assume $\cD$ is a Gaussian distribution $\mathcal{N}(\mu, \Sigma)$, while other works have assumed weaker moment bounds~\cite{BarberD14, BunS19, KamathSU20}.
  \item The analyst has some prior knowledge about the parameters of the distribution $\cD$. The standard assumption in this setting is that the analyst knows parameters $R>0$ and $K>0$ such that $\|\mu\|_2 \leq R$ and $\mathbb{I} \preceq \Sigma \preceq K\mathbb{I}$.\footnote{Here, $A \preceq B$ refers to the PSD order denoting that $x^T A x \leq x^T B x$ for every $x \in \R^d$, and $\mathbb{I}$ denotes the identity matrix.}\footnote{By translating and rescaling the distribution, these assumptions can be relaxed to $\| \mu - c \|_2 \leq R$ for some known vector $c$ and $\mathbb{I} \preceq A \Sigma A^T \preceq K \mathbb{I}$ for some known matrix $A$.}
\end{enumerate}
These assumptions ensure that we can identify a finite subset of the domain that contains all the samples with high probability, which we can use to find a proxy for the empirical mean with finite sensitivity.

The first style of assumption is common and generally necessary to provide non-trivial guarantees even in the non-private setting. The second style of assumption however is particular to the private setting, and forces the analyst to input some prior knowledge about the location and shape of their data. This may be a minimal burden to place on the user when the domain is familiar, but can be unreasonable for unfamiliar, high-dimensional domains. In that case the analyst may only be able to give extremely loose bounds, corresponding to very large values of $R$ and $K$. This leads to a degradation of the accuracy of the final output.

For these reasons, a key goal in private algorithm design is minimizing the sample complexity's dependence on the prior knowledge in the form of the parameters $R$ and $K$. Na\"ive algorithms limit the empirical estimator's sensitivity by simply clipping the data based on the analyst's prior knowledge, incurring an undesirable linear dependence on $R$ and $K$. More clever approaches iteratively refine the analyst's knowledge of the shape and location of the distribution.  That is, we start by finding a weak estimate of the parameters $\mu$ and $\Sigma$, which allows us to rescale the data and thereby reduce the parameters $R$ and $K$ for the next steps. This approach results in improved sample complexity compared to the na\"ive strategy outlined above:  For the univariate case, it can be used to eliminate the dependence on $R$ and $K$ entirely~\cite{KarwaV18}.  For the multivariate case, existing approaches yield a polylogarithmic dependence on $R$ and $K$~\cite{KamathLSU19}---an exponential improvement---but do not eliminate the need for a priori bounds.
%This iterative approach yields only a logarithmic dependence on the parameters $R$ or $K$, as shown by~\cite{KarwaV18} in the univariate case and~\cite{KamathLSU19} for the multivariate case.\tnote{Is this statement about Karwa-Vadhan accurate?}
%In the univariate case, standard tools such as private histograms or the exponential mechanism suffice~\cite{KarwaV18,DuFMBG20}.
%The multivariate case proves to be more challenging, especially due to the richer geometric structure when the covariance is unknown.
%Nonetheless, \cite{KamathLSU19,BiswasDKU20} provide algorithms based on iteratively-improving coarse spectral estimates which achieve a logarithmic dependence on $K$. 

Despite exponential improvements, it is natural to wonder whether a dependence on $R$ and $K$ is necessary at all.
For more restrictive special cases of differential privacy, such as pure or concentrated differential privacy,\footnote{Though we later define the various relevant notions of DP, we remind the reader that pure $(\varepsilon, 0)$-DP is stronger than concentrated DP, which in turn is stronger than approximate $(\varepsilon, \delta)$-DP.} packing lower bounds imply that a polylogarithmic dependence is the best possible~\cite{BunS16, BunKSW19}. However, these lower bounds do not apply to the most general notion of approximate differential privacy, and in this model we can often eliminate the need for any a priori bounds on the distribution, which is clearly an appealing feature of an estimator.

For mean estimation, it is relatively easy to eliminate the need for a priori bounds on the mean (the parameter $R$), but the rich geometric structure of covariance matrices makes it much more challenging to eliminate the need for bounds on the covariance (the parameter $K$), even without requiring computational efficiency.  
%Refocusing on this goal, one can easily adapt mean estimation with known covariance: swapping in the stability-based variant of private histograms~\cite{KorolovaKMN09, BunNS16}\jnote{What is the BNS16 cite here?} immediately gives an approximate DP method which requires no prior parameter knowledge from the analyst.
%However, the rich geometric structure of covariance matrices makes it much harder to eliminate the dependence on $K$.
Recently, building on a cover-based technique of \cite{BunKSW19}, \cite{AdenAliAK21} show the existence of an estimator that doesn't require any bounds on the covariance matrix, but their argument is non-constructive and does not give an estimator with polynomial, or even finite running time.

\subsection{Results}
Our main result is a polynomial-time algorithm for Gaussian estimation which requires no prior knowledge about the distribution parameters. 

\begin{thm}[Informal] \label{thm:main-est-intro}
    There is a polynomial-time $(\eps,\delta)$-differentially private estimator $M$ with the following guarantee: For every $\mu \in \R^d$ and positive semidefinite $\Sigma \in \R^{d \times d}$, if $X_1,\dots,X_n \sim \cN(\mu,\Sigma)$ and %\footnote{The assumption that the distribution has mean $0$ in Theorem~\ref{thm:main-est-intro} is without loss of generality up to a factor of 2 in the sample complexity, since given samples from $\cN(\mu,\Sigma)$ then we can obtain a sample from $\cN(0,\Sigma)$ by drawing two samples $X,X' \sim \cN(\mu,\Sigma)$ and returning $Y = (X-X')/\sqrt{2}$.}
    $$
        n \geq \tilde{O}\left(\frac{d^2}{\alpha^2} + \frac{d^2 \cdot \mathrm{polylog}(1/\delta)}{\alpha \eps} + \frac{d^{5/2} \cdot \mathrm{polylog}(1/\delta)}{\eps} \right),
    $$
    then, with high probability, $M(X_1,\cdots,X_n)$ outputs $\hat\mu \in \R^d$ and $\hat\Sigma \in \R^{d \times d}$ such that
    $$
        \| \hat\Sigma - \Sigma \|_{\Sigma} := \| \Sigma^{-1/2} \hat\Sigma \Sigma^{-1/2} - \mathbb{I} \|_{F} \leq \alpha
    $$
    and
    $$
        \| \hat\mu-\mu \|_\Sigma := \| \Sigma^{-1/2}\hat\mu - \Sigma^{-1/2}\mu \|_2 \le \alpha.
    $$
    In particular, this guarantee implies that $\cN(\hat\mu,\hat\Sigma)$ and $\cN(\mu,\Sigma)$ are $O(\alpha)$-close in total variation distance.
\end{thm}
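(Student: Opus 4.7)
The plan is to reduce to the bounded setting in which $\mathbb{I} \preceq \Sigma \preceq K\mathbb{I}$ and $\|\mu\|_2 \leq R$ for known $R, K = \mathrm{poly}(d)$, at which point the bounded Gaussian estimator of \cite{KamathLSU19} finishes the job with the stated sample complexity. Two obstacles block this reduction: the arbitrary condition number of $\Sigma$, and the arbitrary location of $\mu$. I would remove them in that order.

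First, I would split the samples into disjoint pairs and form differences $Y_i := (X_{2i-1} - X_{2i})/\sqrt{2} \sim \cN(0, \Sigma)$. These zero-mean samples decouple shape estimation from the unknown mean and can be used as input to the preconditioning stage without any leakage of $\mu$. The remaining samples are held back for later steps.

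Using the $Y_i$'s, I would next build a DP preconditioner: a linear map $A$ for which $\mathbb{I} \preceq A\Sigma A^T \preceq O(1)\cdot\mathbb{I}$. This is the new ingredient highlighted in the abstract and, I expect, the main technical obstacle. My plan would be recursive in two layers. A \emph{coarse} preconditioner first drives the condition number down to $\mathrm{poly}(d)$, and a \emph{fine} preconditioner then improves it to a constant. To construct the coarse layer, I would privately identify the span of the top eigendirections of the empirical second-moment matrix (via private subspace recovery combined with private eigenvalue estimation on the $Y_i$'s), rescale the samples in that subspace so those directions carry $O(1)$ variance, and recurse on the orthogonal complement. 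The subtle point — and the bulk of the work — is to calibrate noise at each recursive step without any a priori knowledge of the remaining eigenvalues, and to arrange the recursion so that privacy composes gracefully (additively across $O(\log d)$ levels) and the sample complexity scales as $\tilde O(d^{5/2}/\varepsilon)$, which is exactly the third summand in the theorem.

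Given $A$, I would then take a fresh batch of samples, form $\tilde X_i := A X_i \sim \cN(A\mu, A\Sigma A^T)$, and privately locate a coarse center $\hat c$ with $\|A\mu - \hat c\|_2 \leq R = \mathrm{poly}(d)$. Because $A\Sigma A^T$ is already well-conditioned, a DP histogram / ball-finding subroutine on a suitably coarse grid suffices. Shifting by $\hat c$ puts the hypotheses of \cite{KamathLSU19} within reach: running their bounded estimator on $\tilde X_i - \hat c$ yields $(\widehat{A\mu-\hat c}, \widehat{A\Sigma A^T})$ with Mahalanobis-norm error $\alpha$. Finally I would transform back by setting $\hat\mu := A^{-1}(\widehat{A\mu-\hat c} + \hat c)$ and $\hat\Sigma := A^{-1}\widehat{A\Sigma A^T}A^{-T}$; invariance of the Mahalanobis norm under the linear transformation $A$ transfers the error bounds to the original parameters, while privacy follows by basic composition across the constantly many stages.
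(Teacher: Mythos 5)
Your plan follows essentially the same route as the paper: reduce to the zero-mean case via paired differences, build a two-stage (coarse/fine) private preconditioner from private eigenvalue estimation plus private subspace recovery, run a bounded-covariance estimator on the transformed data, handle the mean with a histogram-based coarse localization, and undo the linear map at the end using invariance of the Mahalanobis norm. The one concrete discrepancy is your privacy accounting for the preconditioner: you claim the recursion has $O(\log d)$ levels composed additively, but the paper's preconditioner runs for up to $d$ iterations --- one per eigenvalue index, maintaining the invariant that the top-$i$ eigenvalues of $A\Sigma A^\top$ are within a constant factor of each other --- and splits the budget via advanced composition, paying a $\sqrt{d}$ overhead per call. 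That $\sqrt{d}$, multiplied into the $\tilde{O}(d^{1.5}\sqrt{k}/\eps)$ cost of each subspace-recovery call, is exactly where the $d^{5/2}/\eps$ term in the theorem comes from; with only $O(\log d)$ levels your own accounting would yield $\tilde{O}(d^2/\eps)$, so either the depth claim or the target bound is off. Since there can be up to $d-1$ distinct large eigengaps in the worst case, the depth genuinely is $\Theta(d)$, and you should budget for that.
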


The main advantage of our result compared to prior work is that our estimator both runs in polynomial time and requires no prior bounds on $\Sigma$, whereas all estimators from prior work lack at least one of these properties.
The best known sample complexity is the result of \cite{AdenAliAK21}, which is $n=O(d^2/\alpha^2 + d^2/\alpha\eps + \log (1/\delta)/\eps)$.
Our estimator has a slightly worse dependence on the dimension $d$, but our running time is polynomial instead of unbounded, and it remains open to find a polynomial-time estimator with information-theoretically optimal sample complexity.
Their bound is conjectured to be tight, but matching lower bounds under $(\eps, \delta)$-DP are only known for the first and third terms.
A lower bound of $\Omega(d^2/\alpha\eps)$ has only been proven under $(\eps, 0)$-DP.
See Section 1.1.1 of~\cite{AdenAliAK21} for more discussion on lower bounds. 
See Table~\ref{tab:cov-estimators} for more information on prior upper bounds.

Several concurrent works have appeared after the preprint of our work, which also achieve similar results. See the discussion of Simultaneous and Subsequent Work in Section~\ref{sec:related}.
 
\begin{table}[]
\centering
\begin{tabular}{@{}lll@{}}
\toprule
Reference                           & Sample Complexity                                  & Computational Complexity \\ \midrule
Non-Private                         & $\frac{d^2}{\alpha^2}$                             & Polynomial   \\[6pt]
Na\"ive Estimator    & $\frac{d^2}{\alpha^2} + \frac{K d^2}{\alpha \eps}$ & Polynomial   \\[4.5pt]
\citet{KamathLSU19}                  & $\frac{d^2}{\alpha^2} + \frac{d^2}{\alpha \eps} + \frac{d^{3/2}\log^{1/2} K}{\eps}$ & Polynomial \\[6pt]
\citet{AdenAliAK21}                  & $\frac{d^2}{\alpha^2} + \frac{d^2}{\alpha \eps}$   & Unbounded    \\[4.5pt]
Theorem \ref{thm:main-est-intro} (this work) & $\frac{d^2}{\alpha^2} + \frac{d^2}{\alpha \eps} + \frac{d^{5/2}}{\eps}$            & Polynomial \\[4.5pt]
\citet{AshtianiL21} (concurrent) & $\frac{d^2}{\alpha^2} + \frac{d^2}{\alpha \eps}$            & Polynomial \\[4.5pt]
\citet{KothariMV21} (concurrent) & $\frac{d^8}{\alpha^4 \varepsilon^8}$            & Polynomial \\ 
\end{tabular}
\caption{Comparing $(\eps,\delta)$-differentially private covariance estimators for $\cN(0,\Sigma)$.  Here, $d$ is the dimension, $K$ is an a priori bound such that $\mathbb{I} \preceq \Sigma \preceq K\mathbb{I}$, and the accuracy guarantee is $\| \hat\Sigma - \Sigma \|_{\Sigma} \leq \alpha$.  The sample-complexity bounds suppress polylogarithmic factors in $d, \frac{1}{\alpha},$ and $\frac{1}{\delta}$. }
\label{tab:cov-estimators}
\end{table}

\subsection{Overview of Techniques}

Our algorithm builds on the \emph{private preconditioning} framework introduced in \cite{KamathLSU19}.  Here our goal is to privately obtain a matrix $A$ such that, after rescaling, $\mathbb{I} \preceq A \Sigma A^T \preceq O(1) \cdot \mathbb{I}$.  The preceding statement implicitly assumes that $\Sigma$ is full rank, which is useful to simplify the discussion, but our methods also handle the more general case of a degenerate covariance matrix $\Sigma$.  Given such a matrix $A$, we can perform the invertible transformation of replacing each sample $X_i$ with $AX_i$ and then apply the na\"ive private estimator to these transformed samples and finally invert the transformation to obtain our estimates $\hat\mu$ and $\hat\Sigma$.  Since $X \sim \cN(\mu, \Sigma)$ implies $AX \sim \cN(A\mu,A\Sigma A^T)$,  we now have a good a priori bound on the covariance $A \Sigma A^T$ and, hence, the na\"ive estimator will have small sample complexity.

The main technical ingredient in our estimator is a new \emph{private preconditioner} that takes samples of the form $X \sim \cN(0,\Sigma)$, for an arbitrary $\Sigma$, and outputs a matrix $A$ so that $A \Sigma A^T$ is well conditioned.\footnote{Without loss of generality, we can restrict our attention to the case where the data is drawn from $\mathcal{N}(\mu,\Sigma)$ with $\mu=0$. If we are given two independent samples $X,X' \sim \mathcal{N}(\mu,\Sigma)$, then $(X-X')/\sqrt{2}$ has the distribution of $\mathcal{N}(0,\Sigma)$.}  
\begin{thm}[Informal] \label{thm:main-precond-intro}
    There is a polynomial-time $(\eps,\delta)$-differentially private algorithm $M$ with the following guarantee: For every positive-semidefinite, rank-$k$ matrix $\Sigma \in \R^{d \times d}$, if $X_1,\dots,X_n \sim \cN(0,\Sigma)$ and
    $$
        n \geq \tilde{O}\left(\frac{d^{5/2} \cdot \mathrm{polylog}(1/\delta)}{\eps} \right),
    $$
    then, with high probability, $M(X_1,\dots,X_n)$ outputs $A \in \R^{d \times d}$ such that $\frac{\lambda_{1}(A\Sigma A^T)}{ \lambda_{k}(A \Sigma A^T)} = O(1)$, where we write $\lambda_1 \geq \lambda_2 \geq \dots \geq \lambda_d$ for the sorted eigenvalues of the matrix.
    %More generally, when $\Sigma$ is a positive-semidefinite matrix of rank $k$, the estimator outputs $A$ such that $\frac{\lambda_{1}(A \Sigma A^T) }{ \lambda_{k}(A \Sigma A^T) } = O(1)$.
\end{thm}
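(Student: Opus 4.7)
The plan is to build $A$ in two stages: a coarse stage that brings the condition number of the restriction of $A\Sigma A^T$ to the range of $\Sigma$ down to at most $\mathrm{poly}(d)$, followed by a fine stage that drives it down to $O(1)$. Before either stage, I would reduce to the case where $\Sigma$ is full rank by running a DP subspace recovery routine: since samples from $\cN(0,\Sigma)$ are supported on the range of $\Sigma$, such an algorithm can privately output a rank-$k$ projector $\Pi$ onto (a close approximation of) the range of $\Sigma$, after which we may work entirely in that $k$-dimensional subspace and assume $\Sigma \succ 0$.

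The coarse stage is the crux of the construction and is where the new ideas are needed, since we have no a priori bound on the spectrum of $\Sigma$. A natural approach is iterative peeling: repeatedly identify a top eigendirection (or eigenspace) of the empirical covariance via a DP top-PCA primitive, then privately estimate the corresponding eigenvalue using a stable-histogram-style mechanism over $\log_2$ of the squared projections of the data onto that direction. The latter is a one-dimensional scale-estimation task that can be handled without any a priori bound on the eigenvalue, at cost $\mathrm{polylog}(1/\delta)$. Once the eigenvalue is estimated up to a constant factor, we rescale that direction so that the eigenvalue becomes $\Theta(1)$ and recurse on the orthogonal complement. Over $O(k) \leq O(d)$ iterations, with privacy composed using advanced composition, this yields a preconditioner $A_0$ with $\mathrm{cond}(A_0\Sigma A_0^T) \leq \mathrm{poly}(d)$.

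For the fine stage, once $A_0\Sigma A_0^T$ has polynomially bounded condition number, I would invoke an existing DP covariance estimator (such as the one from \cite{KamathLSU19}) that requires an a priori condition-number bound $K$, since such estimators have only polylogarithmic sample-complexity dependence on $K$. The resulting estimate $\widehat{S}$ of $A_0\Sigma A_0^T$ approximates the truth in the spectral sense, so setting $A = \widehat{S}^{-1/2} A_0$ yields $A\Sigma A^T \approx \mathbb{I}$, and in particular its nonzero spectrum has constant condition number.

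\textbf{Main obstacle.} The hardest part is achieving scale invariance in the coarse stage: every sub-procedure must have privacy and utility guarantees independent of the magnitude of the unknown eigenvalue it is trying to estimate. This rules out any mechanism that adds noise at a fixed scale and forces the use of stable histograms or similar adaptive tools whose behavior is invariant under rescaling of the data. Verifying that the peeling loop composes cleanly—both in privacy (so as to meet the claimed $\tilde{O}(d^{5/2}/\eps)$ sample bound) and in utility (so that errors from earlier iterations do not corrupt the PCA and scale estimates in later ones)—is where the bulk of the technical work will lie.
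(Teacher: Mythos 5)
Your overall architecture---stability-histogram eigenvalue estimation with no a priori scale, an iterative direction-by-direction coarse stage, and a fine stage that hands off to a condition-number-bounded estimator---matches the paper's, which interleaves a ``coarse'' and a ``fine'' preconditioner inside a single loop over the spectrum. But your coarse stage has a genuine gap at exactly the point you defer as ``the main obstacle.'' When you peel off a top eigendirection recovered with spectral error $\phi$ and rescale that direction down, the unrescaled leakage of the true top direction into the complement retains variance on the order of $\phi^2\lambda_1$. For the recursion to be sound you therefore need $\phi \lesssim \sqrt{\lambda_2/\lambda_1}$, i.e., the accuracy of the private PCA step must \emph{improve in proportion to the eigengap}, since $\lambda_2/\lambda_1$ can be exponentially small in $d$. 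A generic DP top-PCA primitive with gap-independent error does not give this, and the peeling would fail after one step on a spectrum spanning many scales. The paper's key lemma (Theorem~\ref{thm:main-subspace-intro}, a reworked version of \cite{SinghalS21} using the ball-finding routine of \cite{NissimSV16}) is precisely the statement $\|\hat\Pi - \Pi\|_2 \le \psi\gamma$ with $\gamma^2 = \lambda_{k+1}/\lambda_k$: larger gaps make the subspace \emph{easier}, not harder, to recover privately, and this is what lets the rescaling $A = \hat\gamma\hat\Pi_{1:k} + (\id - \hat\Pi_{1:k})$ with $\hat\gamma\approx\gamma$ close the gap without contaminating the bottom subspace. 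You would need to supply such a gap-scaled recovery guarantee, or a substitute for it, for your plan to go through.

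A second, smaller omission: your peeling step presupposes an identifiable ``top eigendirection (or eigenspace),'' but when consecutive eigenvalues decay slowly with no single large gap (say $\lambda_{i+1}/\lambda_i = 1 - 1/d$ for all $i$), no stable eigenspace exists to recover, yet the cumulative ratio $\lambda_d/\lambda_1$ can still be far from constant. The paper handles this with a separate fine preconditioner invoked inside the loop whenever the \emph{cumulative} gap (rather than a consecutive gap) crosses a threshold; your plan needs an analogous branch, since the subspace-recovery-based step is only usable where a consecutive gap actually exists.
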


To contrast Theorem~\ref{thm:main-precond-intro} with that of \cite{KamathLSU19}, their work gave a polynomial-time algorithm that takes samples from a Gaussian $\cN(0,\Sigma)$ such that $\id \preceq \Sigma \preceq K \id$ and returns a matrix $A$ such that $\id \preceq A \Sigma A^T \preceq \frac{K}{2} \id$.  Thus, iteratively applying their preconditioner $O(\log K)$ times and using composition bounds for differential privacy gives a result similar to Theorem~\ref{thm:main-precond-intro}, but with a $(\log K)^{1/2}$ term in the sample complexity.  In contrast, very informally, our preconditioner is able to find a good estimate of $\Sigma$ one direction at a time, no matter how poorly conditioned $\Sigma$ is, so the number of iterations depends only on the dimension $d$ and not on any assumptions about $\Sigma$ itself.

%Since the preconditioner of \cite{KamathLSU19} has only a polylogarithmic dependence on $K$, we can restrict attention to the case where $K$ is very large, at least exponential in $d$.  
Since the preconditioner of \cite{KamathLSU19} can already handle the case where the condition number $K$ is small or moderately large, the main technical hurdle that our work must overcome is the case where the condition number is very large, specifically exponential: $\lambda_d(\Sigma)/\lambda_1(\Sigma) \le \exp(-\poly(d))$.
When the eigenvalues of $\Sigma$ are so spread out, there must be a large eigenvalue gap where $\lambda_{k+1}(\Sigma)/\lambda_{k}(\Sigma)$ is very small, at most inverse-polynomial in $d$.  Thus, the key technical ingredient we need is a private algorithm that can output an approximation to the $k$-dimensional subspace of $\Sigma$ containing the directions of large variance.  Given such a subspace, we can partition the space into a $k$-dimensional subspace where the covariance is well conditioned and a lower-dimensional subspace, and then recur on the lower-dimensional subspace.  This \emph{private subspace recovery} problem has been investigated before, originally by \cite{DworkTTZ14}, and, recently \cite{SinghalS21} gave an algorithm for this problem that gives dimension-independent sample complexity under the assumption of a large eigenvalue gap between the top-$k$ subspace and its complement.  In order to apply their algorithm in our setting, we give a different analysis,
%showing that the error of the recovered subspace can be made proportional to the eigenvalue gap,
and along the way we make other modifications that, for our application, reduce the sample complexity by polynomial factors in the dimension.
\begin{thm}[Informal, extension of \cite{SinghalS21}]\label{thm:main-subspace-intro}
    There is a polynomial-time $(\eps,\delta)$-differentially private algorithm $M$ with the following guarantee: Let $\Sigma \in \R^{d \times d}$ such that $\lambda_{k+1}(\Sigma) / \lambda_{k}(\Sigma) < \gamma^2$ for some $1 \leq k < d$ and $0 < \gamma \leq 1$, and let $\Pi \in \R^{d \times d}$ be the matrix that projects onto the subspace spanned by the top-$k$ eigenvectors of $\Sigma$. If $0 < \psi \leq 1$ and $X_1,\dots,X_n \sim \cN(0,\Sigma)$ and
    $$
        n \geq \tilde{O}\left(\frac{d^{3/2} k^{1/2} \cdot \mathrm{polylog}(1/\delta)}{\psi^2 \eps} \right),
    $$
    then with high probability, $M(X_1,\dots,X_n)$ outputs a projection matrix $\hat\Pi \in \R^{d \times d}$ such that $\|\hat\Pi - \Pi \|_2 \leq \psi \gamma$.
    
    % For every positive-semidefinite matrix $\Sigma \in \R^{d \times d}$, if $X_1,\dots,X_n \sim \cN(0,\Sigma)$ and
    % There is an 
    % Let $\Sigma \in \RR^{d \times d}$ be a symmetric,
    % PSD matrix, such that for $1 \leq k < d$ and $\gamma < 1$,
    % $\tfrac{\lambda_{k+1}(\Sigma)}{\lambda_k(\Sigma)} < \gamma^2$.
    % Suppose $\Pi$ is the subspace spanned by the top $k$
    % eigenvectors of $\Sigma$.
    % Then for all $\eps,\delta,\beta,\wb{\gamma} > 0$, there exists an
    % $(\eps,\delta)$-DP algorithm, that takes
    % $$n \geq O\left(\frac{d^{1.5} \sqrt{k}\cdot\polylog(d,k,\frac{1}{\eps},\frac{1}{\delta},\frac{1}{\beta})}
    %     {\eps\wb{\gamma}^4}\right)$$
    % samples from $\cN(0,\Sigma)$, and outputs a projection
    % matrix $\wh{\Pi}$, such that with probability at least
    % $1-O(\beta)$, $\|\Pi-\wh{\Pi}\| \leq \wb{\gamma}^2\gamma$.
\end{thm}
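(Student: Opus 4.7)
My plan is to follow the algorithm of \cite{SinghalS21} and re-analyze it. The algorithm releases the perturbed empirical second moment $\tilde\Sigma := \hat\Sigma + W$, where $\hat\Sigma = \frac{1}{n}\sum_{i=1}^n X_i X_i^T$ is computed after clipping each $X_i$ to norm $R = \tilde O(\sqrt{\tr(\Sigma)\log(n/\beta)})$ (which leaves every sample untouched with probability $\geq 1-\beta$ by a standard Gaussian tail bound), and $W$ is a symmetric Gaussian noise matrix whose entrywise variance $\sigma^2 = \tilde O(R^4 \log(1/\delta)/(n\varepsilon)^2)$ is calibrated to the Frobenius-sensitivity bound $2R^2/n$. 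The output $\hat\Pi$ is the projector onto the top-$k$ eigenvectors of $\tilde\Sigma$. Privacy is then immediate via the Gaussian mechanism combined with a standard argument for handling the rare clipping event; the bulk of the work is the accuracy analysis.

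The accuracy argument combines three ingredients. First, Gaussian sample-covariance concentration gives $\|\hat\Sigma - \Sigma\|_2 = \tilde O(\|\Sigma\|_2 \sqrt{d/n})$ once $n \gtrsim d$. Second, a chi-squared tail bound gives $\|\Pi^\perp W \Pi\|_F = \tilde O(\sigma\sqrt{k(d-k)})$, which is a factor of $\sqrt{d/k}$ sharper than the naive $\|W\|_2 = \tilde O(\sigma\sqrt{d})$; this is where the $k^{1/2}$ improvement in the final sample complexity enters, reflecting the fact that only the rank-$k$ cross-block of $W$ can rotate the top-$k$ subspace. Third, the Frobenius-norm Davis--Kahan--Wedin $\sin\Theta$ theorem, combined with the eigenvalue gap $\lambda_k - \lambda_{k+1} \geq (1-\gamma^2)\lambda_k$, yields
\[
    \|\hat\Pi - \Pi\|_2 \;\leq\; \|\hat\Pi - \Pi\|_F \;\leq\; \frac{\|\Pi^\perp(\tilde\Sigma - \Sigma)\Pi\|_F}{\lambda_k(\Sigma) - \lambda_{k+1}(\Sigma)} \;\leq\; \frac{\|\Pi^\perp(\tilde\Sigma - \Sigma)\Pi\|_F}{(1-\gamma^2)\lambda_k(\Sigma)}.
\]

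Combining, the subspace error is at most $\tilde O\bigl(\sigma\sqrt{kd}/\lambda_k + \sqrt{d/n}\bigr)$, so requiring this to be at most $\psi\gamma$ and substituting $\sigma = \tilde O(R^2\sqrt{\log(1/\delta)}/(n\varepsilon))$ with $R^2 = \tilde O(\tr(\Sigma))$ yields $n \gtrsim \tilde O\bigl(\tr(\Sigma)\sqrt{kd}/(\psi\gamma\lambda_k\varepsilon)\bigr)$. The main obstacle is the factor $\tr(\Sigma)/\lambda_k$, which in the worst case is as large as $d\lambda_1/\lambda_k$ and can blow up the bound arbitrarily. However, Theorem~\ref{thm:main-subspace-intro} is invoked inside Theorem~\ref{thm:main-precond-intro} only after a coarse preconditioning step has normalized the top eigenvalues so that $\lambda_1 = O(\lambda_k)$, making $\tr(\Sigma) = O(d\lambda_k)$. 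Plugging this in and absorbing $\gamma$ and polylogarithmic factors gives the claimed bound $n \geq \tilde O(d^{3/2}k^{1/2}\cdot\mathrm{polylog}(1/\delta)/(\psi^2\varepsilon))$.
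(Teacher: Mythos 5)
Your proposal takes a genuinely different route from the paper's — you add Gaussian noise directly to the (clipped) empirical second moment and apply a Frobenius Davis--Kahan bound, whereas the paper (Algorithm~\ref{alg:subspace}) uses subsample-and-aggregate: it computes the top-$k$ projector on each of $t$ disjoint subsamples, pushes $O(k)$ random reference points through each projector, and privately aggregates the resulting tightly-clustered point clouds with the $\goodcenter$ ball-finder of \cite{NissimSV16} followed by a clipped noisy mean. Unfortunately your route has a gap that I do not believe can be repaired within your framework, and it sits at exactly the point the construction is designed to handle.

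The target error is $\psi\gamma$ — it shrinks proportionally to the eigengap parameter — while the claimed sample complexity is \emph{independent of $\gamma$}. In your scheme the privacy noise $W$ has entrywise standard deviation $\sigma = \tilde{O}(R^2\sqrt{\log(1/\delta)}/(n\eps))$ with $R^2 = \tilde{O}(\tr(\Sigma))$; this is fixed by the clipping radius and knows nothing about $\gamma$. Your own accounting then gives $n \gtrsim \tr(\Sigma)\sqrt{kd}/(\psi\gamma\lambda_k\eps)$, and the final step of ``absorbing $\gamma$'' is not valid: $\gamma$ is not a constant, and the regime this theorem exists to serve (see the discussion preceding Theorem~\ref{thm:main-subspace-intro}) is $\gamma$ as small as $\exp(-\poly(d))$, where your bound becomes exponential. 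In the paper's application $\psi$ is set to a constant and $\gamma$ carries the gap, so one cannot trade the $1/\gamma$ against the $1/\psi^2$ either. The paper escapes this because the quantity being privatized --- the image of a fixed reference point under each subsample's top-$k$ projector --- concentrates in a ball of radius $r = O(\gamma\sqrt{d}(\sqrt{k}+\sqrt{\ln(kt)})/\sqrt{m})$ around the true projection, so after ball-finding the sensitivity of the aggregation step, and hence the injected noise, is itself proportional to $\gamma$; the error $\psi\gamma$ then comes out with no $1/\gamma$ in the sample size. A secondary issue: your fallback hypothesis $\lambda_1 = O(\lambda_k)$ (so that $\tr(\Sigma) = O(d\lambda_k)$) is not part of the statement of Theorem~\ref{thm:main-subspace-intro}, so even granting the rest you would be proving a strictly weaker claim; for instance $\Sigma = \mathrm{diag}(2^d,1,\dots,1,2^{-d})$ with $k=d-1$ satisfies the theorem's hypotheses but defeats your noise calibration outright.
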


The subspace recovery algorithm of \cite{SinghalS21} is tailored to allow a dimension-independent sample
complexity, which is something that our modifications no longer achieve.
However, in our setting, a direct application of their
algorithm would be inefficient in terms of the
sample complexity. Here, we are free to pick
$\poly(d)$ samples, which gives us the option to use more accurate methods
in the subspace recovery algorithm -- we trade $\poly(d)$
sample complexity for improved accuracy. In particular, we incorporate the ball-finding algorithm
of \cite{NissimSV16}. Roughly speaking, if the
eigengap is $\gamma^2$, then to get an error proportional
to $\gamma$, \cite{SinghalS21} would require $O(d^2k^2)$
samples, while our modifications reduce this cost to
$O(d^{3/2}k^{1/2})$.

% To summarise their algorithm, they represent each subspace by
% projecting a fixed set of randomly chosen points on to it, and
% so do we. Now,
% this is where the differences in their algorithm and ours show
% up, since they focus on methods that keep the
% sample complexity independent of the ambient dimension $d$ with
% a trade off in the eigenvalue gap assumption.
% In the aggregation step, they use stability-based histograms
% over $\RR^d$ to privately collect the projections of a fixed
% reference point. On the other hand, we are willing to pay
% $\poly(d)$ in the sample complexity. Therefore, we use a
% ball-finding algorithm by \cite{NissimSV16} to collect the
% projections of a reference point, followed by simple mean
% estimation via the Gaussian mechanism. In our setting of the
% covariance estimation problem, one direct application
% of their algorithm would cost $O(d^4)$ samples, while our
% modifications reduce this cost to $O(d^2)$. This makes these
% modifications necessary for our use case.

\subsection{Related Work}\label{sec:related}

Differentially private statistical inference has been an active area of research for over a decade (e.g.~\cite{DworkL09, VuS09, WassermanZ10, Smith11}), and the literature is too broad to fully summarize here.  Our work fits into two more recent trends that we survey below---designing private estimators without the need for strong prior bounds and pinning down the minimax sample complexity for differentially private estimation.

\mypar{Private Estimation without Prior Knowledge.}
The influential work \cite{KarwaV18} focused attention on minimizing the need for prior knowledge as a key issue for obtaining practical private estimators, providing both algorithms and lower bounds for univariate Gaussian mean and variance estimation.  In particular, they designed pure DP estimators with a logarithmic dependence on the bounding parameters using a general recipe based on private histograms, and estimators with approximate DP with no dependence on these parameters.  Subsequent works gave other pure DP or concentrated DP algorithms for the univariate case with a similar logarithmic dependence, based on techniques such as the exponential mechanism~\cite{DuFMBG20}, iteratively shrinking confidence intervals~\cite{BiswasDKU20}, the trimmed mean \cite{BunS19}, and quantile estimation~\cite{HuangLY21}.  Other techniques have been employed to deal with the bounding parameters for univariate median estimation~\cite{AvellaMedinaB19, TzamosVZ20}, including propose-test-release~\cite{DworkL09} and efficient Lipschitz extensions~\cite{CummingsD20,TzamosVZ20}.

All the above techniques for univariate mean estimation extend to multivariate mean estimation with known covariance, simply by applying a univariate estimator to each coordinate, however extending to multivariate covariance estimation is significantly more challenging. \cite{KamathLSU19} gave the first algorithm for this setting which satisfies concentrated DP or approximate DP, and incurs only a logarithmic dependence on the bounding parameters, which was subsequently refined into a more practical variant~\cite{BiswasDKU20}.
\cite{BunKSW19} provides a cover-based approach which leads to pure DP algorithms for more general settings with logarithmic dependence on the bounding parameters, but the estimators have exponential running time or worse.
They further provide an approach for proving approximate DP sample complexity bounds which require no bounding parameters, contingent on the construction of a locally-sparse cover.
As they describe it, their method has an infinite running time, and they are also only able to construct such a cover for multivariate Gaussians with known covariance, as the rich geometric structure makes the unknown covariance case hard to reason about.
\cite{AdenAliAK21} extends this approach to require only a collection of sparse local covers, allowing them to prove a bound on the sample complexity of covariance estimation with no bounding parameters.
Again, their approach does not provide even a finite-time algorithm, and our result is the first polynomial-time algorithm for covariance estimation with no dependence on the bounding parameters.
Recent work~\cite{BrownGSUZ21} provides an approach for Gaussian mean estimation with unknown covariance, which bypasses the problem of covariance estimation to obtain better sample complexity.  Specifically, they provide a computationally-inefficient approximate DP algorithm which requires no parameter knowledge.
Since our goal is to estimate the covariance, their results are inapplicable to our setting.

\mypar{Minimax Sample Complexity.}
Our work also falls into a broader line of work on minimax sample complexities for differentially private statistical estimation.  See~\cite{KamathU20} for a partial survey of this line of work.
The first minimax sample complexity bounds to show an asymptotic separation between private and non-private estimation for private mean estimation were proven in~\cite{BunUV14}, and subsequently sharpened and generalized in several respects~\cite{DworkSSUV15,BunSU17,SteinkeU17a,SteinkeU17b,KamathLSU19}.  More recently, \cite{CaiWZ19} extended these bounds to sparse estimation and regression problems.  \cite{AcharyaSZ21} provides an alternative, user-friendly approach to proving sample complexity bounds, which is directly analogous to the classical approaches for proving minimax lower bounds in statistics.  These approaches are less powerful in general, but yields tight bounds for certain statistical estimation tasks.

There are a wide variety of results pinning down the minimax sample complexity for estimation under a variety of distributional assumptions, including settings with heavy-tailed data~\cite{BarberD14,BunS19, KamathSU20, WangXDX20, KamathLZ21,HopkinsKM22}, mixtures of Gaussians~\cite{KamathSSU19, AdenAliAL21}, graphical models~\cite{ZhangKKW20}, and discrete distributions~\cite{DiakonikolasHS15}.
Additionally, \cite{LiuKKO21, LiuKO21,HopkinsKM22} give algorithms for mean estimation which are simultaneously private and robust.
Some recent works~\cite{LiuSYKR20, LevySAKKMS21} focus on estimation in a setting where a single person may contribute multiple samples (but privacy must still be provided with respect to all of a person's records).  One work~\cite{AventDK20} studies mean estimation in a hybrid model where some users require the more stringent local DP property, while other are content with central DP.

\mypar{Simultaneous and Subsequent Work.} The initial online posting of this work was accompanied by a flurry of simultaneous and independent papers featuring results on private covariance estimation.
Most directly comparable with our work are the simultaneous and independent results of Ashtiani and Liaw~\cite{AshtianiL21}, and Kothari, Manurangsi, and Velingker~\cite{KothariMV21}, which obtain computationally-efficient algorithms for private estimation of unbounded Gaussians.
Both are also robust to adversarial corruptions.
The techniques of all three works differ from each other, and thus offer multiple perspectives on how to address this problem. 
While our work employs ideas from private subspace recovery, \cite{AshtianiL21} uses a framework based on privately checking whether the results of several non-private estimates resemble each other (a la Propose-Test-Release~\cite{DworkL09}), and \cite{KothariMV21} privately adapts convex relaxations which have recently seen use in robust statistics.
Focusing on the dependence on the dimension $d$, our algorithm has sample complexity $\tilde O(d^{2.5})$, while~\cite{AshtianiL21} is $\tilde O(d^2)$ and~\cite{KothariMV21} is $\tilde O(d^8)$.

Also simultaneous to all these works, Tsfadia, Cohen, Kaplan, Mansour, and Stemmer~\cite{TsfadiaCKMS21} provided a framework similar to that of Ashtiani and Liaw's~\cite{AshtianiL21}, and applied it to the problem of mean estimation.
In a subsequent update, \cite{TsfadiaCKMS21} showed that their approach too can give an efficient (non-robust) private algorithm for estimation of unbounded Gaussian covariances.

Finally, simultaneous and independent to our work, Liu, Kong, and Oh~\cite{LiuKO21} give a framework for designing private estimators via connections with robustness. 
For the specific case of Gaussian covariance estimation, they give a computationally inefficient algorithm with similar guarantees as the work of Aden-Ali, Ashtiani, and Kamath~\cite{AdenAliAK21}.

\subsection{Organization of the Paper}

We start by giving standard background on differential privacy and concentration-of-measure in Section~\ref{sec:prelims}.
After that, we present the algorithm for private eigenvalue estimation in Section~\ref{sec:eigenvalues}.
It is followed by our extended subspace-recovery algorithm in Section~\ref{sec:subspace}.
Next is our main procedure, which performs private preconditioning, in Section~\ref{sec:preconditioner}.
Finally in Section~\ref{sec:final}, we put all our results together to present an algorithm to learn Gaussian covariance.
We describe the remaining subroutine for our main algorithms, the na\"ive estimator, in Appendix~\ref{sec:naive}.

\section{Preliminaries}\label{sec:prelims}

\subsection{Differential Privacy Preliminaries}

A \emph{dataset} $X = (X_1,\dots,X_n) \in \cX^n$ is a collection
of elements from some \emph{universe}.  We say that two datasets
$X,X' \in \cX^n$ are \emph{neighboring} if they differ on at most
a single entry, and denote this by $X \sim X'$.
\begin{defn}[Differential Privacy (DP)~\cite{DworkMNS06}] A randomized algorithm $M: \cX^n \rightarrow \cY$ satisfies \emph{$(\eps,\delta)$-differential privacy ($(\eps,\delta)$-DP)} if for every pair of neighboring datasets $X, X' \in \cX^n$,
$$
\forall Y \subseteq \cY~~~\pr{}{M(X) \in Y} \leq e^{\eps} \pr{}{M(X') \in Y} + \delta.
$$
\end{defn}

\noindent This definition is closed under post-processing
\begin{lem}[Post-Processing~\cite{DworkMNS06}]\label{lem:postprocessing}
    If $M : \cX^n \to \cY$ is $(\eps,\delta)$-DP and
    $P : \cY \to \cZ$ is any randomized function, then
    the algorithm $P \circ M$ is $(\eps,\delta)$-DP.
\end{lem}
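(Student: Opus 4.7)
The plan is to verify the $(\varepsilon,\delta)$-DP condition for $P \circ M$ directly from the definition, reducing to the DP guarantee for $M$ applied to a suitably chosen event in $\cY$. Fix neighboring datasets $X \sim X'$ and an arbitrary measurable set $Z \subseteq \cZ$; the goal is to bound $\Pr[P(M(X)) \in Z]$ by $e^{\varepsilon} \Pr[P(M(X')) \in Z] + \delta$.

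First I would handle the deterministic case as a warm-up: if $P$ is a deterministic function $\cY \to \cZ$, then the preimage $Y := P^{-1}(Z) = \{y \in \cY : P(y) \in Z\}$ is a subset of $\cY$, and $\{P(M(X)) \in Z\} = \{M(X) \in Y\}$. Applying the DP guarantee of $M$ to the event $Y$ immediately yields
\[
\Pr[P(M(X)) \in Z] = \Pr[M(X) \in Y] \leq e^{\varepsilon} \Pr[M(X') \in Y] + \delta = e^{\varepsilon} \Pr[P(M(X')) \in Z] + \delta.
\]

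Next I would extend this to randomized $P$ by representing $P$ via its internal randomness $R$, drawn from some distribution independent of $M$'s output. For each fixed value $r$ of the randomness, $P(\,\cdot\,;r) : \cY \to \cZ$ is a deterministic function, so the set $Y_r := \{y \in \cY : P(y;r) \in Z\}$ is a subset of $\cY$, and the deterministic case gives
\[
\Pr[M(X) \in Y_r] \leq e^{\varepsilon}\Pr[M(X') \in Y_r] + \delta.
\]
Integrating both sides over $r$ (using that $R$ is independent of $M(X)$ and of $M(X')$) and applying Fubini gives
\[
\Pr[P(M(X)) \in Z] = \EE_R[\Pr[M(X) \in Y_R]] \leq e^{\varepsilon}\,\EE_R[\Pr[M(X') \in Y_R]] + \delta = e^{\varepsilon}\Pr[P(M(X')) \in Z] + \delta,
\]
which is precisely the DP guarantee for $P \circ M$.

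The only real subtlety is a measure-theoretic one: to ensure that $Y_r$ is measurable and that the interchange of integrals is valid, one needs $P$'s randomization to be realizable as a product of $\cY$ and an auxiliary probability space with a jointly measurable map. This is standard and poses no genuine obstacle under the usual conventions (e.g., $\cY$ Polish and $P$ a Markov kernel); I would simply note this and proceed. No parameter in $(\varepsilon,\delta)$ is lost because the argument is a pointwise application of $M$'s DP guarantee followed by averaging.
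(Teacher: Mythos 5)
Your argument is correct and is the standard proof of post-processing: reduce to the deterministic case via preimages, then average over the post-processor's internal randomness, noting that the additive $\delta$ survives averaging unchanged. The paper itself states this lemma as a cited standard fact without proof, so there is nothing to compare against; your write-up matches the canonical argument from the literature.
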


A crucial property of all the variants of differential privacy is that they can be composed adaptively.  By adaptive composition, we mean a sequence of algorithms $M_1(X),\dots,M_T(X)$ where the algorithm $M_t(X)$ may also depend on the outcomes of the algorithms $M_1(X),\dots,M_{t-1}(X)$.
\begin{lem}[Composition of DP~\cite{DworkMNS06, DworkRV10, BunS16}] \label{lem:dpcomp}
    If $M$ is an adaptive composition of differentially
    private algorithms $M_1,\dots,M_T$, then the following
    all hold:
    \begin{enumerate}
        \item If $M_1,\dots,M_T$ are
            $(\eps_1,\delta_1),\dots,(\eps_T,\delta_T)$-DP
            then $M$ is $(\eps,\delta)$-DP for
            $\eps = \sum_t \eps_t$ and $\delta = \sum_t \delta_t$
        \item If $M_1,\dots,M_T$ are $(\eps_0,\delta_1),\dots,(\eps_0,\delta_T)$-DP
            for some $\eps_0 \leq 1$, then for every $\delta_0 > 0$,
            $M$ is $(\eps, \delta)$-DP for
            $$\eps = \eps_0 \cdot \sqrt{6 T \log(1/\delta_0)}~~~~\textrm{and}~~~~\delta = \delta_0 + \sum_t \delta_t$$
    \end{enumerate}
\end{lem}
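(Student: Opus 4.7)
The plan is to analyze both parts through the \emph{privacy loss random variable} (PLRV), defined for neighboring datasets $X \sim X'$ and an output $y$ of a mechanism $M_t$ as $L_t(y) = \log\bigl(\Pr[M_t(X) = y] / \Pr[M_t(X') = y]\bigr)$. A standard reformulation of $(\eps,\delta)$-DP, which I would invoke as a black box, is the ``two-event'' characterization: a mechanism is $(\eps,\delta)$-DP essentially if and only if for every pair of neighboring inputs there exist events $E, E'$ of probability at least $1-\delta$ such that, conditioned on these events, the mechanism satisfies pure $\eps$-DP exactly. This characterization is well-suited to composition because it reduces both parts of the claim to reasoning about the sum $L_1 + \dots + L_T$ of per-step PLRVs under pure DP, along an adaptively generated transcript.

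For Part 1, I would argue by a transcript-level union bound. Expand $\Pr[M(X) \in Y]$ as an iterated expectation along the transcript $(y_1,\dots,y_T)$. Apply the two-event reduction step by step: at each step, except on a ``bad'' event of probability at most $\delta_t$, the per-step conditional density ratio is bounded by $e^{\eps_t}$. Multiplying these bounds across the $T$ steps gives a worst-case density ratio of $\exp(\sum_t \eps_t)$, while the total mass on the union of the bad events is at most $\sum_t \delta_t$ by a union bound. This yields $(\sum_t \eps_t,\sum_t \delta_t)$-DP.

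For Part 2, I would first perform the same two-event reduction, which absorbs $\sum_t \delta_t$ into the final $\delta$ and leaves $T$ mechanisms that are each pure $(\eps_0, 0)$-DP, conditioned on the good events. Under pure $\eps_0$-DP, two classical facts about the PLRV hold: (i) $|L_t| \leq \eps_0$ almost surely, and (ii) $\EE[L_t \mid y_{<t}] \leq \eps_0(e^{\eps_0} - 1) \leq 2\eps_0^2$ when $\eps_0 \leq 1$. Since the centered PLRVs form a martingale difference sequence bounded in $[-2\eps_0, 2\eps_0]$, Azuma--Hoeffding applied to $\sum_t \bigl(L_t - \EE[L_t \mid y_{<t}]\bigr)$ gives concentration with Gaussian tails of scale $O(\eps_0 \sqrt{T})$. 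Summing the deterministic mean bound $2T\eps_0^2$ and the deviation and setting the tail probability to $\delta_0$ produces a PLRV bound of the form $c_1 T \eps_0^2 + c_2 \eps_0 \sqrt{T \log(1/\delta_0)}$, which for $\eps_0 \leq 1$ can be absorbed into the stated $\eps_0 \sqrt{6 T \log(1/\delta_0)}$ after optimizing the constants. Converting the PLRV tail back into $(\eps,\delta)$-DP using the PLRV characterization finishes the argument.

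The main obstacle is the bookkeeping in Part 2. First, one must argue that the two-event reduction is compatible with adaptivity: the ``good'' event at step $t$ must be chosen measurably with respect to the prior transcript so that the martingale structure of the PLRVs is preserved. Second, pinning down the precise constant $6$ inside the square root, rather than a looser constant, requires tightening the Azuma argument with a sub-Gaussian moment bound of the form $\EE[e^{\lambda (L_t - \EE L_t)} \mid y_{<t}] \leq \exp(\lambda^2 \eps_0^2 / 2)$ rather than the crude Hoeffding lemma, and then optimizing $\lambda$ against the desired deviation. This is the technically delicate step; the cited works \cite{DworkRV10, BunS16} carry out exactly this calculation.
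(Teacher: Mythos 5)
The paper offers no proof of this lemma: it is quoted verbatim (in the form popularized by~\cite{KamathLSU19}) as a known result of the cited references, so the only meaningful comparison is with the canonical argument of~\cite{DworkRV10}, and your sketch is essentially that argument — reduce approximate DP to pure DP via the two-event characterization, then control the summed privacy loss by a martingale concentration bound applied to the centered per-step losses, with the per-step mean bounded by $\eps_0(e^{\eps_0}-1)$. Part 1 and the overall structure of Part 2 are sound, and you correctly identify the two delicate points (choosing the good events measurably along the transcript, and tightening Azuma via Hoeffding's lemma on an interval of length $2\eps_0$ to get variance proxy $\eps_0^2$ per step). The one step I would not accept as written is the final absorption: the drift term $c_1 T\eps_0^2$ coming from $\sum_t \EE[L_t \mid y_{<t}]$ is \emph{not} dominated by $\eps_0\sqrt{6T\log(1/\delta_0)}$ merely because $\eps_0 \le 1$; one additionally needs $\eps_0\sqrt{T} = O\bigl(\sqrt{\log(1/\delta_0)}\bigr)$, i.e., the regime in which the total privacy loss is $O(\sqrt{\log(1/\delta_0)})$, which is exactly how the paper instantiates the lemma (per-step budget $\eps/\sqrt{6d\log(1/\delta)}$). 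This imprecision is inherited from the folklore form of the statement itself rather than introduced by you, but a self-contained proof must either add that hypothesis or keep the additive $T\eps_0(e^{\eps_0}-1)$ term explicit as in~\cite{DworkRV10}.
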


Note that the first property says that $(\eps,\delta)$-DP composes
linearly---the parameters simply add up.  The second property says
that $(\eps,\delta)$-DP actually composes sublinearly---the parameter
$\eps$ grows roughly with the square root of the number of steps in
the composition, provided we allow a small increase in $\delta$.

\subsubsection{Useful Differentially Private Mechanisms}
Our algorithms will extensively use the well known and standard Gaussian mechanism to ensure differential privacy.

\begin{defn}[$\ell_2$-Sensitivity]
Let $f : \cX^n \to \R^d$ be a function, its \emph{$\ell_2$-sensitivity} is
$$
\Delta_{f} = \max_{X \sim X' \in \cX^n} \| f(X) - f(X') \|_{2}
$$
\end{defn}

\begin{lem}[Gaussian Mechanism] \label{lem:gaussiandp}
    Let $f : \cX^n \to \R^d$ be a function
    with $\ell_2$-sensitivity $\Delta_{f}$.
    Then the Gaussian mechanism
    $$M(X) = f(X) + \cN\left(0,\frac{2 \Delta_{f}^2
        \ln(2/\delta)}{\eps^2} \cdot \id_{d \times d}\right)$$
    satisfies $(\eps,\delta)$-DP.
\end{lem}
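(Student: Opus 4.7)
The plan is to reduce the privacy claim to a tail bound on the pointwise privacy loss random variable and then carry out a one-dimensional Gaussian computation. First I would fix neighboring datasets $X \sim X'$ and set $v = f(X) - f(X')$, so that $\|v\|_2 \le \Delta_f$. The outputs $M(X)$ and $M(X')$ are then Gaussians on $\R^d$ with the same covariance $\sigma^2 \id$, where $\sigma^2 = 2\Delta_f^2 \ln(2/\delta)/\eps^2$, and with means shifted by $v$. By rotational symmetry of the spherical Gaussian noise, we may assume without loss of generality that $v$ lies along a single coordinate axis, so the analysis collapses to a one-dimensional shift of magnitude $\|v\|_2 \le \Delta_f$.

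Next, I would write down the privacy loss at an output $y$, namely $L(y) = \ln\bigl(\phi_\sigma(y-f(X))/\phi_\sigma(y-f(X'))\bigr)$ for $\phi_\sigma$ the density of $\cN(0,\sigma^2 \id)$. Expanding the squared norms in the Gaussian exponents and cancelling common terms gives $L(y) = (2\langle y - f(X), v\rangle + \|v\|^2)/(2\sigma^2)$. When $y$ is drawn from $M(X)$, i.e.\ $y = f(X) + Z$ for $Z \sim \cN(0,\sigma^2 \id)$, the inner product $\langle Z, v\rangle$ is a one-dimensional centered Gaussian with variance $\sigma^2 \|v\|^2$. Hence $L$ itself is a Gaussian random variable with mean $\|v\|^2/(2\sigma^2)$ and variance $\|v\|^2/\sigma^2$.

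The final step is to invoke the standard equivalence between $(\eps,\delta)$-DP and a tail bound on the privacy loss: it suffices to show $\Pr[L > \eps] \le \delta$, since partitioning the output space into $\{L > \eps\}$ and its complement recovers the $(\eps,\delta)$ inequality. Substituting the worst case $\|v\|_2 = \Delta_f$ and the stated $\sigma^2$, this becomes a single Gaussian tail bound of the form $\Pr[\cN(0,1) > t] \le e^{-t^2/2}$ applied at $t = \sqrt{2\ln(2/\delta)} - \eps/(2\sqrt{2\ln(2/\delta)})$. The main obstacle in presenting the proof is pinning down the constants: one must verify that the $\log(2/\delta)$ (rather than $\log(1/\delta)$) factor in $\sigma^2$ leaves enough slack to absorb the $\|v\|^2/(2\sigma^2)$ mean-shift of $L$ and still drive the Gaussian tail below $\delta$. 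This is a standard but constant-sensitive calculation, and once it goes through the result follows immediately.
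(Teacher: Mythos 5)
The paper states this lemma as a standard preliminary and gives no proof of its own, so there is nothing to compare against; your argument is the canonical one (as in the appendix of Dwork and Roth's monograph): reduce to a one-dimensional shift by rotational invariance, observe that the privacy loss $L$ is Gaussian with mean $\|v\|^2/(2\sigma^2)$ and variance $\|v\|^2/\sigma^2$, check that the worst case is $\|v\|_2=\Delta_f$ (the threshold $(\eps-\mu_L)/s_L = \eps\sigma/\|v\| - \|v\|/(2\sigma)$ is decreasing in $\|v\|$), and convert the tail bound $\Pr[L>\eps]\le\delta$ into $(\eps,\delta)$-DP by splitting any event over $\{L\le\eps\}$ and its complement. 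All of these steps are sound, and your computation of the threshold $t=\sqrt{2\ln(2/\delta)}-\eps/(2\sqrt{2\ln(2/\delta)})$ is right. The one point you should make explicit rather than defer: carrying out the tail bound gives $e^{-t^2/2}\le \tfrac{\delta}{2}e^{\eps/2}$, which is at most $\delta$ only when $\eps\le 2\ln 2$ (or a slightly larger range if you use $\Pr[\cN(0,1)>t]\le\tfrac12 e^{-t^2/2}$), and one also needs $t\ge 0$, i.e.\ $\eps\le 4\ln(2/\delta)$. This restriction to moderate $\eps$ is inherent to the classical Gaussian mechanism (the lemma as stated, with noise scaling as $1/\eps$, is in fact false for $\eps\gg\ln(1/\delta)$), and the paper's statement silently omits it; since the paper only ever invokes the mechanism with small per-step $\eps$, this does not affect anything downstream, but your proof should state the hypothesis under which the final inequality closes.
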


Next, we describe a tool to privately estimate histograms.
\begin{lem}[Stability-based Histograms~\cite{KorolovaKMN09,BunNS16,Vadhan17}]\label{lem:priv-hist}
    Let $(X_1,\dots,X_n)$ be samples in some data universe
    $U$, and let $\Omega = \{h_u\}_{u \subset U}$
    be a collection of disjoint histogram buckets over $U$.
    Then we have an $(\eps,\delta)$-DP
    histogram algorithm with the following guarantees:
    \begin{itemize}
        \item With probability at least $1-\beta$, the $\ell_\infty$ error is $O\left(\tfrac{\log(1/\delta\beta)}{\eps}\right)$.
        \item The algorithm runs in time $\poly\left(n,\log\left(\frac{1}{\eps\beta}\right)\right)$.
    \end{itemize}
\end{lem}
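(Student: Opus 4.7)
The plan is to analyze the classical stability-based histogram mechanism. The algorithm inspects the dataset once to compute, for each bucket $h_u$ that contains at least one sample, the count $c_u = |\{i : X_i \in h_u\}|$; the number of such nonempty buckets is at most $n$. It then draws independent Laplace noise $Z_u \sim \Lap(1/\eps)$, forms $\tilde c_u = c_u + Z_u$, picks a threshold $T = 1 + \tfrac{\ln(2/\delta)}{\eps}$, and outputs the pair $(u, \tilde c_u)$ only when $\tilde c_u \geq T$; all other buckets implicitly report $0$. The runtime claim is immediate from this description, as we touch each sample once, enumerate at most $n$ active buckets, and do constant work per bucket.

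For privacy, I would compare the output distributions on neighboring $X \sim X'$. At most two buckets $u,v$ have differing counts, each by at most $1$, and all other bucket counts are identical. The delicate case is a \emph{singleton bucket} $w$ that is nonempty under $X$ but empty under $X'$ (or vice versa): then $w$ is considered by the algorithm in one run but not the other, and the distributions of outputs can differ in a structural way. However, the threshold is chosen so that $\Pr[1 + \Lap(1/\eps) \geq T] \leq \delta/2$, so with probability $\geq 1 - \delta$ no such singleton bucket survives thresholding, and conditioned on that event the output distributions are two instances of the Laplace mechanism applied to count vectors of $\ell_1$-sensitivity at most $1$, which is $(\eps,0)$-DP. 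Post-processing (Lemma~\ref{lem:postprocessing}) handles the thresholding step, and the total privacy loss is $(\eps, \delta)$.

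For accuracy, I would invoke the Laplace tail bound $\Pr[|\Lap(1/\eps)| > t/\eps] = e^{-t}$, and union-bound over the at most $n$ perturbed buckets at failure level $\beta$, giving the uniform bound $\max_u |Z_u| = O(\log(n/\beta)/\eps)$ with probability $\geq 1 - \beta$. On this event, there are three kinds of buckets: (i)~buckets that are nonempty and released, where the error is just $|Z_u|$; (ii)~buckets that are nonempty but truncated because $\tilde c_u < T$, where the true count satisfies $c_u < T + |Z_u|$, so outputting $0$ costs at most $T + O(\log(n/\beta)/\eps) = O(\log(1/(\delta\beta))/\eps)$ error; and (iii)~buckets empty in the input, which contribute exactly zero error. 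Taking the maximum gives the claimed $\ell_\infty$ bound.

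The main obstacle is the privacy argument for singleton buckets; the accuracy argument and runtime are routine given the construction. Choosing $T$ so that the failure mass from the Laplace tail on a count of $1$ is at most $\delta/2$ is precisely the quantitative ingredient that ties the threshold to $\delta$ in the final bound.
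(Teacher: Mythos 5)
This lemma is imported from \cite{KorolovaKMN09,BunNS16,Vadhan17} and the paper gives no proof of it, so there is nothing in-paper to compare against; your reconstruction is the standard stability-based histogram argument and its overall structure (threshold the noisy counts of the nonempty buckets so that a singleton bucket survives with probability at most $\delta$, then treat the remaining randomness as a Laplace mechanism) is the right one. Two quantitative points deserve correction, though. First, under the substitution notion of neighboring datasets used in this paper, one sample moving from bucket $u$ to bucket $v$ changes two counts by one each, so the count vector has $\ell_1$-sensitivity $2$, not $1$; you need $\Lap(2/\eps)$ noise and a correspondingly adjusted threshold, which only affects constants. Relatedly, ``conditioned on that event the output distributions are two instances of the Laplace mechanism'' is not quite a valid privacy argument, since conditioning renormalizes the measure; the standard fix is the decomposition $\Pr[M(X)\in S]\le \Pr[M(X)\in S \wedge \text{no singleton bucket released}]+\delta/2$, bounding the first term against $\Pr[M(X')\in S]$ multiplicatively. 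Second, and more substantively, your union bound over the up-to-$n$ nonempty buckets yields $\ell_\infty$ error $O\big(\log(n/\beta)/\eps+\log(1/\delta)/\eps\big)$ rather than the displayed $O\big(\log(1/(\delta\beta))/\eps\big)$, and the $\log n$ is not removable for this mechanism: with $n$ singleton buckets and constant $\delta,\beta$, the maximum of $n$ i.i.d.\ Laplace noises is $\Theta(\log n/\eps)$, exceeds the threshold, and produces a released count with that much error. The clean $n$-free bound holds either as a per-bucket guarantee (no union bound) or under the usual convention $\delta\lesssim 1/n$, which absorbs $\log n$ into $\log(1/\delta)$; since this paper suppresses polylogarithmic factors and Algorithm~\ref{alg:eigenvalues} only uses which heavy buckets are returned, the discrepancy is harmless here, but as written your argument proves a marginally weaker statement than the one displayed.
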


Finally, we provide a tool to find an approximately
smallest  ball that contains all the points in the
dataset with high probability.
\begin{thm}[$\goodcenter$ from \cite{NissimSV16}]\label{thm:goodcenter}
    Let $X=(X_1,\dots,X_n) \in \RR^D$ be the dataset such
    that
    $$n \geq O\left(\frac{\sqrt{d}\cdot
        \polylog(D,\frac{1}{\eps},\frac{1}{\delta},\frac{1}{\beta})}{\eps}\right).$$
    Suppose the smallest ball in $\RR^D$ that contains all the points
    in $X$ has radius $R_{\opt}$. Then for all $\eps,\delta,\beta>0$,
    there exists an $(\eps,\delta)$-DP algorithm ($\goodcenter$)
    that takes $X,R_{\opt}$ as input, and outputs a point
    $c \in \RR^D$, such that $\ball{c}{CR_{\opt}\sqrt{\log n}}$ (for a universal
    constant $C$) contains at least $\tfrac{n}{2}$ points from $X$
    with probability at least $1-\beta$.
\end{thm}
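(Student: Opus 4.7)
The plan is to follow the strategy of \cite{NissimSV16}: reduce to a low-dimensional problem via Johnson–Lindenstrauss, locate a dense region privately using stability-based histograms (Lemma \ref{lem:priv-hist}), and lift the answer back to $\RR^D$.

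\textbf{Step 1 (dimension reduction).} Draw a data-independent random Johnson–Lindenstrauss matrix $\Phi \in \RR^{d' \times D}$ with $d' = \Theta(\log(n/\beta))$ and suitably scaled Gaussian entries. With probability at least $1-\beta/3$, $\Phi$ preserves all pairwise $\ell_2$-distances among the $n$ input points (and among those points and the optimal center) up to a factor of two. Because $\Phi$ is chosen without looking at $X$, computing the projections $Y_i = \Phi X_i$ incurs no privacy cost by Lemma \ref{lem:postprocessing}, and the projected dataset lies in a ball of radius at most $2R_{\opt}$ in $\RR^{d'}$.

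\textbf{Step 2 (iterative dense-region search).} In $\RR^{d'}$, iteratively tighten a bounding box around a dense subset of the projected points over $T = O(\log n)$ rounds. In each round, split the current box along each coordinate into two halves, producing $2^{d'} = \poly(n)$ sub-boxes, and invoke the stability-based histogram of Lemma \ref{lem:priv-hist} with per-round parameters $(\eps_0,\delta_0)$ where $\eps_0 = \Theta(\eps/\sqrt{T \log(1/\delta)})$ and $\delta_0 = \Theta(\delta/T)$. By Lemma \ref{lem:priv-hist} the $\ell_\infty$-error in each bucket count is $O(\log(1/(\delta_0\beta))/\eps_0)$, so we can reliably pick a sub-box whose true count is at least (current count) $-\; \tilde{O}(\sqrt{T}/\eps)$, and then restrict to it for the next round. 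Chaining the $T$ rounds by advanced composition (Lemma \ref{lem:dpcomp}) yields $(\eps,\delta)$-DP overall. The sample complexity bound $n \geq \tilde{O}(\sqrt{d}\cdot\polylog(\cdot)/\eps)$ is exactly what is required so that the cumulative histogram slack $T \cdot \tilde{O}(\sqrt{T}/\eps)$ stays below $n/2$. After $T$ rounds, the surviving sub-box still contains at least $n/2$ projected points and has side length $O(R_{\opt}/2^T)$, hence diameter $O(R_{\opt}\sqrt{d'}/2^T) = O(R_{\opt})$ in $\RR^{d'}$.

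\textbf{Step 3 (lifting).} Output as $c$ a sample $X_i$ whose projection $Y_i$ lies in the final sub-box; this can be extracted as post-processing of the last histogram output (or, with negligible extra privacy cost, via an exponential-mechanism-style private selection). By the JL guarantee, any two points whose projections are at distance $O(R_{\opt})$ in $\RR^{d'}$ are at distance $O(R_{\opt})$ in $\RR^D$. Allowing the $\sqrt{d'} = \Theta(\sqrt{\log n})$ blow-up incurred when converting a coordinate-wise box constraint in $\RR^{d'}$ into a Euclidean-ball constraint, we conclude that $\ball{c}{CR_{\opt}\sqrt{\log n}}$ in $\RR^D$ contains at least $n/2$ of the original points $X_j$.

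\textbf{Main obstacle.} The delicate part is the sample-complexity accounting: one must balance the per-round privacy budget (which shrinks by $\sqrt{T}$ through advanced composition) against the per-round histogram noise $\tilde{O}(1/\eps_0)$, and do so in a way that simultaneously (i) preserves at least $n/2$ points across all $T$ rounds, (ii) keeps the JL-round-trip and grid-reconstruction distortions within the target factor $C\sqrt{\log n}$, and (iii) matches the stated $\sqrt{d}\cdot\polylog$ sample complexity. All other ingredients—JL concentration, stability-based histograms, post-processing, and composition—are off-the-shelf.
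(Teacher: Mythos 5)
This theorem is not proved in the paper at all; it is imported verbatim from \cite{NissimSV16}, so there is no in-paper proof to compare against. Judged against the actual $\goodcenter$ construction, your sketch has the right opening move (Johnson--Lindenstrauss down to $O(\log n)$ dimensions, then a stability-based/histogram-style search for a dense region in the projected space), but it goes wrong in two places that are not just bookkeeping.

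First, Step 3 is not differentially private: you cannot output a raw data point $X_i$ as the center $c$, no matter how privately you \emph{select} its index --- the output itself reveals that individual's record exactly, and no exponential-mechanism wrapper fixes this. The real algorithm instead takes the identified cluster of points (which are guaranteed to lie in a small ball in $\RR^D$ by the JL round-trip), averages them in the \emph{original} $D$-dimensional space, and adds Gaussian noise; the noise vector has norm $\approx \sigma\sqrt{D}$, and requiring this to be $O(R_{\opt}\sqrt{\log n})$ is precisely where the $\sqrt{d}/\eps$ term in the sample complexity comes from. Your "main obstacle" paragraph attributes the $\sqrt{d}$ to the histogram slack over $T=O(\log n)$ rounds, but that accounting only ever produces $\polylog$ factors --- a sign that the step actually responsible for the dimension dependence is missing from your argument. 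Second, the counting in Step 2 is broken: splitting a box into $2^{d'}=\poly(n)$ sub-boxes and keeping the (noisily) heaviest one retains only a $2^{-d'}=1/\poly(n)$ fraction of the points per round, not "at least (current count) minus slack," so the claim that $n/2$ points survive all $T$ rounds does not follow. The construction in \cite{NissimSV16} avoids this by using locality-sensitive hashing in the projected space, so that a fixed dense cluster lands intact in a single hash bucket with constant probability, rather than by recursive box subdivision.
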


\subsection{Distribution Estimation Preliminaries}

In this work, our goal is to estimate some underlying distribution in total variation distance.
We will achieve this by estimating the parameters of the distribution, and we argue that a distribution from the class with said parameters will be accurate in total variation distance.
For a vector $x$, define $\|x\|_\Sigma = \|\Sigma^{-1/2}x\|_2$.
Similarly, for a matrix $X$, define $\|X\|_\Sigma = \|\Sigma^{-1/2}X\Sigma^{-1/2}\|_F$.
With these two norms in place, we have the following lemma, which is a combination of Corollaries 2.13 and 2.14 of~\cite{DiakonikolasKKLMS16}.
\begin{lem}
\label{lem:gaussian-tv}
Let $\alpha \geq 0$ be smaller than some absolute constant.
Suppose that $\|\mu - \hat \mu\|_\Sigma \leq \alpha$, and $\|\Sigma - \hat \Sigma\|_\Sigma \leq \alpha$, where $\mathcal{N}(\mu, \Sigma)$ is a Gaussian distribution in $\mathbb{R}^d$, $\hat \mu \in \mathbb{R}^d$, and $\Sigma \in \mathbb{R}^{d \times d}$ is a PSD matrix.
Then $\dtv(\mathcal{N}(\mu, \Sigma), \mathcal{N}(\hat \mu, \hat \Sigma)) \leq O(\alpha)$.
\end{lem}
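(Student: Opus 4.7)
The plan is to reduce the general statement to a standardized form and then bound the two error sources (mean and covariance) separately with a triangle inequality. First I would apply the affine transformation $x \mapsto \Sigma^{-1/2}(x - \mu)$. Since total variation distance is invariant under invertible affine maps, this reduces the problem to bounding $d_{\mathrm{TV}}(\mathcal{N}(0, \mathbb{I}), \mathcal{N}(\nu, \Lambda))$, where $\nu := \Sigma^{-1/2}(\hat\mu - \mu)$ and $\Lambda := \Sigma^{-1/2} \hat\Sigma \Sigma^{-1/2}$. By the definitions of the two weighted norms, the hypotheses translate exactly to $\|\nu\|_2 \leq \alpha$ and $\|\Lambda - \mathbb{I}\|_F \leq \alpha$.

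Next, by the triangle inequality,
\[
d_{\mathrm{TV}}(\mathcal{N}(0, \mathbb{I}), \mathcal{N}(\nu, \Lambda)) \leq d_{\mathrm{TV}}(\mathcal{N}(0, \mathbb{I}), \mathcal{N}(\nu, \mathbb{I})) + d_{\mathrm{TV}}(\mathcal{N}(\nu, \mathbb{I}), \mathcal{N}(\nu, \Lambda)).
\]
For the first term, use the closed form $\mathrm{KL}(\mathcal{N}(0,\mathbb{I}) \,\|\, \mathcal{N}(\nu,\mathbb{I})) = \tfrac{1}{2}\|\nu\|_2^2$ and Pinsker's inequality to conclude that this distance is at most $\tfrac{1}{\sqrt{2}} \|\nu\|_2 \leq O(\alpha)$. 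For the second term, translation invariance lets us drop the mean, leaving $d_{\mathrm{TV}}(\mathcal{N}(0, \mathbb{I}), \mathcal{N}(0, \Lambda))$.

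For the covariance term, I would again invoke Pinsker, using
\[
\mathrm{KL}(\mathcal{N}(0, \Lambda) \,\|\, \mathcal{N}(0, \mathbb{I})) = \tfrac{1}{2}\bigl(\mathrm{tr}(\Lambda - \mathbb{I}) - \log\det(\Lambda)\bigr).
\]
Writing $\Lambda = \mathbb{I} + E$ with $\|E\|_F \leq \alpha$ and $\|E\|_{\mathrm{op}} \leq \alpha$, a Taylor expansion of $\log\det$ around $\mathbb{I}$ gives $\log\det(\mathbb{I} + E) = \mathrm{tr}(E) - \tfrac{1}{2}\mathrm{tr}(E^2) + O(\|E\|_F^3)$, so the KL divergence equals $\tfrac{1}{4}\|E\|_F^2 + O(\|E\|_F^3) = O(\alpha^2)$. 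Pinsker then yields $d_{\mathrm{TV}}(\mathcal{N}(0,\mathbb{I}), \mathcal{N}(0,\Lambda)) \leq O(\alpha)$. Combining the two terms finishes the proof.

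The main obstacles are ensuring $\Lambda$ is actually positive definite (so that KL and $\log\det$ are defined), which holds as long as $\alpha$ is smaller than a suitable constant since $\|E\|_{\mathrm{op}} \leq \|E\|_F \leq \alpha$, and controlling the higher-order terms in the Taylor expansion of $\log\det$ uniformly over all such $E$. Once those technical points are in place, the argument is short. As the paper notes, the two ingredients (mean and covariance perturbation bounds for Gaussian TV distance) are standard and correspond to Corollaries~2.13 and~2.14 of~\cite{DiakonikolasKKLMS16}; my plan essentially reconstructs these and chains them.
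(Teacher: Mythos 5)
Your argument is correct and is essentially the standard proof of the two facts the paper invokes by citation (Corollaries 2.13 and 2.14 of \cite{DiakonikolasKKLMS16}): the paper gives no proof of its own, and your reduction to $\mathcal{N}(0,\mathbb{I})$ versus $\mathcal{N}(\nu,\Lambda)$, followed by the triangle inequality and Pinsker applied to the exact KL formulas, is exactly how those corollaries are established. The only points worth double-checking are the ones you already flag: $\Lambda \succ 0$ follows from $\|E\|_{\mathrm{op}} \leq \|E\|_F \leq \alpha < 1$, and the cubic remainder is controlled by $\sum_i |e_i|^3 \leq \|E\|_{\mathrm{op}} \|E\|_F^2$ over the eigenvalues $e_i$ of $E$.
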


\subsubsection{Useful Inequalities}

We will need several facts about Gaussians and Gaussian matrices.  Throughout this section, let $\GUE(\sigma^2)$ denote the distribution over $d \times d$ symmetric matrices $M$ where for all $i \leq j$, we have $M_{ij} \sim \cN (0, \sigma^2)$ i.i.d..
From basic random matrix theory, we have the following guarantee.
\begin{thm}[see e.g. \cite{Tao12} Corollary 2.3.6]\label{thm:GUE}
    For $d$ sufficiently large, there exist absolute constants $C, c > 0$ such that
    \[
        \pr{M \sim \GUE (\sigma^2) }{\| M \|_{2} > A \sigma \sqrt{d}} \leq C \exp (-c A d)
    \]
    for all $A \geq C$.
\end{thm}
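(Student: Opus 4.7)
\medskip

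\noindent\textbf{Proof proposal for Theorem~\ref{thm:GUE}.} The plan is to use the standard $\epsilon$-net-plus-Gaussian-concentration argument. Since $M$ is symmetric, its operator norm equals $\|M\|_2 = \sup_{x \in S^{d-1}} |x^{T} M x|$. By a volumetric bound, there is a $\tfrac{1}{4}$-net $\mathcal{N}$ of the unit sphere $S^{d-1}$ of size $|\mathcal{N}| \leq 9^d$, and a standard approximation argument gives
\[
\|M\|_2 \leq 2 \sup_{x \in \mathcal{N}} |x^{T} M x|.
\]
So it suffices to bound $|x^{T} M x|$ for fixed $x$ and then take a union bound over $\mathcal{N}$.

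Second, I would quantify the per-vector Gaussian tail. For fixed $x \in S^{d-1}$, expanding using the definition of $\GUE(\sigma^2)$ gives
\[
x^{T} M x = \sum_{i} M_{ii} x_i^2 + 2 \sum_{i<j} M_{ij} x_i x_j,
\]
which is a centered Gaussian with variance
\[
\sigma^2 \sum_i x_i^4 + 4 \sigma^2 \sum_{i<j} x_i^2 x_j^2 \;\leq\; 2 \sigma^2 \Bigl(\sum_i x_i^2\Bigr)^2 = 2\sigma^2.
\]
Hence standard Gaussian tail bounds give $\Pr[|x^{T} M x| > s] \leq 2 \exp(-s^2/(4\sigma^2))$ for every $s>0$.

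Third, union-bounding over $\mathcal{N}$ and substituting $s = A\sigma\sqrt{d}/2$ yields
\[
\Pr\bigl[\|M\|_2 > A\sigma\sqrt{d}\bigr] \;\leq\; 2 \cdot 9^d \cdot \exp\!\left(-\tfrac{A^2 d}{16}\right) \;=\; 2 \exp\!\left(d\bigl(\log 9 - \tfrac{A^2}{16}\bigr)\right).
\]
For $A \geq C$ with $C$ a sufficiently large absolute constant, the exponent is at most $-c A d$ for some $c > 0$ (using $A^2 \geq A$ in the regime $A \geq 1$), which gives the stated bound with appropriate $C$ and $c$.

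The main subtlety is really just bookkeeping: one has to pick the net granularity and the threshold $s$ so that the factor $9^d$ (or $5^d$, depending on the net constant) is absorbed by the Gaussian tail, and then massage $\exp(-A^2 d / \text{const})$ into the weaker $\exp(-c A d)$ advertised by the statement. Getting the sharp leading constant $2\sigma\sqrt{d}$ (the Wigner edge) would require the moment method or Tracy--Widom asymptotics, but since the theorem only asks for a bound of the form $A\sigma\sqrt{d}$ with $A$ above some absolute constant, the $\epsilon$-net approach is fully sufficient and is what I would write down.
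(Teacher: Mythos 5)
Your proof is correct. The paper does not prove Theorem~\ref{thm:GUE} at all --- it is imported directly from Tao's book (Corollary 2.3.6), and the argument given there for the upper tail is essentially the same $\epsilon$-net-plus-union-bound computation you wrote: your variance bound $\sigma^2\sum_i x_i^4 + 4\sigma^2\sum_{i<j}x_i^2x_j^2 \leq 2\sigma^2$, the factor-$2$ net approximation with a $9^d$-point $\tfrac14$-net, and the absorption of $\exp(d\log 9)$ into $\exp(-A^2d/16)$ for $A$ above an absolute constant all check out, and the final weakening of $\exp(-A^2d/\mathrm{const})$ to $\exp(-cAd)$ is exactly the form the theorem asserts.
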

\noindent
We also require the following, well known tail bound on quadratic forms on Gaussians.
\begin{thm}[Hanson-Wright Inequality (see e.g.~\cite{LaurentM00})]\label{thm:hanson-wright}
    Let $X \sim \cN (0, \id)$ and let $A$ be a $d \times d$ matrix.
    Then, for all $t > 0$, the following two bounds hold:
    \begin{align}
        &\pr{}{X^\top A X - \tr (A) \geq 2 \| A \|_F \sqrt{t} + 2 \| A \|_2 t} \leq \exp (-t) \label{eq:hw-ub} \\
        &\pr{}{X^\top A X - \tr (A) \leq -2 \| A \|_F \sqrt{t}} \leq \exp (-t)\label{eq:hw-lb}
    \end{align}
\end{thm}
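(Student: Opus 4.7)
The plan is to reduce the statement to the classical Laurent--Massart tail bound on a weighted sum of independent $\chi^2_1$ random variables. First, I would symmetrize: since $X^\top A X$ is a scalar, $X^\top A X = X^\top S X$ for $S := (A + A^\top)/2$, and the triangle inequality yields $\|S\|_F \leq \|A\|_F$ and $\|S\|_2 \leq \|A\|_2$, while $\tr(S) = \tr(A)$. Hence it suffices to establish both bounds under the assumption that $A$ is symmetric.

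Next, I would diagonalize. Write $S = U \Lambda U^\top$ with $U$ orthogonal and $\Lambda = \mathrm{diag}(\lambda_1, \ldots, \lambda_d)$. Setting $Y := U^\top X$, rotational invariance of $\cN(0,\id)$ gives $Y \sim \cN(0, \id)$, and $X^\top S X = \sum_i \lambda_i Y_i^2$. The relevant quantities become $\sum_i \lambda_i = \tr(A)$, $\sum_i \lambda_i^2 = \|S\|_F^2$, and $\max_i |\lambda_i| = \|S\|_2$, so the target inequalities reduce to Laurent--Massart style tail bounds on the weighted deviation $Z := \sum_i \lambda_i (Y_i^2 - 1)$.

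Finally, I would apply the Chernoff/mgf method. For a standard Gaussian $Y_i$, a direct computation yields $\EE[e^{\theta(Y_i^2 - 1)}] = e^{-\theta}(1 - 2\theta)^{-1/2}$ for $\theta < 1/2$, whose logarithm is bounded by $\theta^2/(1 - 2\theta)$. Independence then gives
\[
\log \EE\!\left[\exp\!\Bigl(\theta \sum_i \lambda_i (Y_i^2 - 1)\Bigr)\right] \;\leq\; \sum_i \frac{\theta^2 \lambda_i^2}{1 - 2\theta |\lambda_i|} \;\leq\; \frac{\theta^2 \|A\|_F^2}{1 - 2\theta \|A\|_2}
\]
whenever $0 < \theta < 1/(2\|A\|_2)$. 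Markov's inequality converts this into a Bernstein-type deviation bound on $Z$, and the step I expect to be the main bookkeeping obstacle is optimizing $\theta$ and inverting the resulting quadratic in the deviation to extract the clean additive form $2\|A\|_F \sqrt{t} + 2\|A\|_2 t$; the standard algebraic manipulation from Laurent--Massart's Lemma~1 handles this precisely. The lower tail~\eqref{eq:hw-lb} is strictly easier: because $-\tfrac{1}{2}\log(1 + 2\theta) \leq \theta^2$ holds for all $\theta > 0$ with no restriction, the same mgf calculation yields the sub-Gaussian estimate $\log \EE[\exp(-\theta Z)] \leq \theta^2 \|A\|_F^2$, whose optimization over $\theta$ produces the $2\|A\|_F \sqrt{t}$ tail with no $\|A\|_2 t$ correction.
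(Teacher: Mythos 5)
The paper does not actually prove this theorem --- it is quoted from Laurent and Massart --- and your argument is precisely the standard proof of their Lemma~1: symmetrize, diagonalize, bound the log-mgf of $Z=\sum_i\lambda_i(Y_i^2-1)$ by $\theta^2\|A\|_F^2/(1-2\theta\|A\|_2)$, and invert. The upper tail \eqref{eq:hw-ub} goes through exactly as you describe: for $\lambda_i>0$ one uses $-u-\tfrac12\log(1-2u)\le u^2/(1-2u)$ with $u=\theta\lambda_i\in(0,1/2)$ (guaranteed by $\theta<1/(2\|A\|_2)$), for $\lambda_i<0$ one gets the stronger bound $u^2$, and the Birg\'e--Massart inversion of $v\theta^2/(1-2c\theta)$ with $v=\|A\|_F^2$, $c=\|A\|_2$ yields exactly the threshold $2\|A\|_F\sqrt{t}+2\|A\|_2 t$. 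That part is complete and correct.

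The one genuine gap is in the lower tail. Your claim that $\log\EE[e^{-\theta Z}]\le\theta^2\|A\|_F^2$ for all $\theta>0$ rests on $\theta\lambda_i-\tfrac12\log(1+2\theta\lambda_i)\le\theta^2\lambda_i^2$ (note you dropped the linear term when you wrote the inequality), which is valid only when $\lambda_i\ge0$. If some eigenvalue of the symmetric part is negative, the corresponding factor of $\EE[e^{-\theta Z}]$ is $\EE[e^{\theta|\lambda_i|(Y_i^2-1)}]$, which is infinite for $\theta\ge1/(2|\lambda_i|)$ and is not sub-Gaussian, so the ``no restriction'' step fails. This is not a fixable bookkeeping issue: inequality \eqref{eq:hw-lb} as stated is false for general $A$. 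Take $d=1$ and $A=-1$: the event becomes $\{Y^2\ge1+2\sqrt{t}\}$, whose probability decays like $e^{-\sqrt{t}}$, which exceeds $e^{-t}$ for large $t$. The lower-tail bound therefore needs the hypothesis that $A$ (or its symmetric part) is PSD --- exactly the nonnegative-weights setting in which Laurent--Massart prove it, and the setting in which the paper would apply it --- and your write-up should state that hypothesis rather than claim \eqref{eq:hw-lb} for arbitrary $A$.
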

\noindent As a special case of the above inequality, we also have the following.
\begin{fact}[\cite{LaurentM00}]\label{fact:chi-squared}
    Fix $\beta > 0$, and let $X_1, \ldots, X_m \sim \cN (0, \sigma^2)$ be independent.
    Then
    \[
        \pr{}{\left| \frac{1}{m} \sum_{i = 1}^m X_i^2 - \sigma^2 \right| > 4 \sigma^2 \left( \sqrt{\frac{\log(1/\beta)}{m}} + \frac{2 \log(1/\beta)}{m} \right)} \leq \beta
    \]
\end{fact}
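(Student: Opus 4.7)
The plan is to reduce the statement to a direct application of the Hanson-Wright inequality (Theorem \ref{thm:hanson-wright}) by standardizing the Gaussians and choosing the quadratic form appropriately.

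First, I would normalize the variables: set $Y_i = X_i/\sigma$, so that $Y = (Y_1, \dots, Y_m) \sim \cN(0, \id_{m \times m})$, and observe that
$$
\frac{1}{m} \sum_{i=1}^m X_i^2 - \sigma^2 \;=\; \sigma^2 \left( Y^\top A Y - \tr(A) \right),
$$
where $A = \tfrac{1}{m}\id_{m \times m}$. For this choice of $A$ we have $\tr(A) = 1$, $\|A\|_F = 1/\sqrt{m}$, and $\|A\|_2 = 1/m$. Substituting into the upper and lower tail bounds \eqref{eq:hw-ub} and \eqref{eq:hw-lb} of Theorem \ref{thm:hanson-wright} gives, for any $t > 0$,
$$
\pr{}{Y^\top A Y - 1 \geq \tfrac{2\sqrt{t}}{\sqrt{m}} + \tfrac{2t}{m}} \leq e^{-t}, \qquad \pr{}{Y^\top A Y - 1 \leq -\tfrac{2\sqrt{t}}{\sqrt{m}}} \leq e^{-t}.
$$

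Next I would set $t = \log(2/\beta)$ and take a union bound over the two one-sided events, yielding
$$
\pr{}{\left| Y^\top A Y - 1 \right| \geq 2\sqrt{\tfrac{\log(2/\beta)}{m}} + \tfrac{2\log(2/\beta)}{m}} \leq \beta.
$$
Multiplying through by $\sigma^2$ converts this into the desired deviation bound for $\tfrac{1}{m}\sum_i X_i^2 - \sigma^2$.

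Finally, to land on the exact constants stated in the fact, I would absorb $\log(2/\beta) \leq 2\log(1/\beta)$ (valid for $\beta \leq 1/2$; otherwise the bound is vacuous after adjusting constants) and use $2\sqrt{2} \leq 4$ to replace the leading $\sqrt{\log(2/\beta)/m}$ term by $4\sqrt{\log(1/\beta)/m}$, and similarly enlarge the linear-in-$t$ term to $4 \cdot \tfrac{2\log(1/\beta)}{m}$. This yields the form stated in Fact \ref{fact:chi-squared}. There is no real obstacle here, just bookkeeping with constants; the only mild subtlety is keeping track of the asymmetry between the upper and lower Hanson-Wright bounds so that the subgaussian term $\sqrt{\log(1/\beta)/m}$ and the subexponential term $\log(1/\beta)/m$ both appear with the right constants after the union bound.
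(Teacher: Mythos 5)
Your proposal is correct and matches the paper's intent exactly: the paper states Fact~\ref{fact:chi-squared} without proof as ``a special case of the above inequality,'' i.e.\ Hanson--Wright applied with $A = \tfrac{1}{m}\id_{m\times m}$ after standardizing, which is precisely your reduction. The constant bookkeeping via $t=\log(2/\beta)$ and $\log(2/\beta)\le 2\log(1/\beta)$ is sound (with the minor caveat about $\beta$ close to $1$ that you already flag).
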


Now, we state an inequality bounding the eigenvalues
of sum of two matrices.
\begin{lem}[Weyl's Inequality]\label{lem:weyl}
    Let $M,N,R$ be $d\times d$ Hermitian matrices, such
    that $M=N+R$. Then for each $1 \leq i \leq d$,
    $$\lambda_i(N) + \lambda_d(R) \leq \lambda_i(M) \leq \lambda_i(N) + \lambda_1(R).$$
\end{lem}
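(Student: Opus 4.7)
The plan is to deduce Weyl's inequality from the Courant–Fischer min-max characterization of eigenvalues of Hermitian matrices. Recall that the paper fixes the convention $\lambda_1(A)\geq \lambda_2(A)\geq\dots\geq\lambda_d(A)$, and that for any $d\times d$ Hermitian matrix $A$,
\[
\lambda_i(A) \;=\; \max_{\substack{V\subseteq\mathbb{C}^d\\ \dim V = i}} \;\min_{\substack{x\in V\\ \|x\|=1}} x^{*}Ax.
\]
This identity is the only nontrivial ingredient; once it is in hand the inequality is a one-line manipulation in two directions.

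\textbf{Upper bound.} I would first observe that for every unit vector $x\in\mathbb{C}^d$, the Rayleigh quotient of $R$ satisfies $x^{*}Rx\leq \lambda_1(R)$. Hence for every $i$-dimensional subspace $V$,
\[
\min_{\substack{x\in V\\ \|x\|=1}} x^{*}Mx \;=\; \min_{\substack{x\in V\\ \|x\|=1}} \bigl(x^{*}Nx + x^{*}Rx\bigr) \;\leq\; \min_{\substack{x\in V\\ \|x\|=1}} x^{*}Nx \;+\; \lambda_1(R).
\]
Taking the maximum over all $i$-dimensional $V$ on both sides and applying the Courant–Fischer formula yields $\lambda_i(M)\leq \lambda_i(N)+\lambda_1(R)$.

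\textbf{Lower bound.} Symmetrically, for every unit $x$ we have $x^{*}Rx\geq \lambda_d(R)$, so for every $i$-dimensional subspace $V$,
\[
\min_{\substack{x\in V\\ \|x\|=1}} x^{*}Mx \;\geq\; \min_{\substack{x\in V\\ \|x\|=1}} x^{*}Nx \;+\; \lambda_d(R).
\]
Again maximizing over $V$ gives $\lambda_i(M)\geq \lambda_i(N)+\lambda_d(R)$, which combined with the upper bound proves the stated sandwich.

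\textbf{Main obstacle.} There is essentially no technical obstacle; the only point that deserves care is the eigenvalue ordering convention, which must agree with how $\lambda_1$ and $\lambda_d$ are used elsewhere in the paper (largest and smallest, respectively). If one prefers to avoid invoking the min-max principle as a black box, one can alternatively give a short direct argument by intersecting the $i$-dimensional top-eigenspace of $N$ with the $(d-i+1)$-dimensional span of eigenvectors of $M$ for $\lambda_i(M),\lambda_{i+1}(M),\ldots,\lambda_d(M)$; the intersection is nonempty by dimension counting, and evaluating $x^{*}Mx = x^{*}Nx + x^{*}Rx$ on a unit vector in this intersection yields both inequalities at once.
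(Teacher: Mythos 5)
Your proof is correct. The paper states Weyl's inequality as a standard fact and gives no proof of its own, so there is nothing to compare against; your Courant--Fischer argument is the classical textbook derivation, and both directions are handled properly with the right ordering convention ($\lambda_1$ largest, $\lambda_d$ smallest). One tiny remark on your alternative sketch at the end: the single subspace intersection you describe yields only the lower bound $\lambda_i(M)\ge\lambda_i(N)+\lambda_d(R)$ directly; the upper bound then follows by applying the same argument to $N=M+(-R)$, so it is not quite ``both inequalities at once,'' though this does not affect your main proof.
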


\noindent In order to prove accuracy, we will use
the following standard tail bounds for Gaussian random
variables.

\begin{lem}\label{lem:gaussian-error}
    If $Z \sim \cN(0,\sigma^2)$ then for every $t > 0$,
    $\pr{}{|Z| > t\sigma} \leq 2e^{-t^2/2}.$
\end{lem}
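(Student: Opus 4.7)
The plan is to reduce to the standard normal case and then apply a Chernoff-style argument based on the moment generating function. Writing $Z = \sigma W$ with $W \sim \cN(0,1)$, the event $\{|Z| > t\sigma\}$ coincides with $\{|W| > t\}$, so it suffices to prove $\pr{}{|W| > t} \leq 2 e^{-t^2/2}$ for all $t > 0$.

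For the one-sided tail, I would use the fact that the moment generating function of $W$ is $\EE[e^{\lambda W}] = e^{\lambda^2/2}$. By Markov's inequality applied to $e^{\lambda W}$, for any $\lambda > 0$,
\[
\pr{}{W > t} \;\leq\; e^{-\lambda t} \EE[e^{\lambda W}] \;=\; e^{-\lambda t + \lambda^2/2}.
\]
Optimizing the right-hand side over $\lambda > 0$ by setting $\lambda = t$ gives $\pr{}{W > t} \leq e^{-t^2/2}$.

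To close the argument, I would invoke the symmetry of the standard normal: $-W$ has the same distribution as $W$, so $\pr{}{W < -t} = \pr{}{W > t} \leq e^{-t^2/2}$. A union bound on the two tail events then yields $\pr{}{|W| > t} \leq 2 e^{-t^2/2}$, and reverting to $Z$ completes the proof.

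There is essentially no obstacle here: this is a textbook Chernoff computation for Gaussians, and every step is a direct calculation. The only minor care is to ensure the optimization over $\lambda$ is valid (which requires $t > 0$, as given in the hypothesis) and to invoke symmetry explicitly before the union bound.
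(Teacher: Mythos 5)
Your proof is correct: the reduction to the standard normal, the Chernoff/MGF bound optimized at $\lambda = t$, and the symmetry-plus-union-bound step together give exactly the stated inequality. The paper cites this as a standard fact without proof, and your argument is the canonical one, so there is nothing to compare beyond noting that you have filled in the omitted routine calculation correctly.
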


\ifnum\epsdeltastuff=1
Another simple corollary of this fact is the following.
\begin{coro}\label{cor:hanson-wright-extreme-lb}
    Let $X \sim \cN (0, 1) \in \R^d$ and let $A$ be a $d \times d$ PSD matrix.
    Then, there exists a universal constant $C \leq 10000$ so that
    \[
        \Pr \left[ X^\top A X < \frac{1}{C} \tr (A) \right] < \frac{1}{10} \; .
    \]
\end{coro}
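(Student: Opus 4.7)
The approach is to diagonalize $A$ and then case-split on whether the spectrum of $A$ is dominated by a single large eigenvalue or is spread out. Writing the eigendecomposition $A = \sum_{i=1}^d \lambda_i v_i v_i^\top$ with $\lambda_1 \geq \cdots \geq \lambda_d \geq 0$ (non-negativity uses PSDness of $A$) in an orthonormal basis $\{v_i\}$, and setting $Y_i := v_i^\top X$, rotational invariance of $\cN(0,\id)$ gives $(Y_1,\ldots,Y_d) \sim \cN(0,\id)$, so $X^\top A X = \sum_i \lambda_i Y_i^2$. I would then fix a threshold $\gamma = 1/40$ and split on whether $\lambda_1 \geq \gamma\,\tr(A)$ or $\lambda_1 < \gamma\,\tr(A)$.

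In the first case, I would throw away all but the top term: since every $\lambda_i \geq 0$, $X^\top A X \geq \lambda_1 Y_1^2$, so the event $\{X^\top A X < \tr(A)/C\}$ is contained in $\{Y_1^2 < 1/(C\gamma)\}$. Because the standard normal density is bounded by $1/\sqrt{2\pi}$, the elementary estimate $\Pr[Y_1^2 \leq s] \leq \sqrt{2s/\pi}$ holds for every $s > 0$; plugging in $s = 1/(C\gamma)$ with $\gamma = 1/40$ and $C = 10000$ gives a value strictly less than $1/10$.

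In the second case, the condition $\lambda_1 < \gamma\,\tr(A)$ forces $\fnorm{A}^2 = \sum_i \lambda_i^2 \leq \lambda_1 \sum_i \lambda_i < \gamma\,\tr(A)^2$, so $\fnorm{A} < \sqrt{\gamma}\,\tr(A)$. I would apply the Hanson-Wright lower tail (\ref{eq:hw-lb}) with $t = \log 10$, obtaining
\[
\Pr\!\left[X^\top A X \leq \tr(A) - 2\fnorm{A}\sqrt{\log 10}\right] \leq \tfrac{1}{10}.
\]
Then $\tr(A) - 2\fnorm{A}\sqrt{\log 10} > \tr(A)\bigl(1 - 2\sqrt{\gamma \log 10}\bigr) > \tfrac{1}{2}\tr(A) > \tr(A)/C$ for $\gamma = 1/40$ and $C = 10000$, completing this case.

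The main conceptual obstacle is realizing that a direct application of Hanson-Wright to $X^\top A X$ is \emph{not} enough on its own: when $A$ is rank one, $\fnorm{A} = \tr(A)$, and the Hanson-Wright slack $2\fnorm{A}\sqrt{t}$ swamps $\tr(A)$ for any $t$ large enough to give probability $< 1/10$. That regime forces the use of the density argument on a single $\chi^2_1$ random variable, while the ``spread'' regime is exactly where Hanson-Wright gives meaningful concentration; splitting on $\lambda_1/\tr(A)$ is the cleanest way to hand each regime to the tool that works for it.
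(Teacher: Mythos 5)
Your argument is correct, and the case split is exactly the right idea: a single application of Hanson--Wright cannot work when $A$ is close to rank one (where $\fnorm{A}\approx\tr(A)$), so you must hand that regime to the anti-concentration of a single $\chi^2_1$ variable, and your observation that $\lambda_1<\gamma\,\tr(A)$ forces $\fnorm{A}<\sqrt{\gamma}\,\tr(A)$ is precisely what makes the lower tail \eqref{eq:hw-lb} effective in the complementary regime. I cannot compare this against the paper's own argument, because the corollary sits inside a disabled conditional and the appendix section its proof is deferred to does not appear in the source; but your two-regime proof is the standard route and the numerics check out ($\gamma=1/40$, $C=10000$ gives roughly $0.051$ in the first case and $1-2\sqrt{\gamma\log 10}>1/2$ in the second). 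Two cosmetic points you should tidy up: (i) with $t=\log 10$ the Hanson--Wright bound yields probability $\le 1/10$ rather than the strict $<1/10$ demanded by the statement --- take $t=\log 11$ instead, which still satisfies $2\sqrt{\gamma\log 11}<1$; and (ii) dispose of the degenerate case $\tr(A)=0$ (i.e.\ $A=0$) separately, since there the event $X^\top A X<\tr(A)/C$ is empty and your division by $\tr(A)$ in Case 1 is vacuous.
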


\noindent The proof of this corollary appears in Section~\ref{sec:hanson-wright-extreme-lb}.
\fi

\subsubsection{Deterministic Regularity Conditions for Gaussians}

We will rely on certain regularity properties of i.i.d.\ samples from a Gaussian.
These are standard concentration inequalities, and a reference for these facts is Section 4 of~\cite{DiakonikolasKKLMS16}.
\begin{fact}\label{fact:gaussian-facts}
    Let $X_1, \ldots, X_n \sim \cN (0, \Sigma)$ i.i.d.\ for
    $\kappa_1 \id \preceq \Sigma \preceq \kappa_2 \id$.
    Let $Y_i = \Sigma^{-1/2} X_i$ and let
    \[
        \Sigmahat_Y = \frac{1}{n} \sum_{i = 1}^n Y_i Y_i^\top
    \]
    Then for every $\beta > 0$, the following conditions
    hold except with probability $1-O(\beta)$.
    \begin{align}
        &\forall i \in [n]~~~\| Y_i \|_2^2 \leq O\left( d \log (n / \beta) \right) \label{eq:cov-cond1} \\
        &\left( 1 - O \left( \sqrt{\frac{d + \log(1/\beta)}{n}} \right) \right) \cdot \id \preceq  \Sigmahat_Y \preceq \left( 1 + O \left( \sqrt{\frac{d + \log(1/\beta)}{n}} \right) \right) \cdot \id \label{eq:cov-cond2} \\
        &\left\| \id - \Sigmahat_Y \right\|_F \leq O \left( \sqrt{\frac{d^2 + \log(1/\beta)}{n}} \right) \label{eq:cov-cond3}
    \end{align}
\end{fact}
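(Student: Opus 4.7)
The plan is to reduce everything to concentration for isotropic Gaussians by noting that the whitened samples $Y_i = \Sigma^{-1/2} X_i$ are i.i.d.\ from $\cN(0, \id)$. This reduction is exactly why the three bounds are independent of $\kappa_1$ and $\kappa_2$, and the rest of the proof uses only the tools already listed in this subsection.

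For~\eqref{eq:cov-cond1}, I would apply the Hanson-Wright upper tail (Theorem~\ref{thm:hanson-wright}, inequality~\eqref{eq:hw-ub}) with $A = \id_d$, so that $\tr(A) = d$, $\|A\|_F = \sqrt{d}$, and $\|A\|_2 = 1$. Choosing $t = \log(n/\beta)$ gives $\|Y_i\|_2^2 \leq d + 2\sqrt{dt} + 2t = O(d + \log(n/\beta))$ with probability at least $1 - \beta/n$, and a union bound over $i \in [n]$ delivers the stated bound simultaneously for all samples.

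For~\eqref{eq:cov-cond2}, I would use a standard $1/2$-net argument. Fix a $1/2$-net $\mathcal{N}$ of the unit sphere in $\R^d$ with $|\mathcal{N}| \leq 5^d$. For any fixed unit vector $v$, since $v^\top Y_i \sim \cN(0,1)$, the quantity $v^\top \Sigmahat_Y v = \frac{1}{n}\sum_{i=1}^n (v^\top Y_i)^2$ is the mean of $n$ i.i.d.\ $\chi^2_1$ random variables, to which Fact~\ref{fact:chi-squared} applies. Invoking it with failure probability $\beta/5^d$ and taking a union bound over $\mathcal{N}$ yields $|v^\top \Sigmahat_Y v - 1| \leq O(\sqrt{(d + \log(1/\beta))/n})$ for every $v \in \mathcal{N}$, and the standard inequality $\|\Sigmahat_Y - \id\|_2 \leq 2 \max_{v \in \mathcal{N}} |v^\top (\Sigmahat_Y - \id) v|$ extends this bound from the net to all unit vectors.

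For~\eqref{eq:cov-cond3}, a direct moment computation gives $\EE \|\Sigmahat_Y - \id\|_F^2 = (d^2 + d)/n$, since the off-diagonal entries of $\Sigmahat_Y$ have variance $1/n$ and the diagonal entries have variance $2/n$. The main obstacle is concentration around this mean, since $\|\Sigmahat_Y - \id\|_F^2$ is a degree-four polynomial in the Gaussian coordinates. My preferred route is to vectorize the full sample as $y \in \R^{nd}$, express $\|\Sigmahat_Y - \id\|_F^2$ as a quadratic form in appropriate tensor monomials of $y$, and apply Hanson-Wright for Gaussian chaos of order two; this adds a deviation of $\tilde{O}(\sqrt{\log(1/\beta)/n})$ on top of the expectation and yields~\eqref{eq:cov-cond3} after taking square roots. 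Alternatively, the exact result can be extracted from Section~4 of~\cite{DiakonikolasKKLMS16}, which the authors already cite as the reference for this fact.
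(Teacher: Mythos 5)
The paper does not actually prove this fact: it is stated as a collection of ``standard concentration inequalities'' with a pointer to Section~4 of \cite{DiakonikolasKKLMS16}, so any self-contained argument is necessarily a different route. Your treatments of \eqref{eq:cov-cond1} and \eqref{eq:cov-cond2} are correct and standard: Hanson--Wright with $A=\id$ plus a union bound over $i\in[n]$ for the first, and a net argument combined with Fact~\ref{fact:chi-squared} for the second. One small slip in the latter: the comparison $\|\Sigmahat_Y-\id\|_2\le\frac{1}{1-2\epsilon}\max_{v}|v^\top(\Sigmahat_Y-\id)v|$ degenerates exactly at $\epsilon=1/2$, so you should take a $1/4$-net (of size at most $9^d$) to get the factor of $2$ you invoke; this only changes constants.

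The genuine gap is in your concentration step for \eqref{eq:cov-cond3}. The expectation computation $\EE\|\Sigmahat_Y-\id\|_F^2=(d^2+d)/n$ is right, but $\|\Sigmahat_Y-\id\|_F^2$ is a degree-four polynomial in the underlying Gaussian coordinates, i.e.\ an order-four chaos, and rewriting it as a quadratic form in the monomials $Y_{ij}Y_{ik}$ does not bring it within the scope of Theorem~\ref{thm:hanson-wright}: those monomials are neither Gaussian nor independent nor subgaussian (they are correlated and sub-exponential), so ``Hanson--Wright for order-two chaos'' does not apply to that quadratic form as stated. The clean fix is to avoid squaring altogether: write $\|\Sigmahat_Y-\id\|_F=\sup_U \tr\bigl(U(\Sigmahat_Y-\id)\bigr)$, the supremum over symmetric $U$ with $\|U\|_F=1$. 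For fixed $U$, $\tr(U\Sigmahat_Y)=\frac{1}{n}\sum_i Y_i^\top U Y_i$ is a genuine Gaussian quadratic form in the concatenated vector $(Y_1,\dots,Y_n)\in\R^{nd}$ with block-diagonal matrix $\frac{1}{n}\mathrm{diag}(U,\dots,U)$, whose Frobenius norm is $1/\sqrt{n}$ and whose spectral norm is at most $1/n$. Theorem~\ref{thm:hanson-wright} with $t=\Theta(d^2+\log(1/\beta))$ then gives a deviation of $O(\sqrt{(d^2+\log(1/\beta))/n})$ for each fixed $U$ when $n=\Omega(d^2+\log(1/\beta))$ (the only regime in which the bound is nontrivial), and a union bound over a $1/4$-net of the $O(d^2)$-dimensional Frobenius unit sphere of symmetric matrices finishes the argument. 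Your fallback of simply citing \cite{DiakonikolasKKLMS16} is, of course, exactly what the paper itself does.
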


\noindent We now note some simple consequences of these conditions.
These inequalities follow from simple linear algebra and we omit their proof for conciseness.
\begin{lem}
\label{lem:rotations-cov-conds}
Let $Y_1, \ldots, Y_n$ satisfy \eqref{eq:cov-cond1}--\eqref{eq:cov-cond3}.
Fix $M \succ 0$, and for all $i = 1, \ldots, n$, let $Z_i = M^{1/2} Y_i$, and let $\wh\Sigma_{Z} = \frac{1}{n} \sum_{i=1}^{n} Z_i Z_i^\top$.  Let $\kappa'$ be the top eigenvalue of $M$.
Then
\begin{align*}
&\forall i \in [n]~~~\| Z_i \|_2^2 \leq O\left(\kappa' d \log (n / \beta) \right) \\
&\left( 1 - O \left( \sqrt{\frac{d + \log(1/\beta)}{n}} \right) \right) \cdot M \preceq \wh\Sigma_{Z} \preceq \left( 1 + O \left( \sqrt{\frac{d + \log(1/\beta)}{n}} \right) \right) \cdot M \\
&\left\| M - \wh\Sigma_Z \right\|_M \leq O \left( \sqrt{\frac{d^2 + \log(1/\beta)}{n}} \right)
\end{align*}
\end{lem}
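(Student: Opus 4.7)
The plan is to reduce everything to the three given regularity conditions (\ref{eq:cov-cond1})--(\ref{eq:cov-cond3}) on $Y_1,\dots,Y_n$ by exploiting the simple factorization $\hat\Sigma_Z = M^{1/2}\hat\Sigma_Y M^{1/2}$, which follows directly from $Z_i = M^{1/2} Y_i$ and the linearity of the empirical covariance. Each of the three claims will then fall out of a short linear-algebra manipulation.

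First, for the per-sample norm bound, I would compute $\|Z_i\|_2^2 = Y_i^\top M Y_i \leq \lambda_{\max}(M)\cdot \|Y_i\|_2^2 = \kappa'\|Y_i\|_2^2$, and then invoke (\ref{eq:cov-cond1}) to get $O(\kappa' d \log(n/\beta))$. For the sandwich bound on $\hat\Sigma_Z$, I would start from (\ref{eq:cov-cond2}), which gives
\[
\left(1 - O\!\left(\sqrt{\tfrac{d + \log(1/\beta)}{n}}\right)\right)\id \;\preceq\; \hat\Sigma_Y \;\preceq\; \left(1 + O\!\left(\sqrt{\tfrac{d + \log(1/\beta)}{n}}\right)\right)\id,
\]
and conjugate by $M^{1/2}$ on both sides; since conjugation by a PSD matrix preserves the Loewner order, this yields the desired sandwich of $\hat\Sigma_Z = M^{1/2}\hat\Sigma_Y M^{1/2}$ between $(1\pm O(\cdot))M$.

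For the Mahalanobis-Frobenius bound, I would unwind the norm: by definition,
\[
\|M - \hat\Sigma_Z\|_M \;=\; \bigl\|M^{-1/2}(M - \hat\Sigma_Z)M^{-1/2}\bigr\|_F \;=\; \bigl\|\id - M^{-1/2}\hat\Sigma_Z M^{-1/2}\bigr\|_F.
\]
Substituting $\hat\Sigma_Z = M^{1/2}\hat\Sigma_Y M^{1/2}$ collapses the inner matrix to $\hat\Sigma_Y$, so the quantity equals $\|\id - \hat\Sigma_Y\|_F$, which is bounded by (\ref{eq:cov-cond3}).

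None of these steps is an obstacle — the only thing to be mildly careful about is ensuring that $M^{1/2}$ and $M^{-1/2}$ are well-defined and that the spectral/Frobenius manipulations preserve the order and norm as claimed, which follow immediately from $M \succ 0$. This is presumably why the authors chose to omit the proof; the lemma is a bookkeeping statement that lifts the standard Gaussian regularity conditions from the isotropized samples $Y_i$ to the rotated samples $Z_i = M^{1/2}Y_i$.
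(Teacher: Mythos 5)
Your proposal is correct and is precisely the ``simple linear algebra'' argument the paper alludes to when it omits the proof: the factorization $\wh\Sigma_Z = M^{1/2}\wh\Sigma_Y M^{1/2}$, the bound $Y_i^\top M Y_i \leq \kappa' \|Y_i\|_2^2$, conjugation preserving the Loewner order, and the cancellation inside $\|\cdot\|_M$ are exactly the intended steps. Nothing is missing.
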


\section{Eigenvalue Estimation}\label{sec:eigenvalues}

In this section, we present an algorithm that estimates the eigenvalues of a covariance matrix of a Gaussian distribution up to a constant factor, under the constraint of approximate differential privacy. This algorithm's function is important for the following sections, since it helps us overcome the issue that we have no prior bounds on the eigenvalues, as well as identify gaps between them. The algorithm performs a subsample-and-aggregate process. The samples are split into $t$ subsets and for each of them, the eigenvalues of the empirical covariance are computed. Denoting the $i$-th eigenvalue (in decreasing order of magnitude) of the $j$-th subsample by $\lambda_i^j$, for each $i$, we construct stability-based histograms and output an estimate of $\lambda_i$ based on the bucket where $\lambda_i^j$ tend to concentrate most.

\begin{algorithm}[H]
\label{alg:eigenvalues}
\caption{Differentially Private
    $\DPEE_{\eps, \delta, \beta}(X)$}
\KwIn{Samples $X_1,\dots,X_{n} \in \R^d$.
    Parameters $\eps, \delta, \beta > 0$.}
\KwOut{Noisy eigenvalues of $X$: $(\hat{\lambda}_1,\dots,\hat{\lambda}_d) \in \R^{d}$.}
\vspace{5pt}

Set parameters:
    $t \gets \tfrac{C_1\log(d/\delta\beta)}{\eps}$ \qquad $m \gets \lfloor n/t \rfloor$
\vspace{5pt}

Split $X$ into $t$ datasets of size $m$: $X^1,\dots,X^t$.
\vspace{5pt}

\tcp{Estimate the eigenvalues via DP Histograms.}
\For{$i \gets 1,\dots,d$}{
    \For{$j \gets 1,\dots,t$}{
        Let $\lambda_i^j$ be the $i$-th eigenvalue of $\tfrac{1}{m}\cdot X^{j \top}X^j$.
    }
    Divide $[0,\infty)$ into $\Omega\gets
        \{\dots,[1/\sqrt{2},1/2^{1/4})[1/2^{1/4},1)[1,2^{1/4}),[2^{1/4},\sqrt{2}),\dots\}
        \cup \{[0,0]\}$.\\
    Run $\left(\tfrac{\eps}{\sqrt{6 d \log(1/\delta)}},\tfrac{\delta}{d + 1}\right)$-DP histogram
        on all $\lambda_i^j$ over $\Omega$.\\
    \If{no bucket is returned}{
        \Return $\bot$.
    }
    Let $[l,r]$ be a non-empty bucket returned.\\
    Set $\bar{\lambda}_i \gets l$.
}
\vspace{5pt}

Sort $(\bar{\lambda}_1,\dots,\bar{\lambda}_d)$ to get $\hat{\lambda}_1,\dots,\hat{\lambda}_d$.
\vspace{5pt}

\Return $(\hat{\lambda}_1,\dots,\hat{\lambda}_d)$
\vspace{5pt}
\end{algorithm}

\begin{thm}\label{thm:eigenvalues}
    For every $\eps,\delta,\beta > 0$, there
    exists an $(\eps,\delta)$-DP algorithm, that takes
    $$n={O}\left(\frac{d^{3/2}\cdot\polylog(d,1/\delta,1/\eps,1/\beta)}{\eps}\right)$$
    samples from $\cN(0,\Sigma)$, for an arbitrary symmetric, positive-semidefinite $\Sigma \in \R^{d\times d}$, and outputs $\hat{\lambda}_1\geq\dots\geq\hat{\lambda}_d$,
    such that with probability at least $1-O(\beta)$,
    $\hat{\lambda}_i \in \left[\tfrac{\lambda_i(\Sigma)}{\sqrt{2}},\sqrt{2}\lambda_i(\Sigma)\right]$ for all $i$.
    %such that there exists a permutation $\pi$ over $[d]$
    %that satisfies
    %$\hat{\lambda}_{\pi(i)} \in \left[\tfrac{\lambda_i(\Sigma)}{2},2\lambda_i(\Sigma)\right]$
    %for each $1 \leq i \leq d$ with probability at least
    %$1 - O(\beta)$. \jnote{Get rid of the permutation.}
\end{thm}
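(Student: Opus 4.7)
The plan is to prove privacy via the subsample-and-aggregate structure together with advanced composition, and to prove utility by a two-step argument: multiplicative concentration of each subsample's empirical covariance transferred to its eigenvalues by Weyl's inequality, followed by pigeonhole over the geometric buckets of $\Omega$ to locate a heavy bucket that the stability-based histogram will reliably report.

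\textbf{Privacy.} Changing a single entry of $X$ alters exactly one of the subsamples $X^1,\dots,X^t$, so for each fixed $i$ the input vector $(\lambda_i^1,\dots,\lambda_i^t)$ to the $i$-th histogram call changes in at most one coordinate. Hence each of the $d$ histogram calls is invoked on neighboring inputs, and by Lemma~\ref{lem:priv-hist} each is $(\varepsilon/\sqrt{6d\log(1/\delta)},\,\delta/(d+1))$-DP. Applying advanced composition (part 2 of Lemma~\ref{lem:dpcomp}) across the $d$ calls, together with closure under post-processing (Lemma~\ref{lem:postprocessing}) for the final sort, yields $(\varepsilon,\delta)$-DP for the whole algorithm.

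\textbf{Utility.} I would first pick $m=\Theta(d\log(n/\beta))$ so that, by Lemma~\ref{lem:rotations-cov-conds} with $M=\Sigma$ and a union bound over the $t$ subsamples, every empirical covariance $\tfrac{1}{m}X^{j\top}X^j$ satisfies $(1-c)\Sigma\preceq \tfrac{1}{m}X^{j\top}X^j\preceq(1+c)\Sigma$ for an absolute constant $c$ chosen small enough that $(1+c)/(1-c)<2^{1/4}$; the rank-deficient case is immediate since the samples lie exactly in the column space of $\Sigma$ and hence the zero eigenvalues are reproduced exactly in every $X^j$. Weyl's inequality (Lemma~\ref{lem:weyl}) then gives $\lambda_i^j\in[(1-c)\lambda_i(\Sigma),(1+c)\lambda_i(\Sigma)]$ simultaneously for every $(i,j)$. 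Since the buckets of $\Omega$ are geometric with ratio $2^{1/4}$, each family $\{\lambda_i^j\}_j$ is confined to at most two adjacent buckets, so by pigeonhole some bucket contains at least $t/2$ of them. Setting $t=\tilde{\Theta}(\sqrt{d}/\varepsilon)$ so that $t/2$ comfortably exceeds the $\tilde{O}\!\big(\sqrt{d\log(1/\delta)}\log(d/(\delta\beta))/\varepsilon\big)$ stability-histogram $\ell_\infty$-error at the composed privacy level, the heavy bucket is reported; the total sample count is $n=mt=\tilde{O}(d^{3/2}/\varepsilon)$, matching the theorem. Any reported bucket intersecting $[(1-c)\lambda_i(\Sigma),(1+c)\lambda_i(\Sigma)]$ has left endpoint $\bar\lambda_i\in[(1-c)\lambda_i(\Sigma)/2^{1/4},(1+c)\lambda_i(\Sigma)]\subseteq[\lambda_i(\Sigma)/\sqrt{2},\sqrt{2}\lambda_i(\Sigma)]$ provided $c$ is small enough.

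\textbf{Sorting and main obstacle.} The last step is to verify that sorting the $\bar\lambda_i$'s preserves the per-index guarantee. This is a short matching argument: a swap between indices $i$ and $i+k$ can only occur when the multiplicative windows around $\lambda_i(\Sigma)$ and $\lambda_{i+k}(\Sigma)$ overlap, which forces those true eigenvalues to lie within a factor $\sqrt{2}$ of each other, and then the target window $[\lambda_i(\Sigma)/\sqrt{2},\sqrt{2}\lambda_i(\Sigma)]$ still contains $\bar\lambda_{i+k}$. The delicate part that I expect to be the main obstacle is the joint calibration of the bucket ratio $2^{1/4}$, the concentration slack $c$, and the parameters $m$ and $t$, so that all of the following hold at once: (i) the perturbed eigenvalues $\lambda_i^j$ never straddle more than two adjacent buckets; (ii) the pigeonhole majority bucket's count clears the threshold of the composed DP histogram; and (iii) the left-endpoint-to-$\lambda_i(\Sigma)$ ratio is controlled exactly to $\sqrt{2}$. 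None of these is individually hard, but they meet at the tight $\tilde{O}(d^{3/2}/\varepsilon)$ sample bound and must be tracked simultaneously.
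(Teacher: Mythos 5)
Your proposal is correct and follows essentially the same route as the paper's proof: subsample-and-aggregate, multiplicative concentration of each subsample's empirical covariance (via Lemma~\ref{lem:rotations-cov-conds}) confining each $\{\lambda_i^j\}_j$ to two adjacent geometric buckets, the stability-histogram guarantee to release a heavy bucket, and advanced composition over the $d$ histogram calls for privacy. You in fact supply one detail the paper elides, namely the verification that the final sort preserves the per-index guarantee, which is the standard fact that sorting values satisfying $a_i\in[\lambda_i/\sqrt2,\sqrt2\lambda_i]$ for a sorted sequence $(\lambda_i)$ keeps each order statistic in the corresponding window.
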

\begin{proof}
    We show this by proving privacy and accuracy guarantees
    of Algorithm~\ref{alg:eigenvalues}.

    Fix an $i \in [d]$. Then by changing one sample in $X$,
    only one subsample of $X$ (say, $X^j$) gets changed, hence,
    only one $\lambda_i^j$ gets affected. This can change at most
    two histogram buckets, leading to sensitivity $2$. Therefore,
    by the privacy of private histograms Lemma~\ref{lem:priv-hist}, we have
    $\left(O\left(\tfrac{\eps}{\sqrt{d \log(1/\delta)}}\right), O\left(\tfrac{\delta}{d}\right)\right)$-DP
    for this fixed $i$. Applying Lemma~\ref{lem:dpcomp} gives us
    the final privacy guarantee.

    Now, we move on to the accuracy guarantees. It is sufficient
    to show that with probability at least $1-O(\beta/d)$, for each
    $1 \leq i \leq d$,
    $\bar{\lambda}_{i} \in \left[\tfrac{\lambda_i(\Sigma)}{\sqrt{2}},\sqrt{2}\lambda_i(\Sigma)\right]$.
    Fix an $i$. Now, by Lemma~\ref{lem:rotations-cov-conds}, with
    probability at least $1-O(\beta/d)$, the non-private estimates
    of $\lambda_i(\Sigma)$ must be within a factor of $2^{1/8}$
    of $\lambda_i(\Sigma)$ due to our sample complexity. Therefore,
    at most two consecutive buckets would be filled with $\lambda_i^j$'s.
    Due to our sample complexity and Lemma~\ref{lem:priv-hist}, those buckets
    are released with probability at least $1-O(\beta/d)$.
    Since they are built at a multiplicative width of $2^{1/4}$,
    they approximate the non-private estimate to within a factor
    of $2^{1/4}$. Therefore, the total multiplicative error is
    at most a factor of $2$. Taking the union bound over all $i$,
    we get the required result.
\end{proof}

\section{Subspace Recovery}\label{sec:subspace}

We improve the guarantees of the subspace algorithm
from \cite{SinghalS21} for our problem, where we are
willing to pay $\poly(d)$ in the sample complexity.
In our version, the algorithm's aggregation step uses the ball-finding
algorithm from \cite{NissimSV16}, followed by noisy mean estimation, instead of using
high-dimensional stability-based histograms as in \cite{SinghalS21}.
For completeness, we restate the entire algorithm,
but just point out the differences in the proof of
the final accuracy lemma from \cite{SinghalS21}.

\begin{algorithm}[H]
\caption{\label{alg:subspace}DP Subspace Estimator
    $\subspace_{\eps, \delta, \alpha, \gamma, k}(X)$}
\KwIn{Samples $X_1,\dots,X_{n} \in \R^d$.
    Parameters $\eps, \delta, \alpha, \gamma, k > 0$.}
\KwOut{Projection matrix $\wh{\Pi} \in \R^{d \times d}$ of rank $k$.}
\vspace{5pt}

Set parameters:
    $t \gets \tfrac{C_0\sqrt{dk}\cdot\polylog(d,k,\frac{1}{\eps},\frac{1}{\delta})}{\eps}$ \qquad
    $m \gets \lfloor n/t \rfloor$ \qquad $q \gets C_1 k$\\
    \qquad \qquad \qquad $r \gets \tfrac{C_2\gamma\sqrt{d}(\sqrt{k}+\sqrt{\ln(kt)})}{\sqrt{m}}$
\vspace{5pt}

Sample reference points $p_1,\dots,p_q$ from $\cN(\vec{0},\id)$ independently.
\vspace{5pt}

\tcp{Subsample from $X$, and form projection matrices.}
\For{$j \in 1,\dots,t$}{
    Let $X^j = (X_{(j-1)m+1},\dots,X_{jm}) \in \mathbb{R}^{d \times m}$.\\
    Let $\Pi_j \in \mathbb{R}^{d \times d}$ be the projection matrix onto the subspace spanned by the eigenvectors of $X^j (X^j)^{\top} \in \mathbb{R}^{d \times d}$ corresponding to the largest $k$ eigenvalues.\\
    \For{$i \in 1,\dots,q$}{
        $p_{i}^j \gets \Pi_j p_i$
    }
}
\vspace{5pt}

\tcp{Aggregate using a ball-finding algorithm.}
\For{$i \in [q]$}{
    Let $P_i \in \RR^{d \times t}$ be the dataset, where column $j$ is $p_i^j$.\\
    Set $c_i \gets \goodcenter_{\frac{\eps}{\sqrt{q\ln(1/\delta)}},\frac{\delta}{q},r}(P_i)$.\\
}
Set $R \gets C_3r\sqrt{\log(t)}$
\vspace{5pt}

\tcp{Return the subspace.}
    Let $\sigma \gets \tfrac{4R\sqrt{q}\ln(q/\delta)}{\eps t}$.\\
\For{each $i \in [q]$}{
    Truncate all $p_i^j$'s to within $\ball{c_i}{R}$.\\
    Let $\wh{p}_i \gets \sum\limits_{j=1}^{t}{p_i^j} + \cN(0,\sigma^2\id_{d \times d})$.
}
Let $\wh{P} \gets (\wh{p}_i,\dots,\wh{p}_q)$.\\
Let $\wh{\Pi}$ be the projection matrix of the top-$k$ subspace of $\wh{P}$.\\
\Return $\wh{\Pi}.$
\vspace{5pt}
\end{algorithm}

\begin{lem}
    Algorithm~\ref{alg:subspace} is $(2\eps,2\delta)$-DP.
\end{lem}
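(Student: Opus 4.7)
The plan is to view Algorithm~\ref{alg:subspace} as a subsample-and-aggregate scheme with two privacy-consuming phases, and to close out by basic composition plus post-processing. First, note that the reference points $p_1,\dots,p_q$ are drawn independently of $X$ and act purely as public randomness. Since each sample $X_\ell$ belongs to exactly one subsample $X^j$, replacing a single entry of $X$ alters only one projection matrix $\Pi_j$ and hence modifies at most one column of every $P_i \in \RR^{d \times t}$. In particular, for neighboring $X \sim X'$, the datasets $P_i$ and $P_i'$ are neighboring simultaneously for every $i \in [q]$, so privacy-in-$P_i$ transfers to privacy-in-$X$ throughout.

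The first data-dependent phase makes $q$ calls to $\goodcenter$ with parameters $(\eps/\sqrt{q\ln(1/\delta)}, \delta/q)$, one per $P_i$. By Theorem~\ref{thm:goodcenter}, each call is $(\eps/\sqrt{q\ln(1/\delta)}, \delta/q)$-DP as a function of $X$, and advanced composition (Lemma~\ref{lem:dpcomp}, item 2) over the $q$ calls bounds the privacy loss of the tuple $(c_1,\dots,c_q)$ at $(\eps, \delta)$. In the second phase, the truncation step forces each retained $p_i^j$ into $\ball{c_i}{R}$, so the $\ell_2$-sensitivity of $\sum_{j=1}^{t} p_i^j$ as a function of $X$ is at most $2R$, uniformly in the realization of $c_i$. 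With the prescribed $\sigma$, the Gaussian mechanism (Lemma~\ref{lem:gaussiandp}) then makes each $\wh{p}_i$ an $(\eps/\sqrt{q\ln(1/\delta)}, \delta/q)$-DP release in $X$, and a second application of advanced composition bounds the joint release $(\wh{p}_1,\dots,\wh{p}_q)$ at $(\eps, \delta)$.

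Combining the two phases via basic composition (Lemma~\ref{lem:dpcomp}, item 1) yields $(2\eps, 2\delta)$-DP for the full released tuple $(c_1,\dots,c_q,\wh{p}_1,\dots,\wh{p}_q)$. Since $\wh{\Pi}$ is obtained as a deterministic top-$k$ projection of $\wh{P}$, post-processing (Lemma~\ref{lem:postprocessing}) preserves the guarantee, finishing the argument. The only subtle point, which I expect to be the main obstacle in writing it up cleanly, is justifying the sensitivity calculation in the second phase given that $c_i$ is itself DP-released: because the $2R$ bound on $\|\sum_j p_i^j - \sum_j (p_i^j)'\|_2$ holds uniformly over any realization of $c_i$, the standard adaptive composition framework applies without modification, so no extra argument beyond the composition theorem is needed.
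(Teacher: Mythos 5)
Your proof is correct and follows essentially the same route as the paper's: the $\goodcenter$ calls plus composition give $(\eps,\delta)$-DP for the first aggregation phase, the truncation to $\ball{c_i}{R}$ bounds the sensitivity by $2R$ so the Gaussian mechanism plus composition gives $(\eps,\delta)$-DP for the second phase, and composing the two phases (with post-processing for the final projection) yields $(2\eps,2\delta)$-DP. Your write-up is in fact more explicit than the paper's about the adaptivity of the sensitivity bound and the role of the public reference points, but it is the same argument.
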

\begin{proof}
    The first aggregation step of finding $c_i$ is
    $(\eps,\delta)$-DP by Theorem~\ref{thm:goodcenter} and
    Lemma~\ref{lem:dpcomp}.
    In the mean estimation step, because we are restricting
    all the $p_i^j$'s to within $\ball{c_i}{R}$, the sensitivity
    is $2R$, since by changing one point in $X$, we can
    change exactly one $p_i^j$ by $2R$ in $\ell_2$ norm.
    Therefore, by Lemmata~\ref{lem:gaussiandp} and \ref{lem:dpcomp}, this step
    is $(\eps,\delta)$-DP. The final privacy guarantee
    follows from Lemma~\ref{lem:dpcomp}.
\end{proof}

\begin{lem}[Lemma~4.9 of \cite{SinghalS21} Modified]\label{lem:subspace}
    Let $\wh{\Pi}$ be the projection matrix as defined in
    Algorithm~\ref{alg:subspace}, $n$ be the total
    number of samples, and $0 < \psi < 1$. If
    $$t \geq O\left(\frac{\sqrt{dk}\cdot\polylog(d,k,\frac{1}{\eps},\frac{1}{\delta})}{\eps}\right)
    ~~~\text{and}~~~
    m \geq O\left(\frac{d\cdot
        \polylog(d,k,\frac{1}{\eps},\frac{1}{\delta})}{\psi^2}\right),$$
    which implies that
    $$n \geq O\left(\frac{d^{1.5} \sqrt{k}\cdot\polylog(d,k,\frac{1}{\eps},\frac{1}{\delta})}
        {\eps\psi^2}\right),$$
    then $\|\Pi-\wh{\Pi}\| \leq \psi\gamma$ with probability
    at least $0.7$.
\end{lem}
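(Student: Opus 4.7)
The plan is to track how accurately each $\wh{p}_i$ approximates $\Pi p_i$ and then finish with a Davis--Kahan-type subspace perturbation bound. Since the subsampling, projection, and reference-point steps of Algorithm~\ref{alg:subspace} are unchanged from \cite{SinghalS21}, most of their Lemma~4.9 can be reused verbatim; only the aggregation stage, in which $\goodcenter$ together with the Gaussian mechanism replaces the high-dimensional stability histograms of \cite{SinghalS21}, requires a new analysis.

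The first step is a non-private concentration argument. With each subsample having $m \gtrsim d\cdot\polylog/\psi^2$ points and the eigengap $\lambda_{k+1}(\Sigma)/\lambda_k(\Sigma) < \gamma^2$, a Davis--Kahan bound yields $\|\Pi_j - \Pi\|_2 \lesssim \gamma\sqrt{d/m}\cdot\polylog$ for each $j$ with probability $1 - O(1/(kt))$. Combining this with standard Gaussian tail bounds on $\|p_i\|_2$ shows that every $p_i^j = \Pi_j p_i$ lies within $\ell_2$-distance $r$ of $\Pi p_i$, where $r$ is exactly the value set in the algorithm. After a union bound over $i \in [q]$ and $j \in [t]$, every cloud $\{p_i^j\}_{j\in[t]}$ sits inside a ball of radius $r$ around $\Pi p_i$.

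The second step is the core of the new analysis. Conditioned on the clustering above, the guarantee of $\goodcenter$ (Theorem~\ref{thm:goodcenter}) returns a center $c_i$ such that $\ball{c_i}{R}$, for $R = O(r\sqrt{\log t})$, contains at least $t/2$ of the $p_i^j$'s, and in particular $\|c_i - \Pi p_i\|_2 = O(R)$. Clipping the $p_i^j$'s to $\ball{c_i}{R}$ therefore shifts the empirical sum by at most $O(tR)$, so the clipped mean is within $O(R)$ of $\Pi p_i$. With the chosen $\sigma = \tfrac{4R\sqrt{q}\ln(q/\delta)}{\eps t}$ and $t \gtrsim \sqrt{dk}\cdot\polylog/\eps$, a Gaussian tail bound shows that the additive noise in $\wh{p}_i$ has $\ell_2$-norm $O(R)$ with high probability, so overall $\|\wh{p}_i - \Pi p_i\|_2 = O(R)$ for every $i \in [q]$.

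To finish, I would invoke the random-reference-point argument from \cite{SinghalS21}: for $q = \Theta(k)$ i.i.d.\ samples from $\cN(0,\id)$, the matrix $\Pi P = [\Pi p_1,\ldots,\Pi p_q]$ has column span equal to the range of $\Pi$ and smallest nonzero singular value $\Omega(\sqrt{q})$ with constant probability. Writing $\wh P = \Pi P + E$ with every column of $E$ of norm $O(R)$, a Davis--Kahan bound applied to the top-$k$ left singular subspaces gives $\|\wh\Pi - \Pi\|_2 \lesssim R$, which is $\leq \psi\gamma$ by the chosen $m$. The main obstacle I anticipate is calibrating constants through the two-stage aggregation so that the dimension-dependent Gaussian noise from the mechanism is absorbed by the $t$-fold averaging factor; this is precisely where the new analysis diverges from \cite{SinghalS21}, and it is the reason the sample complexity now carries a polynomial-in-$d$ factor that their dimension-independent histogram argument avoided.
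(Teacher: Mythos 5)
Your overall architecture matches the paper's: reuse the concentration analysis of \cite{SinghalS21} to show that each cloud $\{p_i^j\}_{j\in[t]}$ has diameter $O(r)$, argue that $\goodcenter$ plus the Gaussian mechanism perturbs each aggregated point only slightly, and finish with a singular-subspace perturbation bound against $P=(\Pi p_1,\dots,\Pi p_q)$, whose $k$-th singular value is $\Omega(\sqrt{q})$. The gap is in how you convert per-column error into an operator-norm bound on $E=\wh{P}-P$. You establish $\|\wh{p}_i-\Pi p_i\|_2=O(R)$ for each column and then implicitly pass to $\|E\|\le\sqrt{q}\cdot O(R)$, so your Davis--Kahan step yields $\|\wh{\Pi}-\Pi\|\lesssim\|E\|/\sigma_k(P)=O(R)$. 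But with the algorithm's setting $r=\Theta\bigl(\gamma\sqrt{d}(\sqrt{k}+\sqrt{\ln(kt)})/\sqrt{m}\bigr)$ and $R=\Theta(r\sqrt{\log t})$, the hypothesis $m\gtrsim d\cdot\polylog/\psi^2$ only gives $R=O(\psi\gamma\sqrt{k}\cdot\polylog)$; your final bound is therefore off by a factor of $\Theta(\sqrt{k})$ from the claimed $\psi\gamma$, and the polylogarithmic slack in $m$ cannot absorb a polynomial factor in $k$.

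The missing idea is that $\|E\|$ is in fact $O(r)$, not $O(\sqrt{q}\,r)$, because the column errors are far from worst-case aligned. For the sampling part, the columns of $P^j-P$ are all images $(\Pi_j-\Pi)p_i$ of independent Gaussians under one common rank-$O(k)$ perturbation, so $\|P^j-P\|=\Theta(\sqrt{k}\,\|\Pi_j-\Pi\|)=O(r)$: the operator norm of the whole error matrix is of the same order as a single column. This is exactly what the paper imports from Lemmata~2.4 and~4.5 and Corollary~2.7 of \cite{SinghalS21} to conclude $\|E_0\|/\sqrt{k}=O(\gamma\sqrt{d/m})=O(\gamma\psi)$. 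For the privacy part, $E_1$ is an i.i.d.\ Gaussian matrix, so $\|E_1\|=O(\sigma\sqrt{d})$, again comparable to one column rather than to $\sqrt{q}$ of them. With these two operator-norm bounds in hand, dividing by $\sigma_k(P)=\Omega(\sqrt{k})$ gives $O(r/\sqrt{k})=O(\gamma\sqrt{d/m})=O(\gamma\psi)$ as required. A secondary, harmless looseness in your write-up: since all $t$ points of $P_i$ lie in a ball of radius $r$ and $\goodcenter$ places at least $t/2$ of them in a ball of radius $O(r\sqrt{\log t})$ around $c_i$, in fact every $p_i^j$ lies in $\ball{c_i}{R}$, so the truncation step moves nothing and you need not budget a clipping bias of $O(R)$.
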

\begin{proof}
    For each $i \in [q]$, let $p_i^*$ be the projection
    of $p_i$ on to the subspace spanned by $\Sigma_k$,
    $\wh{p}_i$ be as defined in the algorithm, and $p_i^j$
    be the projection of $p_i$ on to the subspace spanned
    by the $j^{\mathrm{th}}$ subset of $X$. From the analysis
    in \cite{SinghalS21}, we know that for a fixed $i$, all $p_i^j$'s
    are contained in a ball of radius $r$. Therefore, all points
    in $P_i$ lie in a ball of radius $r$. Therefore, by the
    guarantees of $\goodcenter$ (Theorem~\ref{thm:goodcenter}),
    $\ball{c_i}{R}$ contains all
    of $p_i^j$'s, such that $R \in O(r\sqrt{\ln(t)})$. This implies
    that $p_i^*$ is also contained within $\ball{c_i}{R}$.

    Now, let $P = (p_1^*,\dots,p_q^*)$.
    Suppose $\wh{P}=(\wh{p}_1,\dots,\wh{p}_q)$ as defined in the algorithm.
    Then by above, $\wh{P}=P+E$ for some $E \in \RR^{d\times q}$.
    The goal is to show that
    $\|\Pi-\wh{\Pi}\| \leq O(\tfrac{\|E\|}{\sqrt{k}}) \leq O(\gamma\psi)$.
    We set $E = E_0 + E_1$, where $E_0$ is the sampling error,
    and $E_1$ is the error due to privacy, In other words, let
    $\wb{p}_i = \tfrac{1}{t}\sum\limits_{j=1}^{t}{p_i^j}$ and
    $\wb{P} = (\wb{p}_1,\dots,\wb{p}_q)$; then $E_0 = \wb{P} - P$
    and $E_1 = \wh{P} - \wb{P}$.

    We first analyse $\|E_0\|$. Let $\Pi^j$ be the subspace spanned
    by the $j$-th subsample. We know that the subspaces spanned by
    $P^j = (p_1^j,\dots,p_q^j)$ and the $j$-th subsample are the same.
    Therefore, $\|\Pi-\Pi^j\| \in \Theta(\tfrac{\|P^j-P\|}{\sqrt{k}})
    \leq \gamma\sqrt{\tfrac{d}{m}}$ by Lemmata~2.4 and~4.5, and
    Corollary~2.7 of \cite{SinghalS21}. Therefore,
    \begin{align*}
        \frac{\|E_0\|}{\sqrt{k}} &\leq O\left(\frac{\|\wb{P}-P\|}{\sqrt{k}}\right)\\
            &= O\left(\frac{\|\frac{1}{t}\sum\limits_{j=1}^{t}{P^j}-P\|}{\sqrt{k}}\right)\\
            &\leq O\left(\frac{\frac{1}{t}\sum\limits_{j=1}^{t}\|{P^j}-P\|}{\sqrt{k}}\right)\\
            &\leq O\left(\frac{1}{t}\cdot\sum\limits_{j=1}^{t}{\gamma\sqrt{\frac{d}{m}}}\right)\\
            &\leq O\left(\gamma\sqrt{\frac{d}{m}}\right)\\
            &\in O(\gamma\psi). \tag{By our sample complexity.}
    \end{align*}

    Next, we analyse $\|E_1\|$. $E_1$ is a matrix with
    i.i.d. entries from $\cN(0,\sigma^2)$. Therefore, by
    Lemma~2.4 of \cite{SinghalS21}, we have
    \begin{align*}
        \frac{\|E_1\|}{\sqrt{k}} &\in O\left(\frac{\sigma\sqrt{d}}{\sqrt{k}}\right)\\
            &\in O\left(\frac{r\sqrt{\log(t)kd\log(k/\delta)}}{\eps t\sqrt{k}}\right)\\
            &\in O\left(\frac{r}{\sqrt{k}}\right) \tag{By our sample complexity.}\\
            &\in O\left(\gamma\sqrt{\frac{d}{m}}\right)\\
            &\in O(\gamma\psi). \tag{By our sample complexity.}
    \end{align*}

    Therefore, we have $\|E\| \in O(\gamma\psi)$.

    Let $E=E_P+E_{\wb{P}}$,
    where $E_P$ is the component of $E$ in the subspace
    spanned by $P$, and $E_{\wb{P}}$ be the orthogonal
    component. Let $P' = P + E_P$. We will be analysing
    $\wh{P}$ with respect to $P'$.

    As before, we will try to
    bound the distance between the subspaces spanned
    by $P'$ and $\wh{P}$.
    The quantities $a,z_{12}$ remain unchanged, but $b,z_{21}$
    change.
    \begin{align*}
        b &\leq \|E_{\wb{P}}\|\\
        z_{21} &\leq \|E_{\wb{P}}\|\\
    \end{align*}

    Therefore, we get the final error:
    \begin{align*}
        \|\Pi-\wh{\Pi}\| &\leq \frac{az_{21} + bz_{12}}
                {a^2-b^2-\min\{z_{12}^2,z_{21}^2\}}\\
            &\leq \gamma\psi.
    \end{align*}

    This completes our proof.
\end{proof}

This gives us the following theorem about Algorithm~\ref{alg:subspace}.

\begin{thm}\label{thm:subspace}
    Let $\Sigma \in \RR^{d \times d}$ be a symmetric,
    PSD matrix, such that for $1 \leq k < d$ and $\gamma < 1$,
    $\tfrac{\lambda_{k+1}(\Sigma)}{\lambda_k(\Sigma)} < \gamma^2$.
    Suppose $\Pi$ is the subspace spanned by the top $k$
    eigenvectors of $\Sigma$.
    Then for all $\eps,\delta,\beta,\psi > 0$, there exists an
    $(\eps,\delta)$-DP algorithm, that takes
    $$n \geq O\left(\frac{d^{1.5} \sqrt{k}\cdot\polylog(d,k,\frac{1}{\eps},\frac{1}{\delta},\frac{1}{\beta})}
        {\eps\psi^2}\right)$$
    samples from $\cN(0,\Sigma)$, and outputs a projection
    matrix $\wh{\Pi}$, such that with probability at least
    $1-O(\beta)$, $\|\Pi-\wh{\Pi}\| \leq \psi\gamma$.
\end{thm}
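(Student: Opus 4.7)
The plan is to derive Theorem~\ref{thm:subspace} from Lemma~\ref{lem:subspace} by boosting the $0.7$ success probability to $1-O(\beta)$; privacy itself is immediate since the preceding lemma already gives $(2\eps,2\delta)$-DP, so rescaling the privacy parameters by a constant and applying the privacy lemma gives the stated $(\eps,\delta)$-DP guarantee. The real task is accuracy.

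The approach is to walk through the proof of Lemma~\ref{lem:subspace} and replace each ``with constant probability'' event with ``with probability $1-\beta/(\text{poly})$,'' absorbing the resulting $\log(1/\beta)$ factors into the polylog in the sample complexity. There are three such events. First, the sampling error $\|E_0\|$: each subsample projection bound $\|\Pi-\Pi_j\|\le O(\gamma\sqrt{d/m})$ relies on concentration of the empirical covariance (Fact~\ref{fact:gaussian-facts}) and a Davis--Kahan-type inequality from \cite{SinghalS21}; requiring each of the $t$ subsamples to succeed with probability $1-\beta/t$ only inflates the bound on $m$ by $\polylog(t/\beta)$. Second, the ball-finding step: Theorem~\ref{thm:goodcenter} succeeds with probability $1-\beta/q$ per call, and a union bound over the $q=\Theta(k)$ reference points costs an extra $\polylog(q/\beta)$ factor in $t$, well within the stated polylog. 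Third, the privacy noise $\|E_1\|$: since $E_1$ has i.i.d.\ $\cN(0,\sigma^2)$ entries, I would apply the Gaussian matrix tail bound (Theorem~\ref{thm:GUE}) to its symmetrization to obtain $\|E_1\|\le O(\sigma\sqrt{d}\cdot\polylog(1/\beta))$ with probability $1-\beta$, which remains $O(\gamma\psi\sqrt{k})$ under the given sample complexity.

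A union bound over these three failure events yields $\|E\|\le \|E_0\|+\|E_1\|\le O(\gamma\psi\sqrt{k})$ with probability $1-O(\beta)$. Feeding this into the same Davis--Kahan inequality already used at the end of the proof of Lemma~\ref{lem:subspace} gives $\|\Pi-\wh\Pi\|\le O(\gamma\psi)$, and rescaling $\psi$ by the hidden constant yields the claim. Sample complexity: the bound on $m$ dictated by event one is $m\ge O(d\cdot\polylog(d,k,1/\eps,1/\delta,1/\beta)/\psi^2)$, the bound on $t$ dictated by events two and three is $t\ge O(\sqrt{dk}\cdot\polylog(d,k,1/\eps,1/\delta,1/\beta)/\eps)$, and their product gives exactly $n\ge O(d^{1.5}\sqrt{k}\cdot\polylog(d,k,1/\eps,1/\delta,1/\beta)/(\eps\psi^2))$.

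The main obstacle is essentially bookkeeping: I need to verify that the $\log(1/\beta)$ overhead from every union bound can be absorbed into the polylog factors without propagating a worse dependence on $\psi$ or $1/\gamma$. The one delicate point is the privacy-noise term, where $\sigma$ itself depends on $R$, hence on $r$ and $\gamma$; I would check that the extra $\polylog(1/\beta)$ from the Gaussian matrix bound on $\|E_1\|$ multiplies $\sigma\sqrt{d}$ only (and not a $\gamma^{-1}$ factor), so that the upper bound on $\|E_1\|/\sqrt{k}$ remains at most $O(\gamma\psi)$ under the stated sample complexity. Once that is confirmed, the rest is routine.
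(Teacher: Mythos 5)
Your route is genuinely different from the paper's. The paper's proof of Theorem~\ref{thm:subspace} is two lines: it takes the constant-probability guarantee of Lemma~\ref{lem:subspace} as a black box and amplifies it by invoking Theorem~4.10 of \cite{SinghalS21}, which (roughly) reruns the algorithm $O(\log(1/\beta))$ times on disjoint data and privately selects a candidate from the dense cluster of outputs. You instead propose to reopen the proof of Lemma~\ref{lem:subspace} and upgrade each constant-probability event to a $1-\beta/\mathrm{poly}$ event, absorbing the $\polylog(1/\beta)$ overhead into the sample complexity. That is a legitimate alternative in principle, and your accounting of the three events you do list (per-subsample concentration of $\Pi_j$, the $\goodcenter$ calls, and the spectral norm of the Gaussian noise $E_1$) is sound.

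However, your enumeration of the failure events is incomplete, and the one you miss is precisely the one that makes Lemma~\ref{lem:subspace} a constant-probability statement rather than a high-probability one. The algorithm draws only $q = C_1 k$ random reference points $p_1,\dots,p_q \sim \cN(0,\id)$, and the final Wedin-type bound $\|\Pi-\wh{\Pi}\| \le (a z_{21}+b z_{12})/(a^2-b^2-\min\{z_{12}^2,z_{21}^2\})$ requires the denominator quantity $a$ (the $k$-th singular value of the matrix $P$ of projected reference points) to be $\Omega(\sqrt{q})$; this is a random event over the $p_i$'s that holds only with probability $1-e^{-\Omega(k)}$, which for small $k$ (e.g.\ $k=1$) is a fixed constant no matter how large $n$ is. No amount of union-bound bookkeeping over the data-dependent events fixes this, because it is a property of the algorithm's internal randomness, not of the samples. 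The paper's repetition-based boosting handles it automatically (each rerun draws fresh reference points). To make your direct approach work you would additionally need to modify the algorithm, e.g.\ take $q = C_1(k+\log(1/\beta))$, and then re-verify that the enlarged $q$ only perturbs $\sigma$, $R$, and the final error by $\polylog(1/\beta)$ factors---which it does, but this is a step your proposal neither identifies nor carries out, and "the rest is routine" does not cover it.
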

\begin{proof}
    The claim, but with error probability $0.35$, is guaranteed
    from Lemma~\ref{lem:subspace}. Now, we just have to boost the
    success probability. This can be done using Theorem~4.10 of
    \cite{SinghalS21}.
\end{proof}

\section{Private Preconditioning}\label{sec:preconditioner}

In this section, we develop a preconditioning technique that does not rely on knowledge of a priori bounds on the eigenvalues of the covariance matrix of the underlying distribution. It is the main preprocessing step that makes the Gaussian covariance almost spherical. For the following, we assume that the eigenvalues of the covariance matrix $\Sigma$ are examined in non-increasing order $\lambda_1 \geq \dots \geq \lambda_d > 0$.

\subsection{Coarse Preconditioning}
\label{subsec:coarse}

We describe here the function of the ``coarse'' preconditioner which, along with Algorithm~\ref{alg:subspace}, constitutes the main technical novelty of our approach. The purpose served by this subroutine is to reduce gaps between consecutive eigenvalues (say $\lambda_k$ and $\lambda_{k + 1}$). Observe that, our only assumptions are that the ratio $\frac{\lambda_{k + 1}(\Sigma)}{\lambda_k(\Sigma)}$ is below some threshold and that the eigenvalues that come before exhibit no significant gaps ($\frac{\lambda_k}{\lambda_1}$ is lower bounded appropriately, implying that it is larger than some absolute constant). The first condition essentially prohibits us from using the preconditioning technique from~\cite{KamathLSU19}, since we do not know how large the gap between $\lambda_k$ and $\lambda_{k + 1}$ may be. Instead, the algorithm uses our adaptation of the subspace algorithm of~\cite{SinghalS21} (see Algorithm~\ref{alg:subspace}) in order to approximate the subspace that corresponds to the eigenvalues that come before the gap. 
Specifically, we obtain projection matrices $\Pi_V$ onto a subspace $V$ and $\Pi_{V^{\perp}} = \mathbb{I} - \Pi_V$ onto its complement $V^{\perp}$, such that these matrices are close in spectral norm to the projections onto the top $k$ eigenspace of $\Sigma$ and its complement. Rescaling our data by a matrix of the form $A = x \Pi_V + y \Pi_{V^{\perp}}$ roughly results in the eigenvalues of the covariance matrix corresponding to $V$ and $V^{\perp}$ being rescaled by $x^2$ and $y^2$, respectively. Setting the scalars $x$ and $y$ appropriately will reduce the eigenvalue gap, even if the subspace $V$ is not perfectly aligned with the top $k$ eigenvalues.
%Given that our precision may be low compared to the magnitude of the eigenvalues that come after the gap (those corresponding to $V^{\perp}$ in the above description), we set $y = 1$, resulting in those eigenvalues not being affected. Instead, we set $x$ appropriately in order to shrink the large eigenvalues and reduce the gap to an acceptable level.
Interestingly, if the eigengap is large (i.e., the ratio  $\frac{\lambda_{k + 1}(\Sigma)}{\lambda_k(\Sigma)}$ is small), then our algorithm works just as well as when it is small. This is because the subspace recovery subroutine will become more accurate in this setting as it outputs a projection matrix, whose error scales with this gap.
Note that this step reduces the eigengap to a large extent, but does not exactly get us in the range that we would desire, that is, the gap between the $1$-st and the $(k+1)$-th eigenvalues is greatly reduced, but it is still not small enough to maintain the loop invariant of Algorithm~\ref{alg:dpp}, which says that in iteration $i$, the gap between the $1$-st and the $i$-th eigenvalues is bounded. We address this issue in Section~\ref{sec:fine}.

Having described the algorithm above, we now present the corresponding pseudocode, followed by its analysis.

\bigskip

\begin{algorithm}[H]
\label{alg:dpcp}
\caption{Differentially Private
    $\DPCP_{\eps, \delta, \beta, k, \hat{\gamma}}(X)$}
\KwIn{Samples $X_1,\dots,X_{n} \in \R^d$.
    Parameters $\eps, \delta, \beta, k > 0, \hat{\gamma}\geq0$.}
\KwOut{Matrix $A \in \R^{d \times d}$.}
\vspace{5pt}

Set $1-\eta \gets \hat{\gamma}$.\\
Set $\hat{\Pi}_{1:k} \gets \subspace_{\eps,\delta,\beta,k,\hat{\gamma}}(X)$
    and $\hat{\Pi}_{k+1:d} \gets \id - \hat{\Pi}_{1:k}$.\\
Set $A \gets (1-\eta)\hat{\Pi}_{1:k} + \hat{\Pi}_{k+1:d}$.\\
\vspace{5pt}

\Return $A$.
\vspace{5pt}
\end{algorithm}

\begin{thm}[Coarse Preconditioner]\label{thm:dpcp}
Let $0 < \wb{\gamma} \leq 1$ and $0 < \hat\gamma < 1$ be
arbitrary parameters.
 Then for all $\eps,\delta,\beta>0$ and $$n \geq O\left(\frac{d^{2}\cdot\polylog(d,\frac{1}{\eps},\frac{1}{\delta},\frac{1}{\beta})} {\eps\wb{\gamma}^4}\right),$$ there exists an $(\eps,\delta)$-DP algorithm, such that the following holds.
Let $X=(X_1,\dots,X_n)$ be i.i.d.~samples from $\cN(0,\Sigma)$, where, for some $1 \leq k < d$, $\tfrac{\lambda_k(\Sigma)}{\lambda_1(\Sigma)} \geq \wb{\gamma}^2$, and $\gamma^2 := \tfrac{\lambda_{k+1}(\Sigma)}{\lambda_{k}(\Sigma)} \in \left[\tfrac{\hat\gamma^2}{4},4\hat\gamma^2\right]$.
Then with probability at least $1-O(\beta)$, the algorithm takes $X$ and $\hat{\gamma}$ as input, and outputs $A \in \mathbb{R}^{d \times d}$ that satisfies $\tfrac{\lambda_{k+1}(A\Sigma A)}{\lambda_1(A\Sigma A)} \geq \tfrac{\wb{\gamma}^2}{40}$.
\end{thm}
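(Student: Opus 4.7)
The plan is to handle privacy by post-processing and to analyze accuracy by comparing $A\Sigma A$ to an idealized matrix $A^{*}\Sigma A^{*}$ built from the \emph{true} top-$k$ eigenprojector $\Pi$ of $\Sigma$.

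Privacy is immediate: Algorithm~\ref{alg:dpcp} touches the data only through the single call to $\subspace$, which is $(\eps,\delta)$-DP by Theorem~\ref{thm:subspace}, and the construction of $A$ from $\hat{\Pi}_{1:k}$ is pure post-processing (Lemma~\ref{lem:postprocessing}).

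For accuracy, set $A^{*} = \hat{\gamma}\,\Pi + \Pi^{\perp}$. Since $\Pi$ commutes with $\Sigma$, I get
\[
A^{*}\Sigma A^{*} \;=\; \hat{\gamma}^{2}\,\Pi\Sigma\Pi + \Pi^{\perp}\Sigma\Pi^{\perp},
\]
which is block-diagonal in the eigenbasis of $\Sigma$, with eigenvalues exactly $\{\hat{\gamma}^{2}\lambda_{1},\ldots,\hat{\gamma}^{2}\lambda_{k},\lambda_{k+1},\ldots,\lambda_{d}\}$. Using $\lambda_{k+1}=\gamma^{2}\lambda_{k}$ and the hypothesis $\gamma^{2}\leq 4\hat{\gamma}^{2}$, I conclude $\lambda_{1}(A^{*}\Sigma A^{*}) \leq 4\hat{\gamma}^{2}\lambda_{1}$. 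For the lower bound on $\lambda_{k+1}(A^{*}\Sigma A^{*})$, a Courant--Fischer argument on the $(k+1)$-dimensional test subspace spanned by the top-$k$ eigenvectors of $\Sigma$ together with the top eigenvector of $\Pi^{\perp}\Sigma\Pi^{\perp}$ gives $\lambda_{k+1}(A^{*}\Sigma A^{*}) \geq \min(\hat{\gamma}^{2}\lambda_{k},\lambda_{k+1}) \geq (\hat{\gamma}^{2}/4)\bar{\gamma}^{2}\lambda_{1}$. Thus the ideal ratio is already at least $\bar{\gamma}^{2}/16$, which leaves a constant-factor slack for the perturbation step.

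The main technical step, and the one I expect to be most delicate, is the perturbation analysis. By Theorem~\ref{thm:subspace}, with probability $1-O(\beta)$ the subspace routine returns $\hat{\Pi}_{1:k}$ with $\|\hat{\Pi}_{1:k}-\Pi\|_{2}\leq\psi\gamma$ for any target accuracy $\psi>0$, at sample cost $\tilde{O}(d^{3/2}k^{1/2}/(\eps\psi^{2}))$. Writing $A - A^{*} = -(1-\hat{\gamma})(\hat{\Pi}_{1:k}-\Pi)$ and expanding
\[
A\Sigma A - A^{*}\Sigma A^{*} \;=\; (A-A^{*})\Sigma A^{*} + A\Sigma(A-A^{*}),
\]
each term has operator norm at most $\lambda_{1}\cdot\|A-A^{*}\|_{2}\leq\lambda_{1}\psi\gamma$ (using $\|A\|_{2},\|A^{*}\|_{2}\leq 1$). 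Weyl's inequality (Lemma~\ref{lem:weyl}) then moves every eigenvalue of $A\Sigma A$ from its counterpart in $A^{*}\Sigma A^{*}$ by at most $O(\lambda_{1}\psi\gamma)$. The hard part is to choose $\psi$ small enough that this absolute perturbation is dominated by $\lambda_{k+1}(A^{*}\Sigma A^{*}) = \Theta(\hat{\gamma}^{2}\bar{\gamma}^{2}\lambda_{1})$ while also being small relative to $\lambda_{1}(A^{*}\Sigma A^{*}) = \Theta(\hat{\gamma}^{2}\lambda_{1})$. Tracking the constants through the inequality $(\hat\gamma^2\bar\gamma^2/4 - c\psi\gamma)/(4\hat\gamma^2 + c\psi\gamma) \geq \bar\gamma^2/40$ shows that taking $\psi$ a sufficiently small constant multiple of $\hat{\gamma}\bar{\gamma}^{2}$ preserves the ratio at the desired $\bar{\gamma}^{2}/40$. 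Plugging this $\psi$ into the sample bound of Theorem~\ref{thm:subspace} and applying $k\leq d$ gives the stated $n\geq\tilde{O}(d^{2}\cdot\polylog(\cdot)/(\eps\bar{\gamma}^{4}))$, with dependence on $\hat{\gamma}$ absorbed into the polylogarithmic factors (since in the outer preconditioning loop of Algorithm~\ref{alg:dpp}, $\hat{\gamma}$ is a fixed grid point bounded away from trivial extremes).
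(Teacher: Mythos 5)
Your overall strategy---privacy by post-processing, accuracy by comparing $A\Sigma A$ to the idealized $A^{*}\Sigma A^{*}$ with $A^{*}=\hat\gamma\,\Pi+\Pi^{\perp}$ and then invoking Weyl---is a legitimate repackaging of the paper's argument (the paper expands $\hat\Sigma$ directly in powers of $\xi=\hat{\Pi}_{1:k}-\Pi_{1:k}$, which amounts to the same thing), and your spectral computation for $A^{*}\Sigma A^{*}$ is correct. The gap is quantitative, in the perturbation step, and it is fatal to the stated sample complexity. The bound $\|(A-A^{*})\Sigma A^{*}\|_2\le\|A-A^{*}\|_2\,\lambda_1$ is too lossy: it forces you to compare an absolute perturbation of order $\psi\gamma\lambda_1$ against $\lambda_{k+1}(A^{*}\Sigma A^{*})=\Theta(\hat\gamma^{2}\wb{\gamma}^{2}\lambda_1)$, which (as you correctly track) requires $\psi=\Theta(\hat\gamma\wb{\gamma}^{2})$ and hence $n=\tilde O\bigl(d^{2}/(\eps\hat\gamma^{2}\wb{\gamma}^{4})\bigr)$. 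The extra $1/\hat\gamma^{2}$ cannot be absorbed into polylogarithmic factors: in Algorithm~\ref{alg:dpp} the coarse preconditioner is invoked with $\hat\gamma=\sqrt{\hat{\lambda}_{i+1}/\hat{\lambda}_i}$, and the entire purpose of this subroutine is the regime where that ratio is tiny---potentially $\exp(-\poly(d))$---so $\hat\gamma$ is emphatically not bounded away from zero, and your parenthetical justification is false.

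The fix is to exploit the structure of $A$ and $A^{*}$ rather than bounding them by $1$ in operator norm. Since $\Pi$ commutes with $\Sigma$, one has $\Sigma A^{*}=\hat\gamma\,\Sigma\Pi+\Sigma\Pi^{\perp}$, so
$\|(A-A^{*})\Sigma A^{*}\|_2\le(1-\hat\gamma)\|\xi\|_2\bigl(\hat\gamma\lambda_1+\lambda_{k+1}\bigr)=O(\hat\gamma\|\xi\|_2\lambda_1)$,
using $\lambda_{k+1}\le 4\hat\gamma^{2}\lambda_1$: the dangerous $\lambda_1$-sized cross term always arrives multiplied by the shrinkage factor $\hat\gamma$ that $A^{*}$ applies to the top block. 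Treating $A\Sigma(A-A^{*})$ the same way (write $A=A^{*}-(1-\hat\gamma)\xi$ and note the residual $\|\xi\|_2^{2}\lambda_1$ term is lower order because $\|\xi\|_2\le\hat\gamma$) gives a total perturbation of $O(\hat\gamma\|\xi\|_2\lambda_1)$. Requiring this to be a small fraction of $\hat\gamma^{2}\wb{\gamma}^{2}\lambda_1$ now asks only for $\|\xi\|_2=\psi\gamma\lesssim\hat\gamma\wb{\gamma}^{2}$, i.e.\ $\psi=\Theta(\wb{\gamma}^{2})$ since $\gamma=\Theta(\hat\gamma)$, and Theorem~\ref{thm:subspace} then yields the claimed $n=\tilde O\bigl(d^{2}/(\eps\wb{\gamma}^{4})\bigr)$ with no $\hat\gamma$ dependence. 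This is precisely the role of the explicit $\hat\gamma$ prefactors on the $\Pi_{1:k}\Sigma\,\xi$ cross terms in the paper's expansion, where the projection error is taken to satisfy $\|\xi\|_2\le\hat\gamma\wb{\gamma}^{2}/100$.
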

\begin{proof}
We prove the privacy and accuracy guarantees of Algorithm~\ref{alg:dpcp}. Privacy follows from the privacy guarantees of $\subspace$ (Theorem~\ref{thm:subspace}) and post-processing of DP (Lemma~\ref{lem:postprocessing}).

Now, we prove the accuracy guarantees. Suppose $\Sigma = U \Lambda U^{\top}$ and $U, \Lambda, \Sigma \in \mathbb{R}^{d \times d}$, where $U^{\top}U=I$ and $\Lambda$ is diagonal with entries $\lambda_1 \ge \lambda_2 \ge \cdots \ge \lambda_d \ge 0$.

We know that there is a large eigengap -- i.e., $\lambda_{k+1} = \gamma^2 \cdot \lambda_k$ for some $k \in [d]$ and $0 < \gamma \ll 1$. Consider the subspace spanned by the eigenvectors corresponding to $\lambda_1, \dots, \lambda_k$ and let $\Pi_{1:k}$ be the corresponding projection matrix. We then run the subspace algorithm $\subspace$ \cite{SinghalS21} with parameters $\eps,\delta,\beta,k,\hat{\gamma}$ to obtain $\hat{\Pi}_{1:k} \in \mathbb{R}^{d \times d}$ satisfying $\|\hat{\Pi}_{1:k} - \Pi_{1:k}\| \le \phi \iff \|\hat{\Pi}_{k+1:d} - \Pi_{k+1:d}\| \le \phi$ with probability at least $1-O(\beta)$, where, because of our sample complexity $\phi \le \tfrac{\hat{\gamma}\wb{\gamma}^2}{100}$.

Now let $y_i = [(1-\eta)\hat{\Pi}_{1:k} + \hat{\Pi}_{k+1:d}] X_i$ for all $i \in [n]$. Here, $0 \le \eta = 1-\hat{\gamma}$. Then $y_1, \cdots, y_n \in \mathbb{R}^d$ are $n$ independent draws from $\cN(0,\hat\Sigma)$, where
\[
    \hat\Sigma = (\hat{\gamma} \hat{\Pi}_{1:k} + \hat{\Pi}_{k+1:d}) \Sigma (\hat{\gamma}\hat{\Pi}_{1:k} + \hat{\Pi}_{k+1:d}).
\]
We set $\xi = \xi_1 = \hat{\Pi}_{1:k} - \Pi_{1:k}$ and $\xi_2 = \hat{\Pi}_{k + 1:d} - \Pi_{k + 1:d} = - \xi$ where $\|\xi\| \le \phi$. We have for $\hat{\Sigma}$:
\begin{align*}
    \hat\Sigma &= (\hat{\gamma} \hat{\Pi}_{1:k} + \hat{\Pi}_{k+1:d}) \Sigma (\hat{\gamma} \hat{\Pi}_{1:k} + \hat{\Pi}_{k+1:d}) \\
               &= (\hat{\gamma} \xi_1 + \xi_2 + \hat{\gamma} \Pi_{1:k} + \Pi_{k+1:d}) \Sigma (\hat{\gamma} \xi_1 + \xi_2 + \hat{\gamma} \Pi_{1:k} + \Pi_{k+1:d}) \\
               &= (\hat{\gamma} \xi_1 + \xi_2 + \hat{\gamma} \Pi_{1:k} + \Pi_{k+1:d}) \Sigma (\hat{\gamma} \xi_1 + \xi_2 + \hat{\gamma} \Pi_{1:k} + \Pi_{k+1:d}) \\
               &= (\hat{\gamma} \xi_1 + \xi_2) \Sigma (\hat{\gamma} \xi_1 + \xi_2) + (\hat{\gamma} \Pi_{1:k} + \Pi_{k+1:d}) \Sigma (\hat{\gamma} \xi_1 + \xi_2) \\
               &+ (\hat{\gamma} \xi_1 + \xi_2) \Sigma (\hat{\gamma} \Pi_{1:k} + \Pi_{k+1:d}) + (\hat{\gamma} \Pi_{1:k} + \Pi_{k+1:d}) \Sigma (\hat{\gamma} \Pi_{1:k} + \Pi_{k+1:d}) \\
               &= (\hat{\gamma} \xi_1 + \xi_2) \Sigma (\hat{\gamma} \xi_1 + \xi_2) + \hat{\gamma} \Pi_{1:k} \Sigma (\hat{\gamma} \xi_1 + \xi_2) + \Pi_{k+1:d} \Sigma (\hat{\gamma} \xi_1 + \xi_2) \\
               &+ \hat{\gamma} (\hat{\gamma} \xi_1 + \xi_2) \Sigma \Pi_{1:k} + (\hat{\gamma} \xi_1 + \xi_2) \Sigma \Pi_{k+1:d} + \hat{\gamma}^2 \Pi_{1:k} \Sigma \Pi_{1:k} + \Pi_{k+1:d} \Sigma \Pi_{k+1:d}.
\end{align*}
Now, we need to find an upper limit for $\lambda_1(\hat{\Sigma})$, and a lower limit for $\lambda_{k+1}(\hat{\Sigma})$.
    
We start with the upper bound on $\lambda_1(\hat{\Sigma})$.
\begin{align*}
    \left\|\hat{\Sigma}\right\| &\le \left\|\hat{\gamma} \xi_1 + \xi_2\right\|^2 \left\|\Sigma\right\| + 2 \hat{\gamma} \left\| \Pi_{1:k} \Sigma \right\| \left\| \hat{\gamma} \xi_1 + \xi_2 \right\| + 2 \left\| \Pi_{k+1:d} \Sigma \right\| \left\| \hat{\gamma} \xi_1 + \xi_2 \right\| \\
                                &~~~+ \hat{\gamma}^2 \left\|\Pi_{1:k} \Sigma \Pi_{1:k}\right\| + \left\|\Pi_{k+1:d} \Sigma \Pi_{k+1:d}\right\| \\
                                &\le (1 - \hat{\gamma})^2 \tfrac{\hat{\gamma}^2 \wb{\gamma}^4}{10000} \lambda_1(\Sigma) + 2 (1 - \hat{\gamma}) \tfrac{\hat{\gamma}^2 \wb{\gamma}^2}{100} \lambda_1(\Sigma) + 2 (1 - \hat{\gamma}) \tfrac{\hat{\gamma} \wb{\gamma}^2}{100} \lambda_{k + 1}(\Sigma) \\
                                &~~~+ \hat{\gamma}^2 \lambda_1(\Sigma) + \lambda_{k + 1}(\Sigma) \\
                                &\le \frac{\gamma^2 \wb{\gamma}^2}{2500} \lambda_k(\Sigma) + \frac{2 \gamma^2}{25} \lambda_k(\Sigma) + 2 (1 - \hat{\gamma}) \tfrac{\hat{\gamma} \wb{\gamma}^2}{100} \lambda_{k + 1}(\Sigma) + \frac{4 \gamma^2}{\wb{\gamma}^2} \lambda_k(\Sigma) + \lambda_{k + 1}(\Sigma) \\
                                &\le \frac{\wb{\gamma}^2}{2500} \lambda_{k + 1}(\Sigma) + \frac{2}{25} \lambda_{k + 1}(\Sigma) + 2 (1 - \hat{\gamma}) \tfrac{\hat{\gamma} \wb{\gamma}^2}{100} \lambda_{k + 1}(\Sigma) + \frac{4}{\wb{\gamma}^2} \lambda_{k + 1}(\Sigma) + \lambda_{k + 1}(\Sigma) \\
                                &\le \frac{5}{\wb{\gamma}^2} \lambda_{k + 1}(\Sigma).
\end{align*}
Now, we prove a lower bound on $\lambda_{k+1}(\hat{\Sigma})$.
\begin{align*}
    \lambda_{k+1}(\hat{\Sigma}) &\geq \lambda_{k+1}\left(\hat{\gamma}^2 \Pi_{1:k} \Sigma \Pi_{1:k} + \Pi_{k+1:d} \Sigma \Pi_{k+1:d}\right) \\
            &~~~ + (1 - \hat{\gamma})^2 \lambda_d\left(\xi \Sigma \xi \right) -
                \hat{\gamma} (1 - \hat{\gamma}) \lambda_d\left( \Pi_{1:k} \Sigma \xi \right) \\
            &~~~ - (1 - \hat{\gamma}) \lambda_d\left(\Pi_{k+1:d} \Sigma \xi \right) - \hat{\gamma} (1 - \hat{\gamma})
                \lambda_d\left(\xi \Sigma \Pi_{1:k}\right) \\
            &~~~ - (1 - \hat{\gamma}) \lambda_d\left(\xi \Sigma \Pi_{k+1:d}\right)
                \tag{Lemma~\ref{lem:weyl}}\\
            &\geq \frac{\lambda_{k+1}}{4} - 2 \hat{\gamma} (1 - \hat{\gamma}) \left\|\xi\right\| \left\| \Pi_{1:k} \Sigma\right\| - 2 (1 - \hat{\gamma}) \left\|\xi\right\| \left\|\Sigma \Pi_{k+1:d}\right\| \\
            &\geq \frac{\lambda_{k+1}}{4} - \frac{\hat{\gamma}^2 \wb{\gamma}^2}{50} \lambda_1(\Sigma) - \frac{\hat{\gamma} \wb{\gamma}^2}{50} \lambda_{k + 1}(\Sigma) \\
            &\geq \frac{\lambda_{k+1}}{4} - \frac{2}{25} \lambda_{k + 1}(\Sigma) - \frac{\hat{\gamma} \wb{\gamma}^2}{50} \lambda_{k + 1}(\Sigma) \\
            &\geq \frac{\lambda_{k+1}}{8}
    \end{align*}

    Therefore, $\tfrac{\lambda_{k+1}(\hat{\Sigma})}{\lambda_1(\hat{\Sigma})}
    \geq \tfrac{\wb{\gamma}^2}{40}$.
\end{proof}

\subsection{Fine Preconditioning}
\label{sec:fine}
In this section, we present our second preconditioning constituent (the ``fine" preconditioner) that is used in the presence of small cumulative gaps. This component of our preconditioning process is similar to the one that appears in~\cite{KamathLSU19}. It first uses the naive estimator (i.e., clipping data based on the covariance matrix's spectrum and noising the empirical covariance, Algorithm~\ref{alg:naive}) to get a rough estimate of the covariance.
This gives us enough information about the top $k+1$ eigenvectors and eigenvalues to operate (approximately) within the top-$(k+1)$ subspace, allowing us to shrink down the top $k$ eigenvalues by a small multiplicative factor. We initially assume that the gap between the $1$-st and the $(k+1)$-th eigenvalues is large, but not too large, essentially the setting that we will be in after running the coarse preconditioner described in Section~\ref{subsec:coarse}. In other words, when the gap between the $1$-st and the $(k+1)$-th eigenvalues is loosely bounded, the fine preconditioner tightens that gap. We now present our algorithm and its analysis.

\bigskip

\begin{algorithm}[H]
\label{alg:dpfp}
\caption{Differentially Private
    $\DPFP_{\eps, \delta, \beta, k, \wb{\gamma}, \kappa}(X)$}
\KwIn{Samples $X_1,\dots,X_{n} \in \R^d$.
    Parameters $\eps, \delta, \beta, k, \wb{\gamma}, \kappa > 0$.}
\KwOut{Matrix $A \in \R^{d \times d}$.}
\vspace{5pt}

Set $Z \gets \NE_{\eps, \delta, \beta, \kappa}(X)$.\\
Let $S \gets \{i : \lambda_i(Z) \geq \tfrac{\lambda_{k+1}(Z)}{16\wb{\gamma}^2}\}$.\\
Let $g_i \gets \sqrt{\frac{\lambda_i(Z)}{\lambda_{k+1}(Z)}}$.\\
Let $v_i$ be the $i$-th eigenvector of $Z$.\\
Set $\hat{\Pi}_S \gets
    \sum\limits_{i\in S}{\frac{v_i v_i^{\top}}{4 g_i\wb{\gamma}}}$
    and $\hat{\Pi}_{\wb{S}} \gets \sum\limits_{i \not\in S}{v_i v_i^{\top}}$.\\
Set $A \gets \hat{\Pi}_S + \hat{\Pi}_{\wb{S}}$.
\vspace{5pt}

\Return $A$.

\vspace{5pt}
\end{algorithm}

\begin{thm}[Fine Preconditioner]\label{thm:dpfp}
    Let $X=(X_1,\dots,X_n)$ be i.i.d. samples from $\cN(0,\Sigma)$,
    such that for some $1 \leq k < d$,
    $\tfrac{\lambda_{k+1}(\Sigma)}{\lambda_1(\Sigma)} \geq \tau^2\wb{\gamma}^2$
    for $\wb{\gamma} \leq 1$.
    Then for all $\eps,\delta>0$, there exists
    an $(\eps,\delta)$-DP algorithm, such that if
    $$n \geq O\left(\frac{d^{3/2}\cdot
        \polylog(d,\frac{1}{\eps},\frac{1}{\delta},\frac{1}{\beta})}
        {\eps\tau^2\wb{\gamma}^2}\right),$$
    then with probability at least $1-O(\beta)$, it takes $X$ as input,
    and outputs a matrix $A$ that satisfies
    $\tfrac{\lambda_{k+1}(A\Sigma A)}{\lambda_1(A\Sigma A)} \geq \wb{\gamma}^2$.
\end{thm}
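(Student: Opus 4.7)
My plan is to reduce the analysis to a spectral perturbation argument on the output $Z$ of the naive estimator. Privacy is immediate: the algorithm is a deterministic post-processing of $Z = \NE_{\eps,\delta,\beta,\kappa}(X)$, so by the $(\eps,\delta)$-DP guarantee of the naive estimator (from Appendix~\ref{sec:naive}) together with Lemma~\ref{lem:postprocessing}, the whole algorithm is $(\eps,\delta)$-DP.

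For accuracy, I would invoke the accuracy guarantee of $\NE$ with condition-number parameter $\kappa$ matching the hypothesis $\lambda_{k+1}(\Sigma)/\lambda_1(\Sigma) \geq \tau^2 \wb{\gamma}^2$ on the top-$(k+1)$ subspace of $\Sigma$. With the stated sample size, $\NE$ should return $Z$ satisfying $\|Z-\Sigma\|_2 \leq c_0 \wb{\gamma}^2 \lambda_{k+1}(\Sigma)$ (for an arbitrarily small absolute constant $c_0$) with probability at least $1-O(\beta)$. Weyl's inequality (Lemma~\ref{lem:weyl}) then yields $|\lambda_i(Z)-\lambda_i(\Sigma)| \leq c_0 \wb{\gamma}^2 \lambda_{k+1}(\Sigma)$ for every $i \leq k+1$, and a Davis--Kahan type bound controls the alignment of the top-$(k+1)$ subspace of $Z$ with that of $\Sigma$. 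In particular the set $S$ captures, up to boundary indices, the very top eigenvalues, i.e., those with $\lambda_i(\Sigma) \gtrsim \lambda_{k+1}(\Sigma)/\wb{\gamma}^2$; boundary indices are harmless because the two possible scalings both send the corresponding direction to near the common target level.

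Next I would analyze $A\Sigma A$ in the eigenbasis $\{v_i\}$ of $Z$, writing $A = \sum_i \alpha_i v_i v_i^\top$ with $\alpha_i = 1/(4 g_i \wb{\gamma})$ for $i \in S$ and $\alpha_i = 1$ otherwise. The choice $g_i = \sqrt{\lambda_i(Z)/\lambda_{k+1}(Z)}$ is engineered so that the diagonal contribution $\alpha_i^2 \lambda_i(Z)$ equals $\lambda_{k+1}(Z)/(16\wb{\gamma}^2)$ for every $i \in S$ --- all very top eigenvalues collapse to a common level. For $i \notin S$ with $i \leq k+1$ the diagonal contribution is $\lambda_i(Z) \in [\lambda_{k+1}(Z), \lambda_{k+1}(Z)/(16\wb{\gamma}^2)]$, and for $i > k+1$ it is at most $\lambda_{k+1}(Z)$. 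Since $\lambda_i(Z)\approx\lambda_i(\Sigma)$ on the relevant range, the top $k+1$ eigenvalues of $A\Sigma A$ lie in the multiplicative window $[\lambda_{k+1}(\Sigma), \lambda_{k+1}(\Sigma)/(16\wb{\gamma}^2)]$ and the remaining eigenvalues sit below $\lambda_{k+1}(\Sigma)$, so $\lambda_{k+1}(A\Sigma A)/\lambda_1(A\Sigma A) \geq \wb{\gamma}^2$ after absorbing constant-factor slack.

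The principal obstacle I anticipate is controlling the off-diagonal cross-terms $\alpha_i \alpha_j (v_i^\top \Sigma v_j)$ for $i \neq j$. Each $\alpha_i$ can be as large as $\Theta(1/\wb{\gamma})$, so the product $\alpha_i \alpha_j$ can reach $\Theta(1/\wb{\gamma}^2)$, while $|v_i^\top \Sigma v_j|$ is bounded only by $\|Z-\Sigma\|_2$. Hence I need the accuracy of $Z$ to be small compared to $\wb{\gamma}^2 \lambda_{k+1}(\Sigma)$, and this accuracy requirement (rather than the bare condition-number dependence of $\NE$) is what drives the $1/(\tau^2 \wb{\gamma}^2)$ factor in the sample complexity. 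Once the cross-terms are bounded as a small fraction of $\lambda_{k+1}(\Sigma)/(16\wb{\gamma}^2)$, one final application of Weyl's inequality (Lemma~\ref{lem:weyl}) to the decomposition $A\Sigma A = D + E$, where $D$ is the diagonal part in the $\{v_i\}$ basis and $E$ collects the cross-terms together with the $(Z-\Sigma)$ perturbation, yields the required bounds on $\lambda_1(A\Sigma A)$ and $\lambda_{k+1}(A\Sigma A)$ and closes the argument.
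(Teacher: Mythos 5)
Your overall route---privacy by post-processing of $\NE$, then a spectral analysis of $A\Sigma A$ in the eigenbasis $\{v_i\}$ of $Z$, where the diagonal entries $\alpha_i^2\lambda_i(Z)$ collapse all eigenvalues in $S$ to the common level $\lambda_{k+1}(Z)/(16\wb{\gamma}^2)$, followed by a Weyl perturbation step---is essentially the paper's proof. (The Davis--Kahan step you mention is unnecessary: no alignment between the eigenspaces of $Z$ and $\Sigma$ is ever used; the paper compares $A\Sigma A$ to $A(Z-N)A$ multiplicatively via Lemma~\ref{lem:rotations-cov-conds} and handles the noise $N$ additively in operator norm.)

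However, the ``principal obstacle'' you identify rests on a misreading of the construction, and the accuracy requirement you derive from it would break the stated sample complexity. You claim the scalings $\alpha_i$ can be as large as $\Theta(1/\wb{\gamma})$, so that cross terms are amplified by $\Theta(1/\wb{\gamma}^2)$ and you need $\|Z-\Sigma\|_2 \le c_0\wb{\gamma}^2\lambda_{k+1}(\Sigma)$. But by the definition of $S$, every $i\in S$ satisfies $\lambda_i(Z)\ge\lambda_{k+1}(Z)/(16\wb{\gamma}^2)$, hence $g_i\ge 1/(4\wb{\gamma})$ and $\alpha_i=1/(4g_i\wb{\gamma})\le 1$; for $i\notin S$, $\alpha_i=1$. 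Thus $\|A\|_2\le 1$: the preconditioner only \emph{shrinks} the large directions, it never inflates, and $\|A(\Sigma-Z)A\|_2\le\|\Sigma-Z\|_2$ with no amplification whatsoever. This matters quantitatively: at the stated $n=\tilde{O}\bigl(d^{3/2}/(\eps\tau^2\wb{\gamma}^2)\bigr)$, Lemma~\ref{lem:dp-naive} yields only $\|Z-\Sigma\|_2\lesssim \kappa_2\tau^2\wb{\gamma}^2 = O(\lambda_{k+1}(\Sigma))$ (using $\lambda_{k+1}(\Sigma)\ge\tau^2\wb{\gamma}^2\lambda_1(\Sigma)$), not the $c_0\wb{\gamma}^2\lambda_{k+1}(\Sigma)$ you demand; insisting on the stronger bound would cost an extra factor of $1/\wb{\gamma}^2$ in the sample complexity and prove a strictly weaker theorem. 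Once you observe $\|A\|_2\le 1$, the achievable accuracy $\|Z-\Sigma\|_2\le c_0\lambda_{k+1}(\Sigma)$ suffices for both the upper bound $\lambda_1(A\Sigma A)\lesssim\lambda_{k+1}(Z)/\wb{\gamma}^2$ and the lower bound $\lambda_{k+1}(A\Sigma A)\gtrsim\lambda_{k+1}(Z)$, and your argument closes exactly as the paper's does.
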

\begin{proof}
We prove the privacy and accuracy guarantees of Algorithm~\ref{alg:dpfp}. Privacy follows from the guarantees of Lemma~\ref{lem:dp-naive}.

Now, we prove the accuracy. Let $\hat{\Pi}_S$ and $\hat{\Pi}_{\wb{S}}$ be matrices as defined in Algorithm~\ref{alg:dpfp}. We first show an upper bound on $\|A\Sigma A\|$. For this, by Lemma~\ref{lem:rotations-cov-conds}, it is enough to prove an upper bound on $\|A(Z-N)A\|$.
\begin{align*}
    \|A(Z-N)A\| &\leq \|AZA\| + \|ANA\|\\
    &\leq \|\hat{\Pi}_S Z\hat{\Pi}_S + \hat{\Pi}_{\wb{S}} Z\hat{\Pi}_{\wb{S}}\| + \|N\| \\
    &\leq \frac{\lambda_{k+1}(Z)}{16\wb{\gamma}^2} + \frac{\lambda_{k+1}(Z)}{16\wb{\gamma}^2}\\
    &= \frac{\lambda_{k+1}(Z)}{8\wb{\gamma}^2}
\end{align*}
In the above, the third inequality comes from Corollary~\ref{coro:naive} and our sample complexity. This shows that $\|A\Sigma A\| \leq \tfrac{\lambda_{k+1(Z)}}{4\wb{\gamma}^2}$.

Now, we show a lower bound on $\lambda_{k+1}(A\Sigma A)$. As before, by Lemma~\ref{lem:rotations-cov-conds}, it is enough to show a lower bound on $\lambda_{k+1}(A(Z-N)A)$.
\begin{align*}
    \lambda_{k+1}(A(Z-N)A) &\geq \lambda_{k+1}(AZA) - \|ANA\| \tag{Lemma~\ref{lem:weyl}}\\
    &\geq \lambda_{k+1}(\hat{\Pi}_S Z\hat{\Pi}_S + \hat{\Pi}_{\wb{S}} Z\hat{\Pi}_{\wb{S}}) - \|N\|\\
    &\geq \lambda_{k+1}(Z) - \frac{\lambda_{k+1}(Z)}{2}\\
    &\geq \frac{\lambda_{k+1}(Z)}{2}
\end{align*}
In the above, the third inequality again follows from Corollary~\ref{coro:naive} and our sample complexity. This gives us $\lambda_{k+1}(A\Sigma A) \geq \tfrac{\lambda_{k+1}(Z)}{4}$.

Therefore, $\tfrac{\lambda_{k+1}(A \Sigma A)}{\lambda_1(A \Sigma A)} \geq \wb{\gamma}^2$.
\end{proof}

\subsection{Putting Everything Together}

We are now ready to present our overall preconditioning algorithm (Algorithm~\ref{alg:dpp}). The algorithm essentially relies on a \emph{dynamic programming} approach. In particular, the $i-$th iteration always starts under the assumption that the cumulative gap of the eigenvalues $\lambda_1 \geq \dots \geq \lambda_i$ is (relatively) small, so the focus is on the gaps involving the eigenvalue $\lambda_{i + 1}$, namely the ratios $\frac{\lambda_{i + 1}}{\lambda_i}$ and $\frac{\lambda_{i + 1}}{\lambda_1}$. Based on how small these ratios are, the algorithm may use either the coarse or the fine preconditioner, or both. Doing so, it ensures that, at the start of the next iteration, the loop's invariant will be preserved. At the end of a run of this algorithm, we get a linear transformation that reduces the multiplicative gap between the $1$-st and the $d$-th eigenvalues of $\Sigma$ to $\Omega(1)$. The algorithm and its analysis follow.

\bigskip

\begin{algorithm}[H]
\label{alg:dpp}
\caption{Differentially Private
    $\DPP_{\eps, \delta, \beta}(X)$}
\KwIn{Samples $X_1,\dots,X_{n} \in \R^d$.
    Parameters $\eps, \delta, \beta > 0$.}
\KwOut{Matrix $A \in \R^{d \times d}$.}
\vspace{5pt}

Set parameter: $\tau^2 \gets \frac{1}{10000}$
    \qquad $\wb{\gamma}^2 \gets \frac{40}{10000}$
\vspace{5pt}

Let $A \gets \id$.\\
$\hat{\lambda}_1,\dots,\hat{\lambda}_d \gets \DPEE_{\eps,\delta,\beta}(X)$.\\
Set $i \gets 1$.\\
\While{$i < d$}{
    \If{$\frac{\hat{\lambda}_{i+1}}{\hat{\lambda}_i} < 4\tau^2$}{
        $B \gets \DPCP_{\tfrac{\eps}{\sqrt{6 d \log(1/\delta)}},\tfrac{\delta}{d + 1},\frac{\beta}{d},i,
            \sqrt{\frac{\hat{\lambda}_{i+1}}{\hat{\lambda}_i}}}(X)$.\\
        $A \gets BA$.\\
        $X \gets AX$.\\
        $Z \gets \NE_{\tfrac{\eps}{\sqrt{6 d \log(1/\delta)}},\tfrac{\delta}{d + 1},
            \frac{\beta}{d}}(X)$.\\
        \If{$\frac{\lambda_{i+1}(Z)}{\lambda_1(Z)} < 4\wb{\gamma}^2$}{
            $C \gets \DPFP_{\tfrac{\eps}{\sqrt{6 d \log(1/\delta)}},\tfrac{\delta}{d + 1},\frac{\beta}{d},i,\wb{\gamma},{\lambda}_1(Z)}(X)$.\\
            $A \gets CA$.\\
            $X \gets AX$.
        }
    }
    \ElseIf{$\frac{\hat{\lambda}_{i+1}}{\hat{\lambda}_1} < 4\wb{\gamma}^2$}{
        $D \gets \DPFP_{\tfrac{\eps}{\sqrt{6 d \log(1/\delta)}},\tfrac{\delta}{d + 1},
            \frac{\beta}{d},i,\wb{\gamma},{\lambda_1}(Z)}(X)$.\\
        $A \gets DA$.\\
        $X \gets AX$.
    }
    $Z \gets \NE_{\tfrac{\eps}{\sqrt{6 d \log(1/\delta)}},\tfrac{\delta}{d + 1},\frac{\beta}{d}}(X)$.\\
    $\hat{\lambda}_1,\dots,\hat{\lambda}_d \gets
        \DPEE_{\tfrac{\eps}{\sqrt{6 d \log(1/\delta)}},\tfrac{\delta}{d + 1},\frac{\beta}{d}}(X)$.\\
    $i \gets i + 1$.
}
\vspace{5pt}

\Return $A$.
\vspace{5pt}
\end{algorithm}

\begin{thm}[DP Preconditioner]\label{thm:dpp}
    Let $\Sigma \in \R^{d \times d}$ be a symmetric, positive-definite matrix.  There exists
    an $(\eps,\delta)$-DP algorithm, such that if $X = (X_1,\dots,X_n) \sim \cN(0,\Sigma)$ and
    $$n \geq O\left(\frac{d^{2.5}\cdot
        \polylog(d,\frac{1}{\eps},\frac{1}{\delta},\frac{1}{\beta})}{\eps}\right),$$
    then with probability at least $1-\beta$, the algorithm outputs a matrix $A$ that satisfies
    $\tfrac{\lambda_{k}(A\Sigma A)}{\lambda_1(A\Sigma A)} \geq \Omega(1)$.
\end{thm}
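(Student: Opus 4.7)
The proof will have two components: privacy and accuracy. For privacy, I would observe that Algorithm~\ref{alg:dpp} is an adaptive composition of at most $O(d)$ invocations of the subroutines $\DPEE$, $\DPCP$, $\DPFP$, and $\NE$, each called with privacy parameter $(\eps/\sqrt{6d\log(1/\delta)},\delta/(d+1))$. Applying the advanced composition bound from Lemma~\ref{lem:dpcomp} (part~2) with $T = O(d)$ yields overall $(\eps,\delta)$-DP. The initial call to $\DPEE$ can be budgeted similarly.

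For accuracy, I would proceed by induction on the iteration index $i$, maintaining the loop invariant: at the start of iteration $i+1$, the current transformation $A$ satisfies $\lambda_{i}(A\Sigma A)/\lambda_1(A\Sigma A) \geq \wb\gamma^2/40$ (or a comparable absolute constant that can be propagated). The base case $i=1$ is trivial. For the inductive step, I would condition on the event that the $\DPEE$ output $\hat\lambda_1,\dots,\hat\lambda_d$ satisfies $\hat\lambda_j \in [\lambda_j/\sqrt 2, \sqrt 2\lambda_j]$ of the current covariance (Theorem~\ref{thm:eigenvalues}) and that the $\NE$ output $Z$ is multiplicatively faithful to the current covariance (Corollary~\ref{coro:naive}); a union bound over $d$ iterations costs only $O(\beta)$ total failure probability with our choice of $\beta/d$ per call. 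Then I would split into three cases depending on which branch the algorithm takes.

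In the case $\hat\lambda_{i+1}/\hat\lambda_i < 4\tau^2$, the true ratio $\gamma^2 := \lambda_{i+1}/\lambda_i$ lies in $[\hat\gamma^2/4, 4\hat\gamma^2]$ for the setting $\hat\gamma = \sqrt{\hat\lambda_{i+1}/\hat\lambda_i}$, and the inductive hypothesis gives $\lambda_i/\lambda_1 \geq \wb\gamma^2$, so the hypotheses of Theorem~\ref{thm:dpcp} are met; the coarse preconditioner thus produces a transformation $B$ with $\lambda_{i+1}(B\Sigma B)/\lambda_1(B\Sigma B) \geq \wb\gamma^2/40$. If this residual ratio still falls below $4\wb\gamma^2$ as detected by a fresh $\NE$ call, the fine preconditioner of Theorem~\ref{thm:dpfp} is then invoked with $\tau^2 = 1/10000$ to pull it up to $\geq \wb\gamma^2$, preserving the invariant for the next iteration. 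In the case $\hat\lambda_{i+1}/\hat\lambda_i \geq 4\tau^2$ but $\hat\lambda_{i+1}/\hat\lambda_1 < 4\wb\gamma^2$, there is no large single-step gap but there is a cumulative deficit, so applying $\DPFP$ alone suffices by Theorem~\ref{thm:dpfp}. In the remaining case, the ratio $\hat\lambda_{i+1}/\hat\lambda_1 \geq 4\wb\gamma^2$ already, which (up to the multiplicative slack of $\sqrt 2$ from $\DPEE$) immediately implies the invariant without transformation. After $d-1$ iterations the invariant $\lambda_d(A\Sigma A)/\lambda_1(A\Sigma A) = \Omega(1)$ is established.

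The sample complexity follows by identifying the dominant subroutine: each call to $\DPCP$ requires $n \geq \tilde O(d^2/(\eps'\wb\gamma^4))$ samples with privacy parameter $\eps' = \eps/\sqrt{6d\log(1/\delta)}$, which introduces the additional $\sqrt d$ factor and yields $n \geq \tilde O(d^{5/2}/\eps)$, matching the theorem statement; the other subroutines ($\DPEE$, $\DPFP$, $\NE$) are no more expensive under the same rescaling. The main obstacle I expect is the careful bookkeeping in the case analysis -- verifying that after the coarse step of Case~1 the post-transformation eigenvalues are estimated accurately enough by the subsequent $\NE$ call to correctly trigger (or not trigger) the fine step, and propagating the constants through so that the loop invariant $\lambda_i/\lambda_1 \geq \wb\gamma^2$ is genuinely restored (rather than slowly degrading) across all $d$ iterations. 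Choosing $\tau^2 = 1/10000$ and $\wb\gamma^2 = 40/10000$ in Algorithm~\ref{alg:dpp} is exactly calibrated so that the ratio guarantees of Theorems~\ref{thm:dpcp} and~\ref{thm:dpfp} compose without drift.
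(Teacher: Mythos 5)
Your proposal is correct and follows essentially the same route as the paper's proof: privacy by advanced composition over the $O(d)$ subroutine calls at parameter $\eps/\sqrt{6d\log(1/\delta)}$, and accuracy by induction on the loop invariant $\lambda_i(A\Sigma A)/\lambda_1(A\Sigma A)\geq\wb{\gamma}^2$ with the same three-way case split (coarse-then-fine, fine only, or no action), using the $\sqrt{2}$-accuracy of $\DPEE$ and Corollary~\ref{coro:naive} to justify each branch condition. The only minor discrepancy is that the invariant restored at the start of each iteration is $\wb{\gamma}^2$ (the $/40$ loss is only intermediate, after the coarse step and before the fine step), which you already flag with your parenthetical about propagating the constant.
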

\begin{proof}
    We prove the theorem by proving the privacy and accuracy
    of Algorithm~\ref{alg:dpp}.
    Privacy follows from Theorems~\ref{thm:dpcp}, \ref{thm:dpfp},
    and~\ref{thm:eigenvalues}, Lemma~\ref{lem:dp-naive},
    and composition of DP (Lemma~\ref{lem:dpcomp}).

    For the accuracy argument, it is enough to show that
    at the beginning of each iteration $1 \leq i \leq k$,
    $$\frac{\lambda_i(A\Sigma A)}{\lambda_1(A\Sigma A)} \geq O(\wb{\gamma}^2).$$
    We prove this via induction on $i$.

    For the basis step, it is trivial because $A\Sigma A = \Sigma$.
    Therefore, the ratio equals $1$.

    Now, we move on to the inductive step. Suppose for $i > 1$, the
    claim holds for all $j < i$. Let the matrix $A$ be equal to
    $A_{i-1}$ at the beginning of iteration $i-1$. This implies that
    for iteration $i-1$,
    \begin{align}
        \frac{\lambda_{i-1}(A_{i-1}\Sigma A_{i-1})}
            {\lambda_1(A_{i-1}\Sigma A_{i-1})} \geq \wb{\gamma}^2.
            \label{eq:inductive-hypothesis}
    \end{align}
    According to the \textbf{If}-block, if the privately estimated
    eigenvalue ratio is less than $4\tau^2$, then it must be the
    case that with high probability (Theorem~\ref{thm:eigenvalues}),
    $\tfrac{\lambda_{i-1}(A_{i-1}\Sigma A_{i-1})}{\lambda_i(A_{i-1}\Sigma A_{i-1})} < 16\tau^2$.
    Then because of \eqref{eq:inductive-hypothesis}, Theorem~\ref{thm:dpcp},
    and Corollary~\ref{coro:naive},
    it must be the case that with probability $1-O(\beta/d)$,
    at the beginning of the nested \textbf{If}-block,
    $$\frac{\lambda_i(BA_{i-1}\Sigma(BA_{i-1})^{\top})}{\lambda_1(BA_{i-1}\Sigma(BA_{i-1})^{\top})}
        \geq \frac{\wb{\gamma}^2}{40}.$$
    Now, if $\tfrac{\lambda_i(Z)}{\lambda_1(Z)} < 4\wb{\gamma}^2$,
    then by Corollary~\ref{coro:naive},
    $$\frac{\lambda_i(BA_{i-1}\Sigma(BA_{i-1})^{\top})}{\lambda_1(BA_{i-1}\Sigma(BA_{i-1})^{\top})}
        < 16\wb{\gamma}^2.$$
    By the guarantees of Theorem~\ref{thm:dpfp}, with probability
    at least $1-O(\beta/d)$, at the end
    of the nested \textbf{If}-block (hence, at the end of the
    loop and the starting of the $i$-th iteration),
    $$\frac{\lambda_i(CBA_{i-1}\Sigma(CBA_{i-1})^{\top})}
        {\lambda_1(CBA_{i-1}\Sigma(CBA_{i-1})^{\top})} \geq \wb{\gamma}^2.$$
    Suppose, the algorithm skips the first \textbf{If}-block.
    Then with high probability, it must be the case that
    $\tfrac{\lambda_{i-1}(A_{i-1}\Sigma A_{i-1})}{\lambda_i(A_{i-1}\Sigma A_{i-1})} \geq \tau^2$.
    If it enters the \textbf{ElIf}-block, then it mean that
    with high probability,
    $$\frac{\lambda_i(A_{i-1}\Sigma A_{i-1})}{\lambda_1(A_{i-1}\Sigma A_{i-1})}
        < 16\wb{\gamma}^2.$$
    Then again, by the guarantees of Theorem~\ref{thm:dpfp},
    with probability at least $1-O(\beta/d)$, at the end of
    the iteration,
    $$\frac{\lambda_i(DA_{i-1}\Sigma(DA_{i-1})^{\top})}
        {\lambda_1(DA_{i-1}\Sigma(DA_{i-1})^{\top})} \geq \wb{\gamma}^2.$$
    This proves the inductive step. If neither of the \textbf{If} or
    \textbf{ElIf}-blocks are entered, it would mean that the ratio is
    already at least $\wb{\gamma}^2$. Applying the union bound over all
    $i$, we get the required result.
\end{proof}

\section{Our Estimator}\label{sec:final}

In this section, we combine the techniques described
thus far, including the DP Preconditioner (Algorithm~\ref{alg:dpp})
and the Naive Estimator (Algorithm~\ref{alg:naive}), and provide
our new estimator for Gaussian covariances, which we call,
"$\DPGCE$". The algorithm first makes the Gaussian well-conditioned
using the preconditioner, followed by estimating it using the naive
estimator, and then it applies the inverse transformation of the
preconditioning matrix. The following is the main result of the
section. Then using that and Lemma~\ref{lem:gaussian-tv}, we would
be able to conclude that
$d_{TV}(\cN(\mu,\Sigma),\cN(\hat{\mu},\wh{\Sigma})) \leq \alpha$.

\begin{thm}
    Let $\Sigma \in \RR^{d \times d}$ be a symmetric, PSD matrix
    and $\mu \in \RR^d$.
    Then for all $\eps,\delta,\alpha,\beta > 0$, there exists an
    $(\eps,\delta)$-DP algorithm that takes
    $$n \geq \wt{O}\left(\frac{d^2}{\alpha^2} + \frac{d^2}{\eps\alpha}
        + \frac{d^{2.5}}{\eps}\right)$$
    samples from $\cN(\mu,\Sigma)$, and outputs a symmetric, PSD
    matrix $\wh{\Sigma} \in \RR^{d \times d}$ and $\hat{\mu}\in\RR^d$,
    such that with probability at least $1-O(\beta)$,
    $$\|\Sigma-\wh{\Sigma}\|_{\Sigma} \leq \alpha ~~~\text{and}~~~
        \|\hat{\mu}-\mu\|_{\Sigma} \leq \alpha.$$
    In the above, $\wt{O}$ hides factors of
    $polylog(d,\tfrac{1}{\eps},\tfrac{1}{\delta},\tfrac{1}{\beta})$.
\end{thm}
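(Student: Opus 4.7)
The plan is to assemble the three main building blocks developed earlier: the DP Preconditioner (Theorem~\ref{thm:dpp}), the Naive Estimator (Algorithm~\ref{alg:naive} / Corollary~\ref{coro:naive}), and the total variation bound of Lemma~\ref{lem:gaussian-tv}. The overall template is: (i) reduce to the mean-zero case, (ii) privately compute a preconditioner $A$ so that $A \Sigma A^T$ has constant condition number, (iii) apply the naive estimator to the preconditioned samples to get estimates $(\hat\mu', \hat\Sigma')$ of $A\mu$ and $A\Sigma A^T$, and (iv) invert the transformation to return $\hat\mu = A^{-1}\hat\mu'$ and $\hat\Sigma = A^{-1}\hat\Sigma'(A^{-1})^T$.

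First, to remove the mean, split the dataset into two halves and use the pairwise differences $Y_i = (X_{2i-1} - X_{2i})/\sqrt{2}$, which are i.i.d.\ $\cN(0,\Sigma)$; pass these through $\DPP_{\eps/3,\delta/3,\beta/3}$ to obtain $A$ with $\tfrac{\lambda_d(A\Sigma A^T)}{\lambda_1(A \Sigma A^T)} \geq \Omega(1)$ using $\tilde O(d^{2.5}/\eps)$ samples. (The rank-deficient case is handled by restricting attention to the range of the privately recovered preconditioner, which our subspace-recovery machinery exposes.) Next, transform the remaining samples via $X_i \mapsto A X_i \sim \cN(A\mu, A\Sigma A^T)$. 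Since the resulting covariance is within a constant factor of a known scale (which we can read off from $\DPEE$), the naive estimator with parameters $(\eps/3,\delta/3,\beta/3)$ produces $\hat\mu'$ and $\hat\Sigma'$ with
\begin{equation*}
\| \hat\mu' - A\mu \|_{A\Sigma A^T} \leq \alpha \myeqand \| \hat\Sigma' - A\Sigma A^T \|_{A\Sigma A^T} \leq \alpha,
\end{equation*}
using $\tilde O(d^2/\alpha^2 + d^2/(\eps\alpha))$ samples from Corollary~\ref{coro:naive}. The final estimates are $\hat\mu = A^{-1}\hat\mu'$ and $\hat\Sigma = A^{-1}\hat\Sigma'(A^{-1})^T$.

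Privacy follows by basic composition of the three DP subroutines, each run with parameters $(\eps/3,\delta/3)$; since $A$ is a function of the first half and the naive estimator operates on the second half, we could even save a factor here, but simple composition already suffices. Post-processing (Lemma~\ref{lem:postprocessing}) handles the inversion step. Accuracy follows from the key algebraic identity that the $(A\Sigma A^T)$-Mahalanobis norm is invariant under the linear change of variables by $A$: writing $B = (A \Sigma A^T)^{-1/2}$ and noting that $B(\hat\mu' - A\mu) = \Sigma^{-1/2}(\hat\mu - \mu)$ (up to an orthogonal transformation), we get $\|\hat\mu - \mu\|_\Sigma = \|\hat\mu' - A\mu\|_{A\Sigma A^T} \leq \alpha$, and analogously for the covariance in Frobenius Mahalanobis norm. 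Union-bounding the three failure events yields overall success probability $1 - O(\beta)$, and the total variation conclusion follows from Lemma~\ref{lem:gaussian-tv}.

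The main obstacle is bookkeeping around the reduction from unknown mean to mean-zero (so the preconditioner's guarantee actually applies) and carefully verifying the Mahalanobis-norm change-of-variables identity, especially when $\Sigma$ may be degenerate; there one must show that the preconditioner effectively discovers the range of $\Sigma$ and that the final estimator is only required to approximate $\Sigma$ within that subspace, so that $\Sigma^{-1/2}$ is interpreted on the image of $\Sigma$. Apart from that subtlety, the sample complexity is the sum of the three terms from the subroutines, matching the claimed bound $\tilde O(d^2/\alpha^2 + d^2/(\eps\alpha) + d^{2.5}/\eps)$.
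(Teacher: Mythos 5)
The covariance half of your proposal is essentially the paper's argument: difference pairs to reduce to $\cN(0,\Sigma)$, run $\DPP$ to get $A$ with $A\Sigma A^\top$ well conditioned, run $\NE$ on the transformed mean-zero samples, undo the transformation, and use the (correct) observation that $\|\hat\Sigma'-A\Sigma A^\top\|_{A\Sigma A^\top}=\|\hat\Sigma-\Sigma\|_\Sigma$ because the two matrices are orthogonally similar. That part is fine.

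The genuine gap is in the mean estimate. You claim that "the naive estimator \ldots produces $\hat\mu'$ and $\hat\Sigma'$" and cite Corollary~\ref{coro:naive}, but $\NE$ (Algorithm~\ref{alg:naive}) outputs only a covariance matrix, takes mean-zero samples, and nothing in Corollary~\ref{coro:naive} or Theorem~\ref{thm:naivepce} says anything about a mean. More fundamentally, no clip-and-add-noise estimator can privately estimate $A\mu$ here: the preconditioner removes the need for a bound $K$ on the covariance, but it does nothing about the location parameter, so $A\mu$ is still completely unbounded and the empirical mean of $AX_i$ has unbounded sensitivity. Your proposal also has an internal tension: the pairwise differences you feed into the pipeline carry no information about $\mu$ at all, so at some point you must return to the original samples $Y_i\sim\cN(\mu,\Sigma)$ and privately locate an unbounded mean. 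The paper resolves this with a separate ingredient you omit: it runs the mean-estimation routine $\PME$ of \cite{KamathLSU19} on $Y$ (with the preconditioner substituted in), replacing its coarse localization step by the approximate-DP, stability-histogram-based method of \cite{KarwaV18}, which is exactly what eliminates the dependence on the range parameter $R$ under $(\eps,\delta)$-DP. Without some such histogram/localization mechanism, the claimed bound $\|\hat\mu-\mu\|_\Sigma\le\alpha$ does not follow. (Your Mahalanobis change-of-variables identity for the mean is correct once a valid estimate of $A\mu$ is in hand; the missing piece is producing that estimate privately.)
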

\begin{proof}
    In our estimator, Algorithm~\ref{alg:dpgce} is one of the
    main components that is used to estimate the covariance of the Gaussian.
    The other component is the approximate DP version of the private
    mean estimation algorithm ($\PME$) from \cite{KamathLSU19}.
    We replace the preconditioning matrix in $\PME$ by
    our DP preconditioner that we obtain from running
    Algorithm~\ref{alg:dpgce}.
    To prove the theorem, it is enough to show the privacy and
    accuracy guarantees of Algorithm~\ref{alg:dpgce}.
    
    Privacy follows from the privacy guarantees of
    Algorithm~\ref{alg:naive} (Lemma~\ref{lem:dp-naive}),
    Algorithm~\ref{alg:dpp} (Theorem~\ref{thm:dpp}), and
    the approximate DP version of $\PME$ \cite{KamathLSU19},
    followed by composition (Lemma~\ref{lem:dpcomp}) and
    post-processing (Lemma~\ref{lem:postprocessing}).
    
    Now, we prove the first accuracy statement.
    Let $Y$ be the original dataset with $2n$ samples chosen
    i.i.d.~from $\cN(\mu,\Sigma)$. We construct the dataset
    $X$ as follows: for each $i \in [n]$, set
    $X_i = \tfrac{Y_{2i}-Y_{2i-1}}{\sqrt{2}}$. Then each
    $X_i$ is an independent sample from $\cN(0,\Sigma)$.
    We then supply the dataset $X$ to Algorithm~\ref{alg:dpgce}.
    Note that $AX$ contains points from
    $\cN(0,A\Sigma A)$ by construction. This means that
    $\tfrac{\lambda_d(A\Sigma A)}{\lambda_1(A\Sigma A)} \geq \Omega(1)$.
    Thus, by the accuracy guarantees of $\NE$
    (Theorem~\ref{thm:naivepce}), we have
    $\|\Sigma'-A\Sigma A\|_{A\Sigma A} \leq O(\alpha)$.
    However, $\|\Sigma'-A\Sigma A\|_{A\Sigma A} =
    \|\wh{\Sigma}-\Sigma\|_{\Sigma}$. This gives us the first
    result.
    
    The mean estimation result follows from the accuracy
    guarantees of $\PME$, to which we supply the dataset $Y$.
    Note that $\PME$ is designed to
    provide zCDP \cite{BunS16} and has a polylogarithmic dependence
    on the range parameter $R$ that bounds the magnitude of the
    true mean. The goal is to eliminate that dependence, which is
    only possible under approximate DP. The approximate DP version
    of this that doesn't have any dependence on $R$ can be obtained by using the approximate DP version of \cite{KarwaV18} that utilises
    stability based histograms. With a multiplicative cost in
    the sample complexity in terms of $\polylog(1/\delta)$, this
    would establish the result that we need.
\end{proof}

\begin{algorithm}[H]
\label{alg:dpgce}
\caption{Differentially Private
    $\DPGCE_{\eps, \delta, \alpha, \beta}(X)$}
\KwIn{Samples $X_1,\dots,X_{n} \in \R^d$.
    Parameters $\eps, \delta, \alpha, \beta > 0$.}
\KwOut{Matrix $\wh{\Sigma} \in \R^{d \times d}$.}
\vspace{5pt}

\tcp{Precondition the covariance.}
Set $A \gets \DPP_{\eps,\delta,\beta}(X)$.
\vspace{5pt}

\tcp{Estimate the transformed covariance.}
Set $\Sigma' \gets \NE_{\eps,\delta,\beta}(AX)$.
\vspace{5pt}

\tcp{Revert to the original space.}
Set $\wh{\Sigma} \gets A^{-1}\Sigma'A^{-1}$.
\vspace{5pt}

\Return $\wh{\Sigma}$.
\vspace{5pt}
\end{algorithm}

\subsection{Handling the Degenerate Case}

So far, we have implicitly assumed that all the eigenvalues of $\Sigma$
are strictly greater than $0$.
Here, we talk about the case where some of the eigenvalues of $\Sigma$
could be $0$. Let $k \in [d]$ be the largest number such that the
$k$-th eigenvalue of $\Sigma$ is non-zero. Then we can use
Algorithm~\ref{alg:subspace} to exactly recover the top $k$ subspace, and
project onto that subspace, and run $\DPGCE$ within that subspace.
To elaborate, this can be done in three steps: (1) detecting the non-zero
eigenvalues of $\Sigma$ using Algorithm~\ref{alg:eigenvalues};
(2) finding the true subspace of $\Sigma$ using
Algorithm~\ref{alg:subspace}, which can exactly recover the subspace
at a cost of $\wt{O}(d^2/\eps)$ in the sample complexity; and
(3) running Algorithm~\ref{alg:dpgce} on the points projected on
to that subspace.

\printbibliography

\appendix

\section{Naive Estimator}\label{sec:naive}

In this section, we revisit the naive estimator presented in \cite{KamathLSU19} for well-conditioned gaussians. We present a slightly modified version of the algorithm and its analysis that is tailored to our setting.

\bigskip

\begin{algorithm}[H] \label{alg:naive}
\caption{Naive Private Gaussian Covariance Estimation $\NE_{\eps, \delta, \beta}(X)$}
\KwIn{A set of $n$ samples $X_1, \ldots, X_n$ from an unknown Gaussian.
    Parameters $\eps, \delta, \beta > 0$}
\KwOut{A covariance matrix $M$.}
\vspace{10pt}

Set $\hat{\lambda}_1,\dots,\hat{\lambda}_d \gets \DPEE_{\eps,\delta,\beta}(X)$.\\
Set $\kappa \gets 4 \hat{\lambda}_1$.\\
Let $S \gets \left\{ i \in [n]: \| X_i \|_2^2 \leq O(d \kappa_2 \log (n / \beta)) \right\}$\\
Let
$$
\sigma \gets \Theta \left(\frac{d \kappa_2 \log(\frac{n}{\beta})
    \sqrt{\log(1/\delta)}}{n\eps} \right)
$$

Let $M' \gets \frac{1}{n} \sum_{i \in S} X_i X_i^\top + N$ where $N_{ij} \sim \cN(0,\sigma^2)$ \\

Let $M$ be the Euclidean projection of $M'$ on the PSD cone. \\

\Return $M$
\end{algorithm}

\begin{lem}[Analysis of $\NE$]\label{lem:dp-naive}
For every $\eps, \delta, \beta, \kappa_1, \kappa_2, n$, $\NE_{\eps,\delta,\beta}(X)$ satisfies $(\eps,\delta)$-DP, and if $X_1,\dots,X_n$ are sampled i.i.d.\ from $\cN(0,\Sigma)$ for $\kappa_1 \id \preceq \Sigma \preceq \kappa_2 \id$ and satisfy \eqref{eq:cov-cond1}--\eqref{eq:cov-cond3}, then with probability at least $1-O(\beta)$, it outputs $M$ so that:
\begin{enumerate}
    \item $\left\|\Sigma - M\right\|_{\Sigma} \le O \left( \frac{\kappa_2 d^2 \log (n/\beta) \log(1/\beta) \sqrt{\log(1/\delta)}}{\kappa_1n \eps} + \sqrt{\frac{d^2 + \log(1/\beta)}{n}} \right)$.
    \item $\left\|\Sigma - M\right\|_2 \le O \left( \kappa_2 \sqrt{\frac{d + \log(1/\beta)}{n}} + \frac{\kappa_2 d^{3/2} \log (n/\beta) \log(1/\beta) \sqrt{\log(1/\delta)}}{n \eps} \right)$.
\end{enumerate}
\end{lem}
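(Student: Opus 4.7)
The plan is to split the argument into privacy (a routine composition of $\DPEE$ and the Gaussian mechanism), an accuracy setup step that shows the truncation is vacuous with high probability, and an error decomposition that separately bounds sampling error and privacy noise.

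For privacy, $\DPEE$ is $(\eps,\delta)$-DP by Theorem~\ref{thm:eigenvalues}. Because every $X_i$ in the sum satisfies $\|X_i X_i^\top\|_F \leq \|X_i\|_2^2 \leq O(d\kappa \log(n/\beta))$, the $\ell_2$-sensitivity of $\frac{1}{n}\sum_{i \in S} X_i X_i^\top$ (viewing the symmetric matrix as a vector in $\R^{d^2}$) is $O(d\kappa \log(n/\beta)/n)$: replacing one sample changes the sum by at most two such rank-one terms. With $\kappa = 4\hat\lambda_1 = O(\kappa_2)$, the chosen $\sigma$ yields $(\eps,\delta)$-DP by Lemma~\ref{lem:gaussiandp}. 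The final projection onto the PSD cone is post-processing (Lemma~\ref{lem:postprocessing}), and composition (Lemma~\ref{lem:dpcomp}) gives the overall guarantee.

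For accuracy, I would condition on the regularity events of Fact~\ref{fact:gaussian-facts} and the event of Theorem~\ref{thm:eigenvalues} that $\hat\lambda_1 = \Theta(\lambda_1(\Sigma))$, which together hold with probability $1 - O(\beta)$. Lemma~\ref{lem:rotations-cov-conds} applied with $M = \Sigma$ then guarantees $\|X_i\|_2^2 \leq O(\kappa_2 d \log(n/\beta))$ for every $i$, so no sample is dropped, $S = [n]$, and $M' = \Sigmahat + N$ where $\Sigmahat := \frac{1}{n}\sum_{i=1}^n X_i X_i^\top$. A triangle-inequality decomposition gives
\[
\|\Sigma - M'\|_{\star} \leq \|\Sigma - \Sigmahat\|_{\star} + \|N\|_{\star}
\]
in both $\|\cdot\|_\Sigma$ and $\|\cdot\|_2$. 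The sample-error term is read off directly from Lemma~\ref{lem:rotations-cov-conds}. The noise term, being a scaled $\GUE(\sigma^2)$ matrix, satisfies $\|N\|_2 \leq O(\sigma\sqrt d)$ by Theorem~\ref{thm:GUE} and $\|N\|_F \leq O(\sigma d)$ by a $\chi^2$ tail bound (Fact~\ref{fact:chi-squared}) on $\sum_{i \leq j} N_{ij}^2$; converting to the Mahalanobis norm via $\|N\|_\Sigma \leq \|N\|_F / \lambda_d(\Sigma) \leq \|N\|_F / \kappa_1$ and substituting the algorithm's choice of $\sigma$ produces the two claimed error terms. Finally, the PSD projection is $1$-Lipschitz in Frobenius and spectral norms because $\Sigma$ itself lies in the convex PSD cone, which immediately gives the spectral bound and yields the $\|\cdot\|_\Sigma$ bound up to the condition-number factor $\kappa_2/\kappa_1$ that is already present in the first term of the stated inequality.

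The main obstacle is the interplay between the data-dependent truncation threshold $\kappa = 4\hat\lambda_1$ and the regularity events: one must simultaneously ensure $\kappa$ is large enough that no legitimate sample is clipped (via the lower end of $\DPEE$'s multiplicative guarantee, combined with \eqref{eq:cov-cond1}) and bounded by $O(\kappa_2)$ so that the sensitivity, and hence the noise, matches the stated $\kappa_2$-dependence (via the upper end). Once this bookkeeping is handled, the rest of the proof is the two-term decomposition sketched above, with every constituent bound quoted from an earlier lemma or fact.
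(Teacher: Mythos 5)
Your overall architecture matches the paper's: privacy via the truncation-induced sensitivity bound and the Gaussian mechanism, and accuracy by conditioning on \eqref{eq:cov-cond1}--\eqref{eq:cov-cond3} so that $S=[n]$, followed by a two-term decomposition into sampling error (read off from Lemma~\ref{lem:rotations-cov-conds}) and privacy noise (spectral norm via Theorem~\ref{thm:GUE}, Frobenius norm via Fact~\ref{fact:chi-squared}, and the conversion $\|N\|_\Sigma \le \kappa_1^{-1}\|N\|_F$). The one place where your argument does not recover the stated bound is the treatment of the PSD projection in the Mahalanobis norm. Frobenius nonexpansiveness gives $\|M-\Sigma\|_F \le \|M'-\Sigma\|_F$, but converting the left side to $\|\cdot\|_\Sigma$ costs a factor $\kappa_1^{-1}$, while the sampling contribution $\|\Sigmahat-\Sigma\|_F$ on the right side is only controlled through $\|\Sigmahat-\Sigma\|_\Sigma$ at a cost of $\kappa_2$. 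The net effect is that the second term of bound (1) comes out as $\tfrac{\kappa_2}{\kappa_1}\sqrt{(d^2+\log(1/\beta))/n}$ rather than $\sqrt{(d^2+\log(1/\beta))/n}$; in the stated lemma the condition-number factor appears only on the noise term, not on the sampling term, so your remark that the factor ``is already present in the first term'' does not rescue the second term.

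The paper avoids this by bounding the projection error by the noise alone: since $M'-N=\tfrac{1}{n}\sum_{i\in S}X_iX_i^\top\succeq 0$, Weyl's inequality gives $\lambda_{\min}(M')\ge-\|N\|_2$, so the projection onto the PSD cone only zeroes out eigenvalues of magnitude at most $\|N\|_2$; hence $\|M-M'\|_2\le\|N\|_2$ and $\|M-M'\|_F\le\sqrt{d}\,\|N\|_2=O(\sigma d)$, which is of the same order as $\|N\|_F$ and is therefore absorbed into the first term after dividing by $\kappa_1$, leaving the sampling term condition-number-free. Substituting this for your nonexpansiveness step (and noting that your ``$1$-Lipschitz in spectral norm'' claim should really be a factor of $2$ when the comparison point lies in the cone, which is harmless inside $O(\cdot)$) makes the proof go through exactly as in the paper.
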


\begin{proof}
We prove the lemma by proving the privacy and accuracy guarantees of Algorithm~\ref{alg:naive}.
We first prove the privacy guarantee.
Given two neighboring data sets $X, X'$ of size $n$ which differ in that one contains $X_i$ and the other contains $X'_i$, the truncated empirical covariance of these two data sets can change in Frobenius norm by at most
\[
    \left\| \frac{1}{n} \left(X_iX_i^\top - X'_i (X'_i)^\top \right) \right\|_F \leq \frac{1}{n} \| X_i \|_2^2 + \frac{1}{n} \| X_i' \|_2^2 \leq O \left( \frac{d \kappa_2 \log (n/\beta) }{n} \right) \; .
\]
\noindent
Thus the privacy guarantee follows immediately from Lemma~\ref{lem:gaussiandp}.

We now prove correctness. First, we have:
\begin{align*}
    \left\|\Sigma - M\right\|_{\Sigma} &\le \left\|M - M'\right\|_{\Sigma} + \left\|M' - \Sigma\right\|_{\Sigma} \\
                                       &\le \|M-M'\|_F\|\Sigma^{-1}\|_2 + \|M'-\Sigma\|_{\Sigma} \\
                                       &\le \sqrt{d}\kappa_1^{-1} \left\|M - M'\right\|_2 + \left\|M' - \Sigma\right\|_{\Sigma} \\
                                       &\overset{(a)}{\le} \sqrt{d}\kappa_1^{-1} \left\|N\right\|_2 + \left\|M' - \Sigma\right\|_{\Sigma} \\
                                       &\le \sqrt{d}\kappa_1^{-1} \left\|N\right\|_2 + \left\|\frac{1}{n} \sum_{i = 1}^n X_i X_i^{\top} - \Sigma\right\|_{\Sigma} + \left\|N\right\|_{\Sigma} \\
                                       &\overset{(b)}{\le} \sqrt{d}\kappa_1^{-1} \left\|N\right\|_2 + \left\|\frac{1}{n} \sum_{i = 1}^n X_i X_i^{\top} - \Sigma\right\|_{\Sigma} + \frac{1}{\kappa_1} \left\|N\right\|_F \\
                                       &\overset{(c)}{\le} O \left( \frac{\kappa_2 d^2 \log (n/\beta) \log(1/\beta)\sqrt{\log(1/\delta)}}{\kappa_1 n \eps} \right) \\
                                       &~~~+ O\left(\sqrt{\frac{d^2 + \log(1/\beta)}{n}}\right) + O \left(\frac{\kappa_2d^{2} \log (n/\beta) \log^{1/2}(1/\beta)\sqrt{\log(1/\delta)}}{n \kappa_1 \eps} \right) \\
                                       &= O \left( \frac{\kappa_2 d^2 \log (n/\beta) \log(1/\beta)\sqrt{\log(1/\delta)}}{\kappa_1 n \eps} + \sqrt{\frac{d^2 + \log(1/\beta)}{n}} \right),
\end{align*}
where $(a)$ holds because $\frac{1}{n} \sum_{i \in S} X_i X_i^\top$ is PSD, and $M$ is the projection of $M' = \frac{1}{n} \sum_{i \in S} X_i X_i^\top + N$ onto the PSD cone, so by Weyl's inequality, the zeroed out eigenvalues have to be at most $\|N\|_2$; $(b)$ is by the inequality $\left\|B^{\frac{1}{2}} A B^{\frac{1}{2}}\right\|_F \le \left\|B\right\|_2 \left\|A\right\|_F$ and the fact that $\Sigma \succeq \kappa_1 \mathbb{I}$; and $(c)$ is due to Facts~\ref{fact:gaussian-facts} and~\ref{fact:chi-squared}.

Additionally, we have:
\begin{align*}
    \left\|\Sigma - M\right\|_2 &\le \left\|\Sigma - M'\right\|_2 + \left\|M' - M\right\|_2 \\
                                &\le \left(\left\|\frac{1}{n} \sum_{i = 1}^n X_i X_i^{\top} - \Sigma\right\|_2 + \left\|N\right\|_2\right) + \left\|N\right\|_2 \\
                                &\overset{(c)}{\le} \left\|\Sigma\right\| \left\|\frac{1}{n} \sum_{i = 1}^n \left(\Sigma^{-\frac{1}{2}} X_i\right) \left(\Sigma^{-\frac{1}{2}} X_i\right)^{\top} - \mathbb{I}\right\|_2 + 2 \left\|N\right\|_2 \\
                                &\overset{(d)}{\le} O \left(\kappa_2 \sqrt{\frac{d + \log(1/\beta)}{n}} \right) + 2 \left\|N\right\|_2 \\
                                &\overset{(e)}{\le} O \left( \kappa_2 \sqrt{\frac{d + \log(1/\beta)}{n}} + \frac{\kappa_2 d^{3/2} \log (n/\beta) \log(1/\beta) \sqrt{\log(1/\delta)}}{n \eps} \right).
\end{align*}
where $(c)$ is by the sub-multiplicative property of the spectral norm, $(d)$ is by Fact~\ref{fact:gaussian-facts} and $(e)$ is by Theorem~\ref{thm:GUE}.
\end{proof}

\begin{cor}\label{coro:naive}
    Suppose $X_1,\dots,X_n$ are sampled i.i.d.\ from $\cN(0,\Sigma)$ for
    $\kappa_1 \id \preceq \Sigma \preceq \kappa_2 \id$ and satisfy
    \eqref{eq:cov-cond1}--\eqref{eq:cov-cond3}. Let $1 \leq k \leq d$ be the largest number, such that
    $\lambda_k(\Sigma) \geq \wb{\gamma}^2\lambda_1(\Sigma)$ for $0 < \wb{\gamma} \leq 1$. If
    $$n \geq O\left(\frac{d^{3/2}\cdot\polylog(1/\beta,1/\delta)}{\eps\wb{\gamma}^2}\right),$$
    then with
    probability at least $1-O(\beta)$, $\NE_{\eps,\delta,\beta,\kappa}(X)$
    outputs $M$ so that for each $1 \leq i \leq k$,
    $\lambda_i(M) \in \left[\tfrac{\lambda_i(\Sigma)}{2},2\lambda_i(\Sigma)\right]$.
\end{cor}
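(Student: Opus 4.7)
The plan is to reduce the corollary to the spectral-norm guarantee of Lemma~\ref{lem:dp-naive} (its second bullet) and then apply Weyl's inequality (Lemma~\ref{lem:weyl}) one eigenvalue at a time. Only the spectral bound is needed, so we avoid tracking $\kappa_1$ (the smallest eigenvalue of $\Sigma$); this matters because the corollary makes no assumption about $\lambda_d(\Sigma)$ beyond what is implicit through $\wb\gamma$ on the top-$k$ block, and the Mahalanobis-norm bound would otherwise force a hopeless dependence on $1/\kappa_1$.

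First, I would pin down the truncation scale $\kappa_2$ used inside $\NE$. Since $\NE$ sets $\kappa \gets 4\hat\lambda_1$ using $\DPEE$, Theorem~\ref{thm:eigenvalues} combined with the hypothesized sample complexity (which dominates the requirement of $\DPEE$) yields $\hat\lambda_1 \in [\lambda_1(\Sigma)/\sqrt{2},\sqrt{2}\,\lambda_1(\Sigma)]$ with probability at least $1-O(\beta)$. One can therefore apply Lemma~\ref{lem:dp-naive} with $\kappa_2 = \Theta(\lambda_1(\Sigma))$, giving
\[
\|\Sigma - M\|_2 \;\leq\; O\!\left(\lambda_1(\Sigma)\sqrt{\frac{d+\log(1/\beta)}{n}} \;+\; \frac{\lambda_1(\Sigma)\, d^{3/2}\,\log(n/\beta)\log(1/\beta)\sqrt{\log(1/\delta)}}{n\,\eps}\right).
\]

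Next I would substitute $n \geq \tilde O(d^{3/2}/(\eps\wb\gamma^2))$ to force both terms below $\wb\gamma^2 \lambda_1(\Sigma)/8$: the second (private-noise) term requires exactly this many samples, while the first (sampling) term requires only $n \gtrsim d/\wb\gamma^4$, which is absorbed into the stated bound through the polylogarithmic factors in the usual parameter regime. Since $k$ is chosen so that $\lambda_k(\Sigma) \geq \wb\gamma^2\lambda_1(\Sigma)$, this yields $\|\Sigma-M\|_2 \leq \lambda_i(\Sigma)/4$ for every $1 \leq i \leq k$. Weyl's inequality applied to $M = \Sigma + (M-\Sigma)$ then gives $|\lambda_i(M)-\lambda_i(\Sigma)| \leq \lambda_i(\Sigma)/4$, hence $\lambda_i(M) \in [\lambda_i(\Sigma)/2,\,2\lambda_i(\Sigma)]$ as required.

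The proof is largely bookkeeping rather than conceptual: the only delicate point is balancing the two error terms in Lemma~\ref{lem:dp-naive} so that each is smaller than the smallest admissible eigenvalue $\wb\gamma^2\lambda_1(\Sigma)$, which explains why $1/\wb\gamma^2$ (the inverse of the relative gap, \emph{not} $1/\wb\gamma^4$) appears in the final sample complexity after combining the two terms. A union bound over (i) the $\DPEE$ event that sets $\kappa_2 = \Theta(\lambda_1(\Sigma))$, (ii) the regularity events \eqref{eq:cov-cond1}--\eqref{eq:cov-cond3} that make Lemma~\ref{lem:dp-naive} applicable, and (iii) the Gaussian-mechanism noise bound inside that lemma, then yields the stated $1-O(\beta)$ success probability.
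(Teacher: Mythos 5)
Your overall route is the same as the paper's: control the empirical covariance, control the additive Gaussian noise matrix $N$, and convert to per-eigenvalue guarantees via Weyl's inequality (Lemma~\ref{lem:weyl}); the identification $\kappa_2=\Theta(\lambda_1(\Sigma))$ via $\DPEE$ and the union bound are also consistent with what the paper does implicitly.

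There is, however, one quantitative slip in how you handle the sampling term, and it is exactly the point you wave away. By funneling everything through the combined spectral bound of Lemma~\ref{lem:dp-naive}, you must force $\kappa_2\sqrt{(d+\log(1/\beta))/n}\le \wb{\gamma}^2\lambda_1(\Sigma)/8$, which needs $n\gtrsim d/\wb{\gamma}^4$. This is \emph{not} absorbed by the stated $n\ge \tilde O\bigl(d^{3/2}/(\eps\wb{\gamma}^2)\bigr)$ in general: comparing the two requires $\wb{\gamma}^2\ge \eps/\sqrt{d}$, which the corollary's hypotheses (arbitrary $0<\wb{\gamma}\le 1$) do not guarantee, and polylogarithmic factors cannot close a polynomial mismatch in $\wb{\gamma}$. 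The paper avoids this by treating the two error sources asymmetrically: for the empirical covariance it invokes the two-sided multiplicative sandwich \eqref{eq:cov-cond2} (via Lemma~\ref{lem:rotations-cov-conds}), which gives \emph{every} eigenvalue of $\tfrac1n\sum_i X_iX_i^\top$ within a factor $\sqrt2$ of $\lambda_i(\Sigma)$ using only $n\gtrsim d$ samples, independent of $\wb{\gamma}$; the additive Weyl argument is then applied only to $N$, whose spectral norm is $\tilde O(\kappa_2\wb{\gamma}^2)$ under the stated sample complexity. Your argument is correct in the regime where the corollary is actually used (Algorithm~\ref{alg:dpp} takes $\wb{\gamma}$ to be an absolute constant), but as a proof of the corollary as stated you should split the empirical part off and bound it multiplicatively rather than additively.
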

\begin{proof}
    By Lemma~\ref{lem:rotations-cov-conds} and our sample complexity, each
    eigenvalue of $\Sigma$ is estimated correctly by
    the empirical covariance up to a factor of $\sqrt{2}$.
    Now, by Lemma~\ref{thm:GUE} and our sample
    complexity, $\|N\|_2 \in \tilde{O}(\kappa_2\wb{\gamma}^2)$.
    By applying Weyl's inequality (Lemma~\ref{lem:weyl}) for each
    eigenvalue $1 \leq i \leq k$, the claim follows.
    Note, that the eigenvalues corresponding to $i > k$
    may not be estimated accurately, but because $\|N\|_2$
    is bounded, the corresponding estimates in $Z$
    cannot be more than $2\lambda_k(\Sigma)$ by Weyl's
    inequality.
\end{proof}

The following is an immediate consequence of Lemma~\ref{lem:dp-naive}.
\begin{thm} \label{thm:naivepce}
For every $\eps, \delta, \alpha, \beta, > 0, \kappa_2\geq\kappa_1>0$, the algorithm $\NE_{\eps,\delta,\beta}$ is $(\eps,\delta)$-DP, and when given
\[
    n \geq O \left( \frac{d^2 + \log(1/\beta)}{\alpha^2} + \frac{\kappa_2 d^{2} \log (n/\beta) \log(1/\beta)\sqrt{\log(1/\delta)}}{\kappa_1\alpha \eps} \right),
\]
samples from $\cN(0,\Sigma)$ satisfying $\kappa_1\id \preceq \Sigma \preceq \kappa_2\id$, with probability at least $1- O(\beta)$, it returns $M$ such that $\| \Sigma - M \|_{\Sigma} \leq O(\alpha).$
\end{thm}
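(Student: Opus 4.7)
The plan is to derive Theorem~\ref{thm:naivepce} as an immediate corollary of Lemma~\ref{lem:dp-naive}, which already does all the heavy lifting. The privacy guarantee is inherited verbatim from that lemma. For the accuracy statement, Lemma~\ref{lem:dp-naive} decomposes the error as
\[
\|\Sigma - M\|_\Sigma \;\le\; O\!\left(\frac{\kappa_2 d^2 \log(n/\beta)\log(1/\beta)\sqrt{\log(1/\delta)}}{\kappa_1 n \eps}\right) \;+\; O\!\left(\sqrt{\frac{d^2 + \log(1/\beta)}{n}}\right),
\]
where the first summand captures the cost of adding Gaussian noise for privacy and the second captures non-private empirical concentration. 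To force the whole expression below $O(\alpha)$ it suffices to force each summand below $O(\alpha)$ separately.

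Solving the second summand for $n$ gives the requirement $n \ge \Omega((d^2 + \log(1/\beta))/\alpha^2)$, matching the first term of the claimed sample complexity. Solving the first summand gives the (implicit) requirement $n \ge \Omega(\kappa_2 d^2 \log(n/\beta)\log(1/\beta)\sqrt{\log(1/\delta)}/(\kappa_1 \alpha \eps))$, matching the second. Taking $n$ to be the sum of these two lower bounds (which upper-bounds their maximum up to constants) yields exactly the stated sample complexity, and the implicit $\log(n/\beta)$ on the right is easily absorbed by inflating $n$ by a further logarithmic factor, which is already hidden inside the $\log(n/\beta)$ factor in the bound itself.

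The only non-bookkeeping item to check is the precondition of Lemma~\ref{lem:dp-naive}, namely that the sample $X_1,\dots,X_n$ satisfies the deterministic regularity conditions \eqref{eq:cov-cond1}--\eqref{eq:cov-cond3}. I would discharge this directly via Fact~\ref{fact:gaussian-facts}, which gives that i.i.d.\ samples from $\cN(0,\Sigma)$ with $\kappa_1\id \preceq \Sigma \preceq \kappa_2\id$ satisfy these conditions except with probability $O(\beta)$, once $n = \Omega(d^2)$. This threshold is dominated by the $d^2/\alpha^2$ term we already require, so it does not enlarge the sample complexity. A union bound over this event and the conclusion of Lemma~\ref{lem:dp-naive} preserves the overall $1 - O(\beta)$ success probability.

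There is no serious obstacle: all genuine work (the sensitivity calculation for the Gaussian mechanism, the projection onto the PSD cone, and the spectral bound on the noise matrix via Theorem~\ref{thm:GUE}) is already carried out inside Lemma~\ref{lem:dp-naive}. The proof of Theorem~\ref{thm:naivepce} is therefore essentially algebraic: invert the two terms of the error bound in $n$, set them equal to $\alpha$, and add the resulting expressions.
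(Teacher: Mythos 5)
Your proposal is correct and matches the paper's own treatment: the paper states Theorem~\ref{thm:naivepce} as an immediate consequence of Lemma~\ref{lem:dp-naive}, obtained exactly by inverting the two terms of the error bound in $n$ and discharging the regularity conditions \eqref{eq:cov-cond1}--\eqref{eq:cov-cond3} via Fact~\ref{fact:gaussian-facts}. Nothing is missing.
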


\end{document}